\newtheorem{assumption}{Assumption}
\def\iid{\textrm{i.i.d.~}} 
\def\wp{\textit{w.p.~}}
\newcommand{\rad}{\operatorname{Rad}_n}
\newcommand{\erad}{\widehat{\operatorname{Rad}}}
\newcommand{\EE}{\operatorname{\mathbb{E}}}
\newcommand{\NN}{\mathbb{N}}
\newcommand{\PP}{\mathbb{P}}
\newcommand{\RR}{\mathbb{R}}
\renewcommand{\SS}{\mathbb{S}}
\DeclareMathOperator*{\argmin}{argmin}
\renewcommand{\leq}{\leqslant}
\renewcommand{\geq}{\geqslant}
\newcommand{\cD}{\mathcal{D}}
\newcommand{\cF}{\mathcal{F}}
\newcommand{\cG}{\mathcal{G}}
\newcommand{\cH}{\mathcal{H}}
\newcommand{\cN}{\mathcal{N}}
\newcommand{\cO}{\mathcal{O}}
\newcommand{\cP}{\mathcal{P}}
\newcommand{\half}{\frac{1}{2}}
\newcommand{\tr}{{\rm tr}}
\newcommand{\abs}[1]{\left\vert#1\right\vert}
\newcommand{\norm}[2]{\left\Vert#1\right\Vert_{#2}}
\newcommand{\ep}{{\epsilon}}
\newcommand{\zh}[1]{\textsf{\textcolor{teal}{[zh: #1]}}}
\newtheorem{theorem}{Theorem}
\newtheorem{corollary}{Corollary}
\newtheorem{lemma}{Lemma}
\theoremstyle{definition}
\newtheorem{definition}{Definition}
\newtheorem{remark}{Remark}
\title{Learning Hierarchical Polynomials with\\ Three-Layer Neural Networks}
\author{
  Zihao Wang\\
  Peking University\\
  \texttt{zihaowang@stu.pku.edu.cn}
  \and
  Eshaan Nichani\\
  Princeton University\\
  \texttt{eshnich@princeton.edu}
  \and
  Jason D. Lee\\
  Princeton University\\
  \texttt{jasonlee@princeton.edu}
}
\date{\today\vspace{-1em}}
\begin{document}

\maketitle 

\begin{abstract}
We study the problem of learning hierarchical polynomials over the standard Gaussian distribution with three-layer neural networks. We specifically consider target functions of the form $h = g \circ p$ where $p : \mathbb{R}^d \rightarrow \mathbb{R}$ is a degree $k$ polynomial and $g: \mathbb{R} \rightarrow \mathbb{R}$ is a degree $q$ polynomial. This function class generalizes the single-index model, which corresponds to $k=1$, and is a natural class of functions possessing an underlying hierarchical structure. Our main result shows that for a large subclass of degree $k$ polynomials $p$, a three-layer neural network trained via layerwise gradient descent on the square loss learns the target $h$ up to vanishing test error in $\widetilde \cO(d^k)$ samples and polynomial time. This is a strict improvement over kernel methods, which require $\widetilde \Theta(d^{kq})$ samples, as well as existing guarantees for two-layer networks, which require the target function to be low-rank. Our result also generalizes prior works on three-layer neural networks, which were restricted to the case of $p$ being a quadratic. When $p$ is indeed a quadratic, we achieve the information-theoretically optimal sample complexity $\widetilde \cO(d^2)$, which is an improvement over prior work~\citep{nichani2023provable} requiring a sample size of $\widetilde\Theta(d^4)$. Our proof proceeds by showing that during the initial stage of training the network performs feature learning to recover the feature $p$ with $\widetilde \cO(d^k)$ samples. This work demonstrates the ability of three-layer neural networks to learn complex features and as a result, learn a broad class of hierarchical functions.

\end{abstract}

\section{Introduction}

Deep neural networks have demonstrated impressive empirical successes across a wide range of domains. This improved accuracy and the effectiveness of the modern pretraining and finetuning paradigm is often attributed to the ability of neural networks to efficiently learn input features from data. On ``real-world" learning problems posited to be hierarchical in nature, conventional wisdom is that neural networks first learn salient input features to more efficiently learn hierarchical functions depending on these features. This feature learning capability is hypothesized to be a key advantage of neural networks over fixed-feature approaches such as kernel methods~\citep{wei2020regularization,allenzhu2020resnet,bai2020linearization}.

Recent theoretical work has sought to formalize this notion of a hierarchical function and understand the process by which neural networks learn features. These works specifically study which classes of hierarchical functions can be efficiently learned via gradient descent on a neural network, with a sample complexity improvement over kernel methods or shallower networks that cannot utilize the hierarchical structure. The most common such example is the \emph{multi-index model}, in which the target $f^*$ depends solely on the projection of the data onto a low-rank subspace, i.e $f^*(x) = g(Ux)$ for a projection matrix $U \in \RR^{r \times d}$ and unknown link function $g: \RR^r \rightarrow \RR$. Here, a hierarchical learning process simply needs to extract the hidden subspace $U$ and learn the $r$-dimensional function $g$. Prior work~\citep{abbe2022mergedstaircase, abbe2023sgd, damian2022neural, bietti2022learning} shows that two-layer neural networks trained via gradient descent indeed learn the low-dimensional feature $Ux$, and thus learn multi-index models with an improved sample complexity over kernel methods. 

Beyond the multi-index model, there is growing work on the ability of deeper neural networks to learn more general classes of hierarchical functions. \citep{safran2022optimization, ren2023depth, nichani2023provable} show that three-layer networks trained with variants of gradient descent can learn hierarchical targets of the form $h = g\circ p$, where $p$ is a simple nonlinear feature such as the norm $p(x) = \|x\|_2$ or a quadratic $p(x) = x^{\top}Ax$. However, it remains an open question to understand whether neural networks can more efficiently learn a broader class of hierarchical functions.

\subsection{Our Results}

In this work, we study the problem of learning \emph{hierarchical polynomials} over the standard $d$-dimensional Gaussian distribution. Specifically, we consider learning the target function $h: \mathbb{R}^d \rightarrow \mathbb{R}$, where $h$ is equipped with the hierarchical structure $h = g \circ p$ for polynomials $g : \RR \rightarrow \RR$ and $p: \RR^d \rightarrow \RR$ of degree $q$ and $k$ respectively. This class of functions is a generalization of the single-index model, which corresponds to $k=1$.

Our main result, \Cref{thm:main_thm}, is that for a large class of degree $k$ polynomials $p$, a three-layer neural network trained via layer-wise gradient descent can efficiently learn the hierarchical polynomial $h = g \circ p$ in $\widetilde \cO(d^k)$ samples. Crucially, this sample complexity is a significant improvement over learning $h$ via a kernel method, which requires $\widetilde \Omega(d^{qk})$ samples~\citep{ghorbani2021linearized}. Our high level insight is that the sample complexity of learning $g\circ p$ is the same as that of learning the feature $p$, as $p$ can be extracted from the low degree terms of $g\circ p$. Since neural networks learn in increasing complexity~\citep{abbe2022mergedstaircase,abbe2023sgd,xu2020frequency}, such learning process is easily implemented by GD on a three-layer neural network. We verify this insight both theoretically via our layerwise training procedure (\Cref{alg:layerwise}) and empirically via simulations in Section \ref{app:experiments}.


Our proof proceeds by showing that during the initial stage of training the network implements kernel regression in $d$-dimensions to learn the feature $p$ \emph{even though it only sees $g \circ p$,} and in the next stage implements 1D kernel regression to fit the link function $g$. This feature learning during the initial stage relies on showing that the low-frequency component of the target function $g\circ p$ is approximately proportional to the feature $p$, by the ``approximate Stein's Lemma" stated in \Cref{lem:approximate_stein}, which is our main technical contribution. This demonstrates that three-layer networks trained with gradient descent, unlike kernel methods, do allow for adaptivity and thus the ability to learn features.


\subsection{Related Works}

\paragraph{Kernel Methods.} Initial learning guarantees for neural networks relied on the Neural Tangent Kernel (NTK) approach, which couples GD dynamics to those of the network's linearization about the initialization~\citep{jacot2018neural, soltanolkotabi2018theoretical, du2018gradient, chizat2019lazy}. However, the NTK theory fails to capture the success of neural networks in practice~\citep{arora2019exact, lee2020finite,e2020a}. Furthermore, \citet{ghorbani2021linearized} presents a lower bound showing that for data uniform on the sphere, the NTK requires $\widetilde\Omega(d^k)$ samples to learn any degree $k$ polynomial in $d$ dimensions. Crucially, networks in the kernel regime cannot learn features~\citep{yang2021tensor}, and hence cannot adapt to low-dimensional structure. An important question is thus to understand how neural networks are able to adapt to underlying structures in the target function and learn salient features, which allow for improved generalization over kernel methods.

\paragraph{Two-layer Neural Networks.} Recent work has studied the ability of two-layer neural networks to learn features and as a consequence learn hierarchical functions with a sample complexity improvement over kernel methods. For isotropic data, two-layer neural networks are capable of efficiently learning multi-index models, i.e. functions of the form $f^*(x) = g(Ux)$. Specifically, for Gaussian covariates, \citet{damian2022neural, abbe2023sgd, dandi2023learning} show that two-layer neural networks learn low-rank polynomials with a sample complexity whose dimension dependence does not scale with the degree of the polynomial, and \citet{bietti2022learning, ba2022high} show two-layer networks efficiently learn single-index models. For data uniform on the hypercube, \citet{abbe2022mergedstaircase} shows learnability of a special class of sparse boolean functions in $\cO(d)$ steps of SGD. These prior works rely on layerwise training procedures which learn the relevant subspace in the first stage, and fit the link function $g$ in the second stage. Relatedly, fully connected networks trained via gradient descent on standard image classification tasks have been shown to learn such relevant low-rank features~\citep{lee2007sparse,radhakrishnan2022feature}.


\paragraph{Three-layer Neural Networks.} Prior work has also shown that three-layer neural networks can learn certain classes of hierarchical functions. \citet{chen2020towards} shows that three-layer networks can more efficiently learn low-rank polynomials by decomposing the function $z^p$ as $(z^{p/2})^2$. \citet{allen2019learning} uses a modified version of GD to improperly learn a class of three-layer networks via a second-order variant of the NTK. \citet{safran2022optimization} shows that certain ball indicator functions of the form $\mathbf{1}_{\|x\| \geq \lambda}$ are efficiently learnable via GD on a three-layer network. They accompany this with a lower bound showing that such targets are not even approximatable by polynomially-sized two-layer networks. \citet{ren2023depth} shows that a multi-layer mean-field network can learn the target $\mathrm{ReLU}(1 - \|x\|)$. Our work considers a broader class of hierarchical functions and features.

Our work is most similar to \citet{allen2019can, allen2020backward, nichani2023provable}. \citet{allen2019can} considers learning target functions of the form $p + \alpha g \circ p$ with a three-layer residual network similar our architecture \eqref{eq:three-layer-nn}. They consider a similar hierarchical learning procedure where the first layer learns $p$ while the second learns $g$. However \citet{allen2019can} can only learn the target up to $O(\alpha^4)$ error, while our analysis shows learnability of targets of the form $g \circ p$, corresponding to $\alpha = \Theta(1)$, up to $o_d(1)$ error. \citet{allen2020backward} shows that a deeper network with quadratic activations learns a similar class of hierarchical functions up to arbitrarily small error, but crucially requires $\alpha$ to be $o_d(1)$. We remark that our results do require Gaussianity of the input distribution, while \citet{allen2019can, allen2020backward} hold for a more general class of data distributions. \citet{nichani2023provable} shows that a three-layer network trained with layerwise GD, where the first stage consists of a single gradient step, efficiently learns the hierarchical function $g \circ p$ when $p$ is a quadratic, with width and sample complexity $\widetilde\Theta(d^4)$. Our \Cref{thm:main_thm} extends this result to the case where $p$ is a degree $k$ polynomial. Furthermore, when $p$ is quadratic, \Cref{cor:quad} shows that our algorithm only requires a width and sample complexity of $\widetilde\Theta(d^2)$, which matches the information-theoretic lower bound. Our sample complexity improvement for quadratic features relies on showing that running gradient descent for multiple steps can more efficiently extract the feature $p$ during the feature learning stage. Furthermore, the extension to degree $k$ polynomial features relies on a generalization of the approximate Stein's lemma, a key technical innovation of our work. 




\subsection{Notations}

We let $\sum_{i_j}$ denote the sum over increasing sequences $(i_1, \dots i_z)$, i.e $\sum_{i_1<i_2<\dots<i_z}$. 
We use $X \lesssim Y$ to denote $X\leq CY$ for some absolute positive constant $C$ and $X\gtrsim Y$ is defined analogously.
We use $\operatorname{poly}(z_1,\dots,z_p)$ to denote a quantity that depends on $z_1,\dots,z_p$ polynomially.
We also use the standard big-O notations: $\Theta(\cdot)$, $\cO(\cdot)$ and $\Omega(\cdot)$ to only hide absolute positive constants. In addition, we use $\widetilde{\cO}$ and $\widetilde{\Omega}$ to hide higher-order terms, e.g., $\cO((\log d)(\log \log d)^2)=\widetilde{\cO}(\log d)$ and $\cO(d\log d) = \widetilde{\cO}(d)$. Let  $a\wedge b=\min(a,b)$, $[k]=\{1,2,\dots,k\}$ for $k\in \NN$.
For a vector $v$, denote by $\|v\|_{p}:=(\sum_i |v_i|^p)^{1/p}$ the $\ell^p$ norm. When $p=2$, we omit the subscript for simplicity.
For a matrix $A$, let $\|A\|$ and $\|A\|_F$ be the spectral norm and Frobenius norm, respectively.
We use $\lambda_{\operatorname{max}}(\cdot)$ and $\lambda_{\operatorname{min}}(\cdot)$ to denote the maximal and the minimal eigenvalue of a real symmetric matrix. For a vector $w \in \RR^R$ and $k\leq R$, we use $w_{\leq k}\in \RR^k$ to denote the first $k$ coordinates of $w$ and $w_{>k}$ to denote the last $R-k$ coordinates of $w$. That is to say, we can write $w=(w_{\leq k},w_{>k})$.
\section{Preliminaries}

\subsection{Problem Setup}

Our aim is to learn the target function $h:\RR^d\rightarrow\RR$, where $\RR^d$ is the input domain equipped with the standard normal distribution $\gamma := \cN(0,I_d)$.
We assume our target has a compositional structure, that is to say, $h=g\circ p$ for some $g:\RR\rightarrow\RR$ and $p:\RR^d\rightarrow\RR$.

\begin{assumption}
    $p$ is a degree $k$ polynomial with $k\geq 2$, and $g$ is a degree $q$ polynomial.
\end{assumption}
The degree of $h$ is at most $r:=kq$. We treat $k,q$ as absolute constants, and hide constants that depend only on $k,q$ using big-O notation. We require the following mild regularity condition on the coefficients of $g$.
\begin{assumption}\label{assume:coeffs}
    Denote
$
g(z)=\sum_{0\leq i\leq q} g_iz^i
$. We assume $\sup_i\abs{g_i} = \cO(1)$.
\end{assumption}

\begin{figure}
    \centering
    \includegraphics[width=0.5\textwidth]{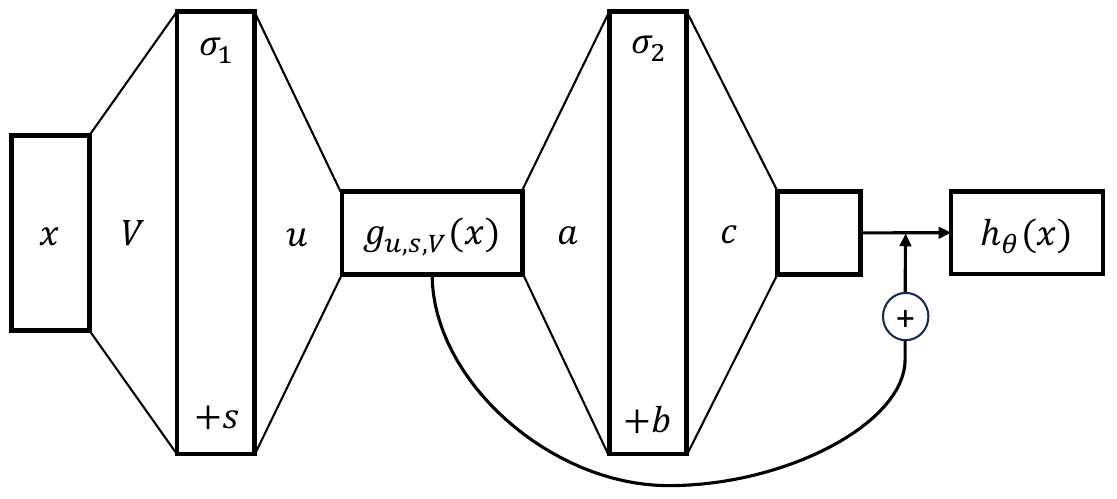}
    \caption{Three-layer network with bottleneck layer and residual link, defined in \eqref{eq:three-layer-nn}.}
    \label{fig:arch}
\end{figure}

\paragraph{Three Layer Network.}


Our learner is a three-layer neural network with a bottleneck layer and residual link. Let $m_1,m_2$ be the two hidden layer widths, and $\sigma_1(\cdot),\sigma_2(\cdot)$ be two activation functions. The network, denoted by $h_\theta$, is formally defined as follows:
\begin{equation}\label{eq:three-layer-nn}
\begin{aligned}
h_{\theta}(x):=g_{u,s,V}(x)&+c^{\top}\sigma_2(ag_{u,s,V}(x)+b)=
g_{u,s,V}(x)+\sum_{i=1}^{m_2} c_i\sigma_2(a_ig_{u,s,V}(x)+b_i) \\
g_{u,s,V}(x)&:=u^{\top}\sigma_1(Vx+s)
\end{aligned}
\end{equation}
where $a,b,c\in\RR^{m_2}$, $u,s\in \RR^{m_1}$ and $V\in \RR^{m_1\times d}$. Here, the intermediate embedding $g_{u,s, V}$ is a two-layer neural network with input $x$ and width $m_1$,  while the mapping $g_{u,s, V} \mapsto h_\theta$ is another two-layer neural network with input dimension $1$, width $m_2$, and a residual connection. We let $\theta:= (a, b, c, u,s, V)$ be an aggregation of all the parameters. We remark that the bottleneck layer and residual connection are similar to those in the ResNet architecture~\citep{he2016deep}, as well as architectures considered in prior theoretical work~\citep{ren2023depth, allen2019can, allen2020backward}. See \Cref{fig:arch} for a diagram of the network architecture.

The parameters $\theta^{(0)}:=(a^{(0)},b^{(0)},c^{(0)},u^{(0)},s^{(0)},V^{(0)})$ are initialized as $c^{(0)}=0$, $u^{(0)}=0$, $a_i^{(0)}\sim_{iid} \operatorname{Unif}\{-1,1\}$, $s_i^{(0)}\sim_{iid} \cN(0,1/2)$, and $v_i^{(0)} \sim_{iid} \operatorname{Unif}\{\mathbb{S}^{d-1}(1/\sqrt{2})\}$, the sphere of radius $1/\sqrt{2}$, where $\{v_i^{(0)}\}_{i \in [m_1]}$ are the rows of $V^{(0)}$.
Furthermore, we will assume $b^{(0)}_i\sim_{iid} \tau_b$, where $\tau_b$ is a distribution with density $\mu_b(\cdot)$. We make the following assumption on $\mu_b$:
\begin{assumption}
    $\mu_b(t) \gtrsim (\abs{t}+1)^{-p}$ for an absolute constant $p >0$, and $\EE_{b \sim \mu_b}[ b^8] \lesssim 1$.
\end{assumption}
\begin{remark}
For example, we can choose $\tau_b$ to be the Student's $t$-distribution with a degree of freedom larger than 8.
Student's $t$-distribution has the probability density function (PDF) given by
$$
\mu_\nu(t)=\frac{\Gamma\left(\frac{\nu+1}{2}\right)}{\sqrt{\nu \pi} \Gamma\left(\frac{\nu}{2}\right)}\left(1+\frac{t^2}{\nu}\right)^{-(\nu+1) / 2}
$$
where $\nu$ is the number of degrees of freedom and $\Gamma$ is the gamma function. 
\end{remark}

\paragraph{Training Algorithm.}
The network \eqref{eq:three-layer-nn} is trained via layer-wise gradient descent with sample splitting.
We generate two independent datasets $\cD_1,\cD_2$, each of which has $n$ independent samples $(x, h(x))$ with $x \sim \gamma$. We denote $\hat{L}_{\cD_i}(\theta)$ as the empirical square loss on $\cD_i$, i.e
\[
\hat{L}_{\cD_i}(\theta):=\frac{1}{n}\sum_{x\in \cD_i} \left(h_{\theta}(x)-h(x)\right)^2.
\]
In our training algorithm, we first train $u$ via gradient descent for $T_1$ steps on the empirical loss $\hat{L}_{\cD_1}(\theta)$, then train $c$ via gradient descent for $T_2$ steps on $\hat{L}_{\cD_2}(\theta)$. In the whole training process, $a,b,s,V$ are held fixed. The pseudocode for this training procedure is presented in Algorithm \ref{alg:layerwise}. 


\begin{algorithm}
\caption{Layer-wise Training Algorithm}\label{alg:layerwise}
\textbf{Input:} Initialization $\theta^{(0)}$, learning rate $\eta_1,\eta_2$, weight decay $\xi_1,\xi_2$, time $T_1,T_2$.
\begin{algorithmic}
    \For{$t=1,\dots,T_1$}
    \State{$u^{(t)}\leftarrow u^{(t-1)}-\eta_1(\nabla_{u}\hat{L}_{\cD_1}(\theta^{(t-1)})+\xi_1 u^{(t-1)})$}
    \State{$\theta^{(t)}\leftarrow (a^{(0)},b^{(0)},c^{(0)},u^{(t)},s^{(0)},V^{(0)})$}
    \EndFor
    \For{$t=T_1+1,\dots,T_1+T_2$}
\State{$c^{(t)}\leftarrow c^{(t-1)}-\eta_2(\nabla_{c}\hat{L}_{\cD_2}(\theta^{(t-1)})+\xi_2 c^{(t-1)})$}
\State{$\theta^{(t)}\leftarrow (a^{(0)},b^{(0)},c^{(t)},u^{(T_1)},s^{(0)},V^{(0)})$}
    \EndFor
    \State{$\hat{\theta}\leftarrow \theta^{(T_1+T_2)}$}
\end{algorithmic}
\textbf{Output:} $\hat{\theta}$.
\end{algorithm}

\subsection{Hermite Polynomials}
Our main results depend on the definition of the Hermite polynomials. We briefly introduce key properties of the Hermite polynomials here, and defer further details to Appendix \ref{app: hermite polynomial}.
\begin{definition}[1D Hermite polynomials] The $k$-th normalized probabilist's Hermite polynomial, $h_k: \RR \rightarrow \RR$, is the degree $k$ polynomial defined as
\begin{align}
    h_k(x) = \frac{(-1)^k}{\sqrt{k!}}\frac{\frac{d^k\mu_\beta}{dx^k}(x)}{\mu_\beta(x)},
\end{align}
where $\mu_\beta(x) = \exp(-x^2/2)/\sqrt{2\pi}$ is the density of the standard Gaussian.
\end{definition}
The first such Hermite polynomials are $$
h_0(z)=1, h_1(z)=z, h_2(z)=\frac{z^2-1}{\sqrt{2}}, h_3(z)=\frac{z^3-3 z}{\sqrt{6}}, \cdots
$$
Denote $\beta=\mathcal{N}(0,1)$ to be the standard Gaussian in 1D. A key fact is that the normalized Hermite polynomials form an orthonormal basis of $L^2(\beta)$; that is $\EE_{x \sim \beta}[h_j(x)h_k(x)] = \delta_{jk}$.

The multidimensional analogs of the Hermite polynomials are \emph{Hermite tensors}:
\begin{definition}[Hermite tensors] The $k$-th Hermite tensor in dimension $d$, $He_k: \RR^d \rightarrow (\RR^{d})^{\otimes k}$, is defined as
\begin{align*}
    He_k(x) := \frac{(-1)^k}{\sqrt{k!}}\frac{\nabla^k \mu_\gamma(x)}{\mu_\gamma(x)},
\end{align*}
where $\mu_\gamma(x) = \exp(-\frac12\|x\|^2)/(2\pi)^{d/2}$ is the density of the $d$-dimensional standard Gaussian.
\end{definition}
The Hermite tensors form an orthonormal basis of $L^2(\gamma)$; that is, for any $f \in L^2(\gamma)$, one can write the Hermite expansion
\begin{align*}
    f(x) = \sum_{k \geq 0}\langle C_k(f), He_k(x)\rangle \quad \text{where} \quad C_k(f) := \EE_{x \sim \gamma}[f(x)He_k(x)] \in (\RR^d)^{\otimes k}.
\end{align*}
As such, for any integer $k \geq 0$ we can define the projection operator $\cP_k : L^2(\gamma) \rightarrow L^2(\gamma)$ onto the span of degree $k$ Hermite polynomials as follows:
\begin{align*}
    (\cP_kf)(x) := \langle C_k(f), He_k(x) \rangle.
\end{align*}
Furthermore, denote $\cP_{\leq k} :=\sum_{0\leq i\leq k} \cP_i$ and $\cP_{<k}:=\sum_{0\leq i< k} \cP_i$ as the projection operators onto the span of Hermite polynomials with degree no more than $k$, and degree less than $k$, respectively.



\section{Main Results}
Our goal is to show that the network defined in \eqref{eq:three-layer-nn} trained via \Cref{alg:layerwise} can efficiently learn hierarchical polynomials of the form $h = g \circ p$.

First, we consider a restricted class of degree $k$ polynomials for the hidden feature $p$. Consider $p$ with the following decomposition:
\begin{equation}\label{eq:feature}
p(x)=\frac{1}{\sqrt{L}}\left(\sum_{i=1}^L \lambda_i \psi_i(x)\right).
\end{equation}
\begin{restatable}{assumption}{feature}
\label{assumption: on feature p in main text} The feature $p$ can be written in the form \eqref{eq:feature}. We make the following additional assumptions on $p$:
\begin{itemize}
\item There is a set of orthogonal vectors $\{v_{i,j}\}_{i\in [L],j\in [J_i]}$, satisfying $J_i\leq k$ and $\|v_{i,j}\|=1$, such that $\psi_i(x)$ only depends on $v_{i,1}^{\top}x,\dots,v_{i,J_i}^{\top}x$.
\item For each $i$, $\cP_k\psi_i = \psi_i$. Equivalently, $\psi_i$ lies in the span of degree $k$ Hermite polynomials.
\item $\EE\left[\psi_i(x)^2\right]=1$ and $\EE\left[p(x)^2\right]=1$.
\item The $\lambda_i$ are balanced, i.e $\sup_i \abs{\lambda_i}=\cO(1)$, and $L = \Theta(d)$.
\end{itemize}
\end{restatable}
\begin{remark}
The first assumption tells us that each $\psi_i$ depends on a different rank $\leq k$ subspace, all of which are orthogonal to each other. As a consequence of the rotation invariance of the Gaussian, the quantities $\psi_i(x)$ are thus independent when we regard $x$ as a random vector. The second assumption requires $p$ to be a degree $k$ polynomial orthogonal to lower-degree polynomials, while the third is a normalization condition. The final condition requires $p$ to be sufficiently spread out, and depend on many $\psi_i$. Our results can easily be extended to any $L = \omega_d(1)$, at the expense of a worse error floor.
\end{remark}

\begin{remark}
Since $\cP_k \psi_i =\psi_i$ for each $i$, we have $\cP_k p =p$. We can thus write $p(x)$ as $\langle A, He_k(x)\rangle$ for some $A \in (\mathbb{R}^d)^{\otimes k}$. There are two important classes of $A$ which satisfy \Cref{assumption: on feature p in main text}:

   First, let $A$ be an orthogonally decomposable tensor
   \[
   A=\frac{1}{\sqrt{L}}\left(\sum_{i=1}^L \lambda_i v_i^{\otimes k}\right)
   \]
   where $\langle v_i,v_j \rangle = \delta_{ij}$. Using identities for the Hermite polynomials (\Cref{app: hermite polynomial}), one can rewrite the feature $p$ as
\begin{equation}\label{eq:feature_orthogonal}
p(x)=\frac{1}{\sqrt{L}}\left(\sum_{i=1}^L\lambda_i \langle v_i^{\otimes k}, He_k(x)\rangle\right)=\frac{1}{\sqrt{L}}\left(\sum_{i=1}^L \lambda_i h_k(v_i^{\top}x)\right).
\end{equation}
$p$ thus satisfies \Cref{assumption: on feature p in main text} with $J_i=1$ for all $i$, assuming the regularity conditions hold.

Next, we show that \Cref{assumption: on feature p in main text} is met when $p$ is a sum of sparse parities, i.e., \[
    A=\frac{1}{\sqrt{L}}\left(\sum_{i=1}^L \lambda_i\cdot v_{i,1}\otimes\dots\otimes v_{i,k}\right)
    \]
    where $\langle v_{i_1,j_1}, v_{i_2,j_2}\rangle = \delta_{i_1i_2}\delta_{j_1j_2}$. 
    In that case, the feature $p$ can be rewritten as
\[
p(x)=\frac{1}{\sqrt{L}}\left(\sum_{i=1}^L \lambda_i \langle v_{i,1}\otimes\dots\otimes v_{i,k},He_k(x)\rangle\right)=\frac{1}{\sqrt{L}}\left(\sum_{i=1}^L \lambda_i\left(\prod_{j=1}^k \langle v_{i,j},x\rangle\right)\right)
\]
For example, taking $L = d/k$ and choosing $v_{i,j}=e_{k(i-1)+j}$, the standard basis elements in $\RR^d$, the feature $p$ becomes\[p(x)=\frac{1}{\sqrt{d/k}}\left(\lambda_1x_1x_2\cdots x_k + x_{k+1}\cdots x_{2k} + \lambda_{d/k}x_{d-k+1}\cdots x_{d}\right)\]
and hence the name ``sum of sparse parities." This feature satisfies \Cref{assumption: on feature p in main text} with $J_i=k$ for all $i$, assuming that the regularity conditions hold.
\end{remark}

We next require the following mild assumptions on the link function $g$ and target $h$. The assumption on $h$ is purely for technical convenience and can be achieved by a simple pre-processing step. The assumption on $g$, in the single-index model literature~\citep{arous2021online}, is referred to as $g$ having an \emph{information exponent} of 1.
\begin{assumption}\label{assumption: zero expectation and information exponent}
    $\EE_{x\sim\gamma}\left[h(x)\right]=0$ and $\EE_{z\sim \cN(0,1)}\left[g'(z)\right]=\Theta(1)$.
\end{assumption}

Finally, we make the following assumption on the activation functions $\sigma_1, \sigma_2$:
\begin{assumption}\label{assumption: activation function}
    We assume $\sigma_1$ is a $k$ degree polynomial. Denote $\sigma_1(z)=\sum_{0\leq i\leq k}o_iz^i$, we further assume $\sup_i\abs{o_i}=\cO(1)$ and $\abs{o_k}=\Theta(1)$.
    Also, set $\sigma_2(z)=\operatorname{max}\{z,0\}$, i.e., the $\operatorname{ReLU}$ activation.
\end{assumption}

With our assumptions in place, we are ready to state our main theorem.

\begin{theorem}\label{thm:main_thm}
Under the above assumptions, for any constant $\alpha\in (0,1)$, any $m_1\geq d^{k+\alpha}$ and any $n\geq d^{k+3\alpha}$, set $m_2=d^{\alpha}$, $T_1=\operatorname{poly}(d,m_1,n)$, $T_2=\operatorname{poly}(d,m_1,m_2,n)$, $\eta_1=\frac{1}{\operatorname{poly}(d,m_1,n)}$, $\eta_2=\frac{1}{\operatorname{poly}(d,m_1,m_2,n)}$, $\xi_1=\frac{2m_1}{d^{k+\alpha}}$ and $\xi_2=2$. Then, for any absolute constant $\delta\in (0,1)$, with probability at least $1-\delta$ over the sampling of initialization and the sampling of training dataset $\cD_1,\cD_2$, the estimator $\hat{\theta}$ output by \Cref{alg:layerwise} satisfies 
\[
\norm{h_{\hat{\theta}}-h}{L^2(\gamma)}^2 = \widetilde{\cO}(d^{-\alpha}).
\]
\end{theorem}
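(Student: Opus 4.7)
The strategy is to analyze the two stages of \Cref{alg:layerwise} separately. In both stages only one block of parameters is updated and the corresponding loss is convex (quadratic plus weight decay), so GD with the prescribed small step size and large iteration count converges to a regularized ERM that is, up to concentration error, a random-feature ridge regression. The first stage will recover the hidden feature $p$ up to scale through the bottleneck $g_{u,s,V}$, and the second stage uses one-dimensional ReLU random features of this recovered feature to fit the link function $g$.

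\emph{Stage 1 (feature recovery).} Since $c^{(0)}=0$, throughout the first stage $h_\theta\equiv g_{u,s,V}=u^{\top}\sigma_1(V^{(0)}x+s^{(0)})$, which is linear in $u$. The plan is to: (i) show that the empirical random-feature Gram matrix concentrates around its population analog for $m_1\gtrsim d^{k+\alpha}$ and $n\gtrsim d^{k+3\alpha}$; (ii) use the assumption that $\sigma_1$ is a degree-$k$ polynomial with $|o_k|=\Theta(1)$ to argue that the population random-feature kernel has range exactly $\cP_{\le k}L^2(\gamma)$ and is well-conditioned on that subspace; (iii) conclude that the ridge-regression fit approximates the best degree-$\le k$ Hermite projection $\cP_{\le k}(g\circ p)$ to $L^2(\gamma)$ error $\widetilde\cO(d^{-\alpha})$; and (iv) apply the approximate Stein's lemma (\Cref{lem:approximate_stein}) to identify this projection with $c_g\cdot p$, where $c_g=\Theta(1)$ by \Cref{assumption: zero expectation and information exponent}. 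The conclusion of Stage 1 is $g_{u^{(T_1)},s^{(0)},V^{(0)}}=c_g\,p+\varepsilon_1$ with $\|\varepsilon_1\|_{L^2(\gamma)}^2=\widetilde\cO(d^{-\alpha})$.

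\emph{Stage 2 (link-function fitting).} Freeze the bottleneck and write $\tilde p(x):=g_{u^{(T_1)},s^{(0)},V^{(0)}}(x)$. The remaining network $\tilde p(x)+c^{\top}\sigma_2(a\tilde p(x)+b)$ is quadratic in $c$, and training $c$ with weight decay $\xi_2=2$ again reduces to ridge regression, now over one-dimensional ReLU random features $\{\sigma_2(a_i\tilde p+b_i)\}_{i\in[m_2]}$ applied to the scalar $\tilde p$. With $a_i\in\{\pm1\}$ and $b_i$ drawn from a heavy-tailed density $\mu_b$, the standard random-ReLU approximation argument produces an explicit $c^{*}$ of norm $\widetilde\cO(1)$ such that $\|\tilde p+(c^{*})^{\top}\sigma_2(a\tilde p+b)-h\|_{L^2(\gamma)}^2=\widetilde\cO(d^{-\alpha})$, using only that $g$ is a degree-$q$ polynomial and the moment bound on $b$. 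Rademacher-complexity control on the $\xi_2$-ball and $n\ge d^{k+3\alpha}$ samples yield generalization, and convexity produces convergence of GD to the ERM; the Stage-1 feature error carries through the ReLU nonlinearity only as a lower-order contribution because $\sigma_2$ is $1$-Lipschitz and the weight decay keeps $\|c\|$ under control.

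\emph{Main obstacle.} The heart of the argument is the approximate Stein's lemma. Expanding $g(p)=\sum_{0\le i\le q}g_i p^i$, the $i=0,1$ terms contribute $g_0+g_1 p$ exactly to $\cP_{\le k}(g\circ p)$ because $\cP_k p=p$. For $i\ge 2$ one must show that $\cP_{\le k}(p^i)\approx c_i\,p$ with $c_i=\Theta(1)$ and remainder $\widetilde\cO(L^{-1/2})=\widetilde\cO(d^{-1/2})$. Substituting $p=L^{-1/2}\sum_j\lambda_j\psi_j$ into $p^i$ and using the orthogonal supports of the $\{v_{i,j}\}$ from \Cref{assumption: on feature p in main text}, the products $\psi_{j_1}\cdots\psi_{j_i}$ decompose into independent factors; the projection onto degree $\le k$ kills terms whose supports span too many distinct subspaces, while the surviving ``diagonal'' terms---those where $i-1$ of the indices coincide and the remaining one traces out $p$---combine, via $\EE[\psi_j^2]=1$ and the balance of $\lambda_j$, into a scalar multiple of $p$ with an $O(L^{-1/2})$ residual. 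Finally, showing that $c_g=\sum_i g_i c_i=\Theta(1)$ (rather than accidentally cancelling) is exactly where the information-exponent hypothesis $\EE[g'(z)]=\Theta(1)$ is used; this, combined with careful bookkeeping of all error terms through the composition of Stages 1 and 2, gives the stated $\widetilde\cO(d^{-\alpha})$ excess risk.
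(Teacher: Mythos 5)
Your proposal follows essentially the same two-stage architecture as the paper: Stage 1 reduces to convex ridge regression over the degree-$k$ polynomial random features, whose range is exactly $\cP_{\le k}L^2(\gamma)$, so the fit converges to $\cP_{\le k}h$; the approximate Stein's lemma then identifies this with $\EE[g'(z)]\,p$; Stage 2 is again convex ridge regression, now over one-dimensional ReLU random features of the recovered scalar, and standard random-feature approximation plus Rademacher generalization closes the argument. This matches the paper's proof structure (Lemmas \ref{thm:kernel_stage1}, \ref{lem:approximate_stein}, \ref{lem:stage_2}) step for step.

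One small inaccuracy in your sketch of the Stein lemma: you assert that $\cP_{\le k}(p^i)\approx c_i\,p$ with $c_i=\Theta(1)$ for every $i\ge 2$. In fact the paper's analysis (and the parity structure of the expansion) gives $c_i\approx 0$ for even $i$ --- the degree-$k$ projection of an even power of $p$ is $\cO(L^{-1/2})$ in norm, not proportional to $p$ --- while for odd $i=2s+1$ one gets $c_i\approx(2s+1)!!$. The final aggregate $c_g=\sum_i g_i c_i=\EE_{z\sim\cN(0,1)}[g'(z)]$ picks up only the odd coefficients, so your conclusion is still correct; you'd just need the ``diagonal terms'' bookkeeping to distinguish the odd/even cases, and the surviving configurations are not ``$i-1$ indices coinciding'' but rather one index with exponent 1 and the rest paired into squares.
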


\Cref{thm:main_thm} states that \Cref{alg:layerwise} can learn the target $h=g\circ p$ in $n = \widetilde \cO(d^k)$ samples,
with widths $m_1 = \widetilde \Theta(d^k), m_2 = \widetilde \Theta(1)$. Up to log factors, this is the same sample complexity as directly learning the feature $p$.
On the other hand, kernel methods such as the NTK require $n = \widetilde \Omega(d^{kq})$ samples to learn $h$, and are unable to take advantage of the underlying hierarchical structure. 

A simple corollary of \Cref{thm:main_thm} follows when $k = 2$. In this case the feature $p$ is a quadratic polynomial and can be expressed as the following for some symmetric $A \in \RR^{d \times d}$
\begin{align*}
    p(x) = \langle A, xx^{\top} - I \rangle = x^{\top}Ax - \tr(A).
\end{align*}
Taking $\tr(A) = 0$, and noting that since $A$ always has an eigendecomposition, \Cref{assumption: on feature p in main text} is equivalent to $\norm{A}{F} = 1$ and $\norm{A}{op} = \cO(1/\sqrt{d})$, one obtains the following:
\begin{corollary}\label{cor:quad}
    Let $h(x) = g(x^{\top}Ax)$ where $\tr(A) = 0, \norm{A}{F} = 1$, and $\norm{A}{op} = \cO(1/\sqrt{d})$. Then under the same setting of hyperparameters as \Cref{thm:main_thm}, for any sample size $n \geq d^{2 + 3\alpha}$, with probability at least $1 - \delta$ over the initialization and data, the estimator $\hat \theta$ satisfies
   \[
\norm{h_{\hat{\theta}}-h}{L^2(\gamma)}^2 = \widetilde{\cO}(d^{-\alpha}).
\]
\end{corollary}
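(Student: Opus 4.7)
The plan is to reduce \Cref{cor:quad} to \Cref{thm:main_thm} by verifying that the quadratic feature $p(x) = x^\top A x$ (which equals $\langle A, xx^\top - I \rangle$ since $\tr(A) = 0$) fits the structural form required by \Cref{assumption: on feature p in main text} at $k = 2$. I begin by eigendecomposing the symmetric matrix $A = \sum_{i=1}^r \mu_i v_i v_i^\top$ with orthonormal $v_i$ and $r = \rank(A)$. Using $\tr(A) = 0$ and the explicit form $h_2(z) = (z^2 - 1)/\sqrt{2}$, the feature can be rewritten as
\[
p(x) \;=\; \sum_{i=1}^r \mu_i\bigl[(v_i^\top x)^2 - 1\bigr] \;=\; \sqrt{2}\sum_{i=1}^r \mu_i\, h_2(v_i^\top x).
\]
To match the normalization $\EE[p^2] = 1$, I absorb the extra factor by setting $\bar p := p/\sqrt{2}$ and $\bar g(z) := g(\sqrt{2} z)$, so that $h = \bar g \circ \bar p$ and $\bar g$ remains a degree-$q$ polynomial with $\cO(1)$ coefficients (as $q$ is a constant).

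Next I verify \Cref{assumption: on feature p in main text} for $\bar p$ with the choices $L := r$, $\psi_i(x) := h_2(v_i^\top x)$, $\lambda_i := \sqrt{L}\,\mu_i$, and $J_i := 1$: (i) each $\psi_i$ depends only on $v_i^\top x$, and the $v_i$ are orthonormal; (ii) $\cP_2 \psi_i = \psi_i$ since $h_2$ is a degree-$2$ Hermite polynomial; (iii) $\EE[\psi_i^2] = 1$ by Hermite orthonormality, and $\EE[\bar p^2] = \tfrac{1}{L}\sum_i \lambda_i^2 = \sum_i \mu_i^2 = \norm{A}{F}^2 = 1$; (iv) $\sup_i |\lambda_i| = \sqrt{L}\,\norm{A}{op} = \cO(\sqrt{d}\cdot d^{-1/2}) = \cO(1)$; (v) for the condition $L = \Theta(d)$, the inequality $1 = \norm{A}{F}^2 \leq r\,\norm{A}{op}^2 = \cO(r/d)$ yields $r \gtrsim d$, and combined with $r \leq d$ gives $L = \Theta(d)$.

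With the structural assumption on $\bar p$ verified, and the polynomial, centering, and information-exponent assumptions on $\bar g$ and $h$ carried over from the hypotheses of the corollary, \Cref{thm:main_thm} applies directly at $k = 2$ and delivers $\norm{h_{\hat\theta} - h}{L^2(\gamma)}^2 = \widetilde{\cO}(d^{-\alpha})$. The reduction is essentially bookkeeping; the one place where the special structure of the quadratic case is actually used is the rank lower bound $r = \Theta(d)$, which is exactly what the operator-norm constraint $\norm{A}{op} = \cO(1/\sqrt{d})$ enforces---without it, $A$ could be essentially rank one and $p$ would concentrate on a single direction, violating the ``spread'' requirement of \Cref{assumption: on feature p in main text}.
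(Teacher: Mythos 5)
Your proof is correct and follows the same route as the paper's (very terse) argument: eigendecompose $A = \sum_i \mu_i v_iv_i^\top$, rewrite $x^\top Ax = \sqrt{2}\sum_i \mu_i h_2(v_i^\top x)$ using $\tr(A)=0$, set $\psi_i = h_2(v_i^\top x)$ and $\lambda_i \propto \mu_i$, and read off Assumption~\ref{assumption: on feature p in main text} from $\norm{A}{F}=1$ and $\norm{A}{op}=\cO(1/\sqrt{d})$ before invoking \Cref{thm:main_thm}. You are actually a bit more careful than the paper on one point: since $\EE[(x^\top Ax)^2]=2\norm{A}{F}^2=2$, the feature as written does not satisfy the $\EE[p^2]=1$ normalization, and your rescaling $\bar p = p/\sqrt 2$, $\bar g(z)=g(\sqrt 2\,z)$ is exactly what is needed (the paper glosses over this). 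One very small residual imprecision, present in the paper as well, is the phrase ``carried over from the hypotheses of the corollary'': the corollary does not state assumptions on $g$ explicitly, and after the rescaling the relevant condition becomes $\EE_{z\sim\cN(0,1)}[\bar g'(z)]=\sqrt{2}\,\EE_{z\sim\cN(0,1)}[g'(\sqrt 2 z)]=\Theta(1)$, which is the information-exponent condition for $\bar g$ rather than for $g$ under $\cN(0,1)$; this is a matter of how one reads the implicit hypotheses, not a defect in your derivation.
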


\Cref{cor:quad} states that \Cref{alg:layerwise} can learn the target $g(x^{\top}Ax)$ in $\widetilde \cO(d^2)$ samples, which matches the information-theoretically optimal sample complexity. This improves over the sample complexity of the algorithm in \citet{nichani2023provable} when $g$ is a polynomial, which requires $\widetilde\Theta(d^4)$ samples. See \Cref{sec:sample_complexity_improvement} for discussion on why \Cref{alg:layerwise} is able to obtain this sample complexity improvement.

\section{Proof Sketch}

The proof of \Cref{thm:main_thm} proceeds by analyzing each of the two stages of training. First, we show that after the first stage, the network learns to {extract} the hidden feature $p$ out (\Cref{sec:stage1}). Next, we show that during the second stage, the network learns the link function $g$ (\Cref{sec:stage2}).

\subsection{Stage 1: Feature Learning}
\label{sec:stage1}

The first stage of training is the feature learning stage. Here, the network learns to extract the degree $k$ polynomial feature so that the intermediate layer satisfies $g_{u, s,V} \approx p$ (up to a scaling constant).

At initialization, the network satisfies $h_\theta = g_{u, s,V}$. Thus during the first stage of training, the network trains $u$ to fit $g_{u,s, V}$ to the target $h$. Since the activation $\sigma_1$ is a degree $k$ polynomial with $o_k=\Theta(1)$, we can indeed prove that at the end of the first stage $g_{u, s,V}$ will learn to fit the best degree $k$ polynomial approximation to $h$, $\cP_{\leq k} h$ (Lemma \ref{lemma: only capture k degree}). During the first stage the loss is convex in $u$, and thus optimization and generalization can be handled via straightforward kernel arguments. The following lemma formalizes the above argument, and shows that at the end of the first stage the network learns to approximate $\cP_{\leq k} h$.
\begin{restatable}{lemma}{stageone}
\label{thm:kernel_stage1}
For any constant $\alpha \in (0,1)$, any $m_1\geq d^{k+\alpha}$ and any $n\geq d^{k+3\alpha}$, set $T_1 =\operatorname{poly}(n,m_1,d)$, $\eta_1=\frac{1}{\operatorname{poly}(n,m_1,d)}$ and  $\xi_1=\frac{2m_1}{d^{k+\alpha}}$. Then, for any absolute constant $\delta\in (0,1)$,
with probability at least $1-\delta/2$ over the initialization $V,s$ and training data $\cD_1$, we have
\[
\norm{h_{\theta^{(T_1)}}-\cP_{\leq k} h}{L^2(\gamma)}^2 = \widetilde{\cO}(d^{-\alpha}).
\]
\end{restatable}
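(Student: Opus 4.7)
The plan is to view stage 1 as a convex random--feature ridge regression problem and to analyze it with standard kernel tools. Since $u^{(0)}=0$ and $c^{(0)}=0$ are held throughout stage 1, we have $h_{\theta^{(t)}}(x)=g_{u^{(t)},s^{(0)},V^{(0)}}(x)=\langle u^{(t)},\phi(x)\rangle$, where the random feature map $\phi(x):=\sigma_1(V^{(0)}x+s^{(0)})\in\RR^{m_1}$ depends only on the frozen weights. Because $\sigma_1$ is a polynomial of degree $k$, every coordinate of $\phi$ lies in the finite-dimensional subspace $\cV_{\leq k}\subset L^2(\gamma)$ of Hermite polynomials of degree $\leq k$, which has dimension $\Theta(d^k)$. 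The regularized empirical loss $\hat L_{\cD_1}(u)+\tfrac{\xi_1}{2}\|u\|^2$ is strongly convex and quadratic in $u$, so the first step is to argue that with $\eta_1$ sufficiently small and $T_1$ sufficiently large, the iterate $u^{(T_1)}$ is exponentially close to the unique ridge minimizer $\hat u^\star$.

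Next I would analyze $\hat u^\star$ via random-feature kernel ridge regression. Let $K(x,y):=\EE_{v,b}[\sigma_1(v^\top x+b)\sigma_1(v^\top y+b)]$ be the expected kernel (with $v$ uniform on $\SS^{d-1}(1/\sqrt 2)$ and $b\sim\tau_b$) and $\hat K_{m_1}$ its $m_1$-sample Monte-Carlo approximation. The key spectral fact is that, because $|o_k|=\Theta(1)$ and the low-degree coefficients of $\sigma_1$ are $\cO(1)$, each Hermite level $j\leq k$ receives eigenvalue mass $\Theta(d^{-j})$ from $K$, so when restricted to $\cV_{\leq k}$ the operator $K$ has all eigenvalues of order at least $d^{-k}$. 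In particular, $\cP_{\leq k}h\in\cV_{\leq k}$ has finite RKHS norm $\lesssim \sqrt{d^k}\,\|\cP_{\leq k}h\|_{L^2(\gamma)}$. With the choice $\xi_1=2m_1/d^{k+\alpha}$ (i.e.\ a per-sample regularization of order $d^{-(k+\alpha)}$) and $m_1\geq d^{k+\alpha}$, a standard random-feature concentration argument (Rudi--Rosasco style, or a direct matrix Bernstein on $\phi(x)\phi(x)^\top$ in the $\Theta(d^k)$-dimensional span of $\cV_{\leq k}$) shows that $\hat K_{m_1}$ uniformly approximates $K$ on $\cV_{\leq k}$ up to lower-order error.

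Third, I would invoke the classical bias/variance decomposition for kernel ridge regression. The bias term is $\|\,\KK_\lambda \cP_{\leq k}h-\cP_{\leq k}h\,\|^2_{L^2(\gamma)}$ where $\KK_\lambda$ denotes the (population) kernel ridge operator with regularization $\lambda=\xi_1/n$; the eigenvalue lower bound of order $d^{-k}$ on $\cV_{\leq k}$ makes this bias $\cO(d^{-\alpha})$ for the prescribed $\xi_1$. Components of $h$ of degree $>k$ are unlearnable by the feature map but contribute only $\|\cP_{>k}h\|_{L^2}^2$ to the noise term; crucially they are \emph{not} part of the target we are trying to hit in this lemma. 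The variance term is bounded by $\|h\|_{L^\infty}^2\,\mathrm{df}_\lambda/n$ where the effective dimension $\mathrm{df}_\lambda=\tr(K(K+\lambda I)^{-1})\lesssim d^k$ because $K$ lives on $\cV_{\leq k}$; with $n\geq d^{k+3\alpha}$ this gives a variance of order $d^{-2\alpha}$, which is $\widetilde{\cO}(d^{-\alpha})$ after absorbing log factors from a standard high-probability upgrade (e.g.\ via a covering argument using $\EE[b^8]\lesssim 1$ and boundedness of $V^{(0)}$ in spectral norm).

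I expect the main obstacle to be the spectral lower bound on $K|_{\cV_{\leq k}}$. One has to show that every Hermite polynomial of degree $j\leq k$ is in the range of the expectation operator and has kernel eigenvalue $\gtrsim d^{-j}$; this requires expanding $\sigma_1(v^\top x+b)=\sum_{j=0}^k o_j(v^\top x+b)^j$ in the Hermite basis, integrating out $b$ against $\mu_b$, and checking that the resulting symmetric tensors span $\cV_{\leq k}$ with well-conditioned Gram matrix. The density lower bound $\mu_b(t)\gtrsim(|t|+1)^{-p}$ and the uniformity of $v$ on the sphere are used precisely here. Once this spectral fact is in hand, the remaining steps (random-feature concentration, ridge bias/variance, GD convergence) are essentially routine and yield the claimed $\widetilde{\cO}(d^{-\alpha})$ bound with probability $1-\delta/2$.
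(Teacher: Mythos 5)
Your approach is genuinely different from the paper's, and it would work after fixing a couple of confusions, so let me compare. The paper does \emph{not} invoke a kernel-ridge bias/variance decomposition with a spectral lower bound on $K|_{\cV_{\leq k}}$. Instead it (i) uses the orthogonality $\|g_{u,s,V}-h\|^2 = \|g_{u,s,V}-\cP_{\leq k}h\|^2 + \|h-\cP_{\leq k}h\|^2$ (Lemma 6), (ii) explicitly builds a low-norm $u^*$ approximating $\cP_k h$ by a pseudoinverse/Monte-Carlo construction (Lemmas 7--10), noting via the approximate Stein's lemma that $\|\cP_{<k}h\|_{L^2}=\cO(d^{-1/2})$ so only the top Hermite level needs to be represented, (iii) uses strong convexity/smoothness to show GD reaches an approximate minimizer, and (iv) proves generalization by a Rademacher bound on the truncated loss together with a separate tail estimate (Lemma 16). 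Your route replaces (ii) by a spectral lower bound $\gtrsim d^{-k}$ on the kernel restricted to $\cV_{\leq k}$ and replaces (iv) by the effective-dimension variance bound; this is a cleaner conceptual story and would in fact let you avoid invoking the approximate Stein's lemma in stage 1, at the cost of having to prove the spectral fact at \emph{every} level $j\leq k$ rather than just level $k$. These are equivalent pieces of mathematics: the paper's norm bound $\EE_v[f(v)^2]\lesssim d^k$ from Lemma 7 \emph{is} the degree-$k$ eigenvalue lower bound in disguise, obtained via Corollary 42 of Damian et al.

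Two points in your sketch should be fixed. First, you conflate the two biases. The feature map in stage 1 is $\sigma_1(Vx+s^{(0)})$ with $s^{(0)}\sim\cN(0,1/2)$; the heavy-tailed bias $b\sim\tau_b$ with density lower bound $\mu_b(t)\gtrsim(|t|+1)^{-p}$ and the moment condition $\EE[b^8]\lesssim 1$ live in the second layer and are only used in stage 2. So the ``key technical obstacle'' you list --- producing all Hermite levels $j\leq k$ --- is \emph{not} resolved by the polynomial density lower bound on $\mu_b$; it is resolved by the Gaussianity of $s$ together with $o_k=\Theta(1)$, via the Hermite addition formula (this is the content of the paper's Lemma 8, where a degree-$k$ polynomial $w(s)$ is constructed with $\EE_s[w(s)\sigma_1((z+s)/\sqrt 2)]=h_k(z)$; the same mechanism produces $h_j(z)$ for $j<k$). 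Second, the variance bound $\|h\|_{L^\infty}^2\,\mathrm{df}_\lambda/n$ is vacuous here because $h$ and $g_{u,s,V}$ are unbounded polynomials of Gaussian inputs; the paper's truncation and tail-control argument (Lemma 16, bounding $\EE[(g_{\hat u,s,V}-h)^2\mathbf{1}_{|g_{\hat u,s,V}-h|\geq\tau}]$ through Gaussian hypercontractivity and a rank bound on the empirical feature covariance) is a genuine step, not a routine high-probability upgrade, and would need an analogue in your KRR treatment.
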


It thus suffices to analyze the quantity $\cP_{\leq k} h$.
Our key technical result, and a main innovation of our paper, is \Cref{lem:approximate_stein}. It shows that the term $\cP_{\leq k} h$ is approximately equal to $\cP_k h$, and furthermore, up to a scaling constant, $\cP_kh$ is approximately equal to the hidden feature $p$:

\begin{lemma}\label{lem:approximate_stein}Under the previous assumptions, we have
\[
\norm{\cP_k h-\EE_{z\sim \cN(0,1)}\left[g'(z)\right] p}{L^2(\gamma)} = \cO(d^{-1/2})\quad \text{and}\quad \norm{\cP_{<k} h}{L^2(\gamma)}=\cO(d^{-1/2})
\]
\end{lemma}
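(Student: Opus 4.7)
The plan is to compute the Hermite coefficients $C_j(h) := \EE_{x \sim \gamma}[h(x) He_j(x)]$ for $j \leq k$ using the Gaussian integration-by-parts identity $C_j(h) = \tfrac{1}{\sqrt{j!}} \EE[\nabla^j (g\circ p)(x)]$ together with the multivariate Fa\`a di Bruno formula
\[
\nabla^j (g \circ p) \;=\; \sum_{\pi \in \Pi(j)} g^{(|\pi|)}(p)\,\operatorname{Sym}\!\bigotimes_{B \in \pi} \nabla^{|B|} p,
\]
where $\Pi(j)$ denotes the set-partitions of $[j]$. Writing $p(x) = \langle A, He_k(x)\rangle$ for a constant tensor $A$ so that $\nabla^k p = \sqrt{k!}\, A$, the single-block partition at $j = k$ isolates the target main term $\EE_x[g'(p(x))]\,p$. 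All other Fa\`a di Bruno contributions — and all contributions at $j < k$ — involve lower-order tensor derivatives $\nabla^{|B|} p$, and the core fact is that the resulting tensor contractions are small because the $\psi_i$'s from \Cref{assumption: on feature p in main text} are orthogonally supported, independent, and $L = \Theta(d)$ is large.

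For the first claim, the single-block partition at $j = k$ contributes $\sqrt{k!}\,\EE_x[g'(p(x))]\,A$ to $C_k(h)$, which contracts with $He_k(x)/\sqrt{k!}$ to give $\EE_x[g'(p(x))]\,p(x)$. I then replace $\EE_x[g'(p(x))]$ by $\EE_{z \sim \cN(0,1)}[g'(z)]$ using that $p = L^{-1/2}\sum_i \lambda_i \psi_i$ is a normalized sum of independent mean-zero unit-variance random variables: expanding $\EE[p^r]$ as a multi-sum and applying independence, the pair-matched index tuples produce the Gaussian (Wick) moments exactly, while triple or higher coincidences contribute $\cO(L^{-1/2})$ corrections. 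Since $g'$ is a polynomial of constant degree with bounded coefficients, this yields $|\EE_x[g'(p)] - \EE_z[g'(z)]| = \cO(L^{-1/2}) = \cO(d^{-1/2})$ and hence an $L^2$ discrepancy against $p$ of the same order. For partitions $\pi$ with $|\pi| \geq 2$, expand each $\nabla^{|B|} p = L^{-1/2}\sum_i \lambda_i \nabla^{|B|} \psi_i$ and observe that $\EE[\nabla^{|B|}\psi_i] = 0$ whenever $|B| < k$, since the Hermite identity $\nabla h_k = \sqrt{k}\,h_{k-1}$ implies each $\nabla^{|B|}\psi_i$ is a pure degree $k-|B|>0$ Hermite polynomial. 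Combined with a leave-one-out Taylor expansion of $g^{(|\pi|)}(p)$ around $p_{-I}$ (with $I$ the set of indices present) and the independence of the $\{\psi_i\}_{i\in I}$ from $p_{-I}$, the zeroth-order Taylor term vanishes whenever any $i_B$ appears only once in the multi-index; the surviving contributions (from index coincidences and Taylor remainders) aggregate via orthogonality of Hermite components over distinct index patterns to an $L^2$ norm of $\cO(L^{-1/2}) = \cO(d^{-1/2})$.

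For the second claim, expand $g(p) = \sum_{m=0}^q g_m\, p^m$ and analyze each $\cP_j(p^m)$ for $0 < j < k$ separately. Writing $p^m = L^{-m/2}\sum_{(i_1,\dots,i_m)}(\prod_s \lambda_{i_s}) \prod_s \psi_{i_s}$, and using that the $\psi_i$ are independent with each $\psi_i$ a pure degree-$k$ Hermite polynomial, the product $\prod_s \psi_{i_s}$ with multiplicities $n_i$ has Hermite support on degrees $\sum_i d_i$ with $d_i \in \{0,2,\ldots, k n_i\}$, and the minimum achievable degree equals $k$ times the number of $i$ with odd $n_i$. Hence $\cP_{<k}$ kills every contribution unless every $n_i$ is even, which forces $m$ even and restricts the surviving multi-indices to at most $\cO(L^{m/2})$ terms (one pairing pattern times $\cO(L^{m/2})$ index assignments). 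Moreover, the surviving Hermite components across distinct underlying sets $\{i: n_i > 0\}$ are $L^2(\gamma)$-orthogonal, so $\|\cP_j(p^m)\|_{L^2}^2 = \cO(1/L) = \cO(1/d)$ for every $m \geq 2$ and $0 < j < k$. The cases $m = 0,1$ contribute zero ($\cP_j(1) = 0$ for $j \geq 1$ and $\cP_j(p) = 0$ for $j \neq k$), while the constant piece $\cP_0 h$ vanishes by $\EE[h] = 0$. Summing over $m \leq q$ gives $\|\cP_{<k} h\|_{L^2} = \cO(d^{-1/2})$.

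The main obstacle is the sub-leading Fa\`a di Bruno analysis at $j = k$: the multi-sums over $(i_B)_{B\in\pi}$ contain $L^{|\pi|}$ terms against only an $L^{-|\pi|/2}$ prefactor, so a naive Cauchy–Schwarz bound fails. The crucial leverage is the independence of the $\psi_i$'s across orthogonal subspaces $\mathrm{span}(v_{i,1},\dots,v_{i,J_i})$ (guaranteed by \Cref{assumption: on feature p in main text} and rotation-invariance of $\gamma$), combined with $J_i \leq k$ ensuring each $\nabla^{|B|}\psi_i$ stays supported on a fixed rank-at-most-$k$ subspace with $\cO(1)$ moments. These together force the surviving terms to collapse to iterated self-contractions of $A$, which are $\cO(d^{-1/2})$ by the spread-out assumption $L = \Theta(d)$ — the mechanism already visible in the quadratic case via $\|A^2\|_F \lesssim \|A\|_{\op}\|A\|_F = \cO(d^{-1/2})$.
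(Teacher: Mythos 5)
Your argument for the second bound ($\|\cP_{<k}h\|_{L^2(\gamma)}$) follows essentially the same route as the paper's: expand $p^m$ multinomially, use that a factor $\psi_i$ appearing with multiplicity one forces Hermite degree at least $k$ (the paper's \Cref{lem:hermite_product}), then count the surviving terms and exploit $L^2$-orthogonality of contributions with distinct index sets. For the first bound your route is genuinely different. The paper expands each power $p^w$, tracks $\cP_k$ term by term, and recovers the coefficient $\sum_s g_{2s+1}(2s+1)!!=\EE_{z}[g'(z)]$ only at the end, from the $\cP_0$ of the fully paired products. You instead apply Gaussian integration by parts, $C_k(h)=\tfrac{1}{\sqrt{k!}}\EE[\nabla^k(g\circ p)]$, plus Fa\`a di Bruno, so that the single-block partition hands you $\EE_x[g'(p)]\,p$ directly and the coefficient identification reduces to the clean moment-matching estimate $\abs{\EE_x[g'(p)]-\EE_z[g'(z)]}=\cO(L^{-1/2})$. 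What this buys is conceptual transparency: the ``approximate Stein'' structure is manifest rather than emergent. What it costs is that the entire combinatorial burden moves to the partitions with $\abs{\pi}\geq 2$, where your leave-one-out Taylor expansion combined with orthogonality of supports across distinct index patterns is only sketched; that step must reproduce, e.g., the cancellation $\|A^2\|_F\lesssim\|A\|_{\op}\|A\|_F=\cO(d^{-1/2})$ in the quadratic case, and carrying it out is combinatorially no lighter than the paper's direct expansion. The mechanism you name (unpaired factors have zero mean; differently-indexed surviving tensors are Frobenius-orthogonal, upgrading a trivial $\cO(1)$ bound to $\cO(L^{-1/2})$) is the right one, and I believe the argument closes along the lines you indicate.

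One concrete inaccuracy to fix in your second part: it is not true that $\cP_{<k}$ kills every term unless all multiplicities $n_i$ are even, nor that the minimal Hermite degree of $\prod_i\psi_i^{n_i}$ equals $k$ times the number of odd $n_i$. For instance $\EE[h_2(t)^3]\neq 0$, so $\psi_i^3$ can carry a nonzero degree-$0$ component. The correct statement is that the minimal degree is $k$ times the number of indices with $n_i=1$, so the terms surviving $\cP_{<k}$ are exactly those with every $n_i\geq 2$. This does not affect your final bound --- such terms still number $\cO(L^{\lfloor m/2\rfloor})$ and the orthogonality over distinct index sets still gives $\|\cP_j(p^m)\|_{L^2(\gamma)}^2=\cO(1/L)$, which is precisely how the paper argues --- but the evenness claim as written is false and should be replaced by the multiplicity-at-least-two condition.
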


A proof sketch of \Cref{lem:approximate_stein} is deferred to \Cref{sec:approximate_stein}, with the full proof in \Cref{app:hermite_projection}.

 Combining \Cref{thm:kernel_stage1} and \Cref{lem:approximate_stein}, we obtain the performance after the first stage: 

\begin{corollary}\label{lem:stage_1_combined}
    Under the setting of hyperparameters in \Cref{thm:main_thm}, for any constants $\alpha, \delta \in (0,1)$, with probability $1 - \delta/2$ over the initialization and the data $\cD_1$, the network after time $T_1$ satisfies
    \begin{align*}
        \norm{h_{\theta^{(T_1)}}-\EE_{z\sim \cN(0,1)}\left[g'(z)\right] p}{L^2(\gamma)}^2 = \widetilde{\cO}(d^{-\alpha}).
    \end{align*}
\end{corollary}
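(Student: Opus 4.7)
The plan is to derive Corollary \ref{lem:stage_1_combined} by a direct triangle-inequality combination of Lemma \ref{thm:kernel_stage1} and Lemma \ref{lem:approximate_stein}. Both cited results hold on events of probability at least $1-\delta/2$; since Lemma \ref{lem:approximate_stein} is deterministic (a statement purely about Hermite projections of the population target $h$), the only probabilistic event I need to condition on is the one from Lemma \ref{thm:kernel_stage1}, so the same $1-\delta/2$ probability bound carries over.

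On that event, I first apply the elementary inequality $(a+b)^2 \leq 2a^2 + 2b^2$ to split
\[
\norm{h_{\theta^{(T_1)}}-\EE_{z\sim\cN(0,1)}[g'(z)]\, p}{L^2(\gamma)}^2 \leq 2\norm{h_{\theta^{(T_1)}}-\cP_{\leq k}h}{L^2(\gamma)}^2 + 2\norm{\cP_{\leq k}h-\EE_{z\sim\cN(0,1)}[g'(z)]\, p}{L^2(\gamma)}^2.
\]
The first term is already controlled by Lemma \ref{thm:kernel_stage1}, giving $\widetilde{\cO}(d^{-\alpha})$. For the second term, I decompose $\cP_{\leq k}h = \cP_k h + \cP_{<k}h$, apply the triangle inequality in $L^2(\gamma)$, and invoke the two bounds of Lemma \ref{lem:approximate_stein}:
\[
\norm{\cP_{\leq k}h-\EE_{z\sim\cN(0,1)}[g'(z)]\, p}{L^2(\gamma)} \leq \norm{\cP_k h-\EE_{z\sim\cN(0,1)}[g'(z)]\, p}{L^2(\gamma)} + \norm{\cP_{<k}h}{L^2(\gamma)} = \cO(d^{-1/2}).
\]
Squaring gives $\cO(d^{-1})$ for the second term.

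Summing the two contributions yields $\widetilde{\cO}(d^{-\alpha}) + \cO(d^{-1}) = \widetilde{\cO}(d^{-\alpha})$, since $\alpha \in (0,1)$ implies $d^{-1} \leq d^{-\alpha}$. There is no serious technical obstacle here: the corollary is essentially a bookkeeping step that combines the stage-1 kernel-regression guarantee (which identifies the trained network with the degree-$\leq k$ Hermite projection of $h$) with the approximate Stein's Lemma (which identifies that projection with a scaled copy of the feature $p$). The only minor subtlety is to correctly track the probability event so that the $1-\delta/2$ probability claim is preserved; since only Lemma \ref{thm:kernel_stage1} is random, this is automatic.
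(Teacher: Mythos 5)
Your proof is correct and is exactly the bookkeeping the paper intends: the corollary is stated as a direct combination of Lemma~\ref{thm:kernel_stage1} and Lemma~\ref{lem:approximate_stein}, and your triangle-inequality splitting $\cP_{\leq k}h = \cP_k h + \cP_{<k}h$ together with $(a+b)^2 \leq 2a^2+2b^2$, the squared $\cO(d^{-1/2})$ bound, and the observation that $d^{-1} \leq d^{-\alpha}$ for $\alpha \in (0,1)$, is the intended derivation. The probability accounting is also handled correctly, since Lemma~\ref{lem:approximate_stein} is a deterministic population-level statement and only Lemma~\ref{thm:kernel_stage1} contributes the $1-\delta/2$ event.
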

Proofs for stage 1 are deferred to \Cref{sec:stage1_proofs}.

\subsection{Stage 2: Learning the Link Function}
\label{sec:stage2}

After the first stage of training, $g_{u, s,V}$ is approximately equal to the true feature $p$ up to a scaling constant. The second stage of training uses this feature to learn the link function $g$. Specifically, the second stage aims to fit the function $g$ using the two-layer network $z \mapsto z + c^{\top}\sigma_2(az+b)$. Since only $c$ is trained during stage 2, the network is a random feature model and the loss is convex in $c$.

Our main lemma for stage 2 shows that there exists $c^*$ with low norm such that the parameter vector $\theta^* := (a^{(0)}, b^{(0)}, c^*, u^{(T_1)}, s^{(0)}, V^{(0)})$ satisfies $h_{\theta^*} \approx h$. Let $\hat p$ be an arbitrary degree $k$ polynomial satisfying $\norm{\hat p-\EE_{z\sim \cN(0,1)}\left[g'(z)\right] p}{L^2(\gamma)}^2 = \cO((\log d)^{r/2}d^{-\alpha})$ (and recall that after stage 1, $g_{u,s,V}$ satisfies this condition with high probability). The main lemma is the following.
\begin{restatable}{lemma}{stagetwo}
\label{lem:stage_2}
Let $m = d^\alpha$. With probability at least $1-\delta/4$ over the sampling of $a,b$, there exists some $c^*$ such that $\|c^*\|_{\infty}=\cO((\log d)^{k(p+q)}d^{-\alpha})$ and
\[
L(\theta^*)=\norm{\hat{p}(x)+\sum_{i=1}^m c_i^*\sigma(a_i\hat{p}(x)+b_i)-h(x)}{L^2(\gamma)}^2  = \cO((\log d)^{r/2+2k(p+q)}d^{-\alpha})
\]
\end{restatable}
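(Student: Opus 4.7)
The plan is to construct $c^*$ explicitly via an importance-sampled Monte Carlo approximation to an integral representation of the univariate polynomial $G(z) := g(z/\kappa) - z$, where $\kappa := \EE_{z\sim\cN(0,1)}[g'(z)] = \Theta(1)$. The key decomposition of the error is
\[
h(x) - h_{\theta^*}(x) = \bigl[g(p(x)) - g(\hat p(x)/\kappa)\bigr] + \bigl[G(\hat p(x)) - \textstyle\sum_{i=1}^m c_i^* \sigma(a_i \hat p(x) + b_i)\bigr].
\]
The first bracket is the bias from swapping the ideal feature $\kappa p$ for the learned feature $\hat p$; the second is the random-feature approximation error for $G$ evaluated at the random scalar $\hat p(x)$.

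For the first bracket, I would use Gaussian hypercontractivity for the degree-$k$ polynomial $\hat p$ to restrict attention to the event $\{|\hat p(x)| \leq R\}$ with $R := C(\log d)^{k/2}$, which occurs with probability $1 - d^{-\omega(1)}$. On this event, a Taylor expansion of $g$ gives $|g(p(x)) - g(\hat p(x)/\kappa)| \lesssim R^{q-1}|\hat p(x) - \kappa p(x)|$, so the $L^2(\gamma)$ contribution is at most $\cO((\log d)^{k(q-1)})\cdot\norm{\hat p - \kappa p}{L^2(\gamma)}^2 = \cO((\log d)^{k(q-1)+r/2} d^{-\alpha})$. The tail contribution from $\{|\hat p(x)|>R\}$ is negligible because both $h$ and $\hat p$ have hypercontractive tails at their respective degrees.

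For the second bracket, I would establish an integral representation of $G$ obtained by integrating against $G''$: there exist bounded densities $\rho_+,\rho_-:[-R,R]\to\RR$ with $\sup|\rho_\pm|\lesssim R^{q-2}$ and affine corrections $\alpha_0 z + \beta_0$ with $|\alpha_0|,|\beta_0|\lesssim R^{q-1}$ such that, for every $z \in [-R,R]$,
\[
G(z) = \alpha_0 z + \beta_0 + \int_{-R}^R \rho_+(b)\,\sigma(z+b)\,db + \int_{-R}^R \rho_-(b)\,\sigma(-z+b)\,db.
\]
Importance sampling with respect to the initialization law $a_i \sim \operatorname{Unif}\{\pm 1\}$, $b_i \sim \tau_b$ suggests setting $c_i^* := \rho_{a_i}(b_i)\,\one[|b_i|\leq R]/(m\mu_b(b_i))$ on the bulk of indices, and reserving a constant number of indices with $|b_i| > R$ (for which $\sigma(a_i z + b_i)$ is affine on $[-R,R]$) to realize the $\alpha_0 z + \beta_0$ correction exactly. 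This construction satisfies $\EE_{a,b}[\textstyle\sum_i c_i^* \sigma(a_i z+b_i)] = G(z)$ for every $z\in[-R,R]$, and the bound $\norm{c^*}{\infty} = \cO((\log d)^{k(p+q)} d^{-\alpha})$ follows from $|\rho_\pm|\lesssim R^{q-2}$ combined with $1/\mu_b(b_i)\lesssim (1+|b_i|)^p \lesssim R^p$.

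Finally, the variance of the random-feature sum is controlled by the importance-sampling second moment:
\[
\EE_{a,b}\Bigl[\bigl|\textstyle\sum_i c_i^* \sigma(a_i z + b_i) - G(z)\bigr|^2\Bigr] \leq \frac{1}{m}\sum_{a\in\{\pm 1\}}\int_{-R}^R \frac{\rho_a(b)^2 \sigma(az+b)^2}{\mu_b(b)}\,db \lesssim \frac{R^{\cO(p+q)}}{m},
\]
uniformly in $z\in[-R,R]$. Integrating over $x$, combining with the first-bracket bound, and applying Markov's inequality to convert the expectation over $(a,b)$ into a high-probability bound yields the claim. The main obstacle I anticipate is (i) setting up the integral representation so that the densities $\rho_\pm$ are genuinely bounded by $R^{q-2}$ on the truncated domain, with boundary terms cleanly absorbed into $\alpha_0 z + \beta_0$, and (ii) handling the tail event $\{|\hat p|>R\}$ uniformly in $a,b$, which requires hypercontractivity for $h$ (degree $r=kq$) and $\hat p$ (degree $k$) together with the moment hypothesis $\EE_{b\sim\mu_b}[b^8]\lesssim 1$ to ensure that contributions from outlier biases do not blow up the variance estimate.
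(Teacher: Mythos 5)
Your overall decomposition and proof strategy match the paper's: split the error into a bias term from swapping $\kappa p$ for $\hat p$ (the paper's Lemma~\ref{lemma: 1-D approximation step 1}), and a random-feature approximation error for the univariate link on a truncated domain, controlled by an importance-sampled Monte Carlo bound plus a hypercontractive tail estimate. The variance computation and the Markov step are also as in the paper.

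There is, however, a concrete gap in how you handle the affine correction $\alpha_0 z + \beta_0$. You propose to ``reserve a constant number of indices with $|b_i|>R$'' and set their coefficients so as to realize $\alpha_0 z + \beta_0$ exactly. But those reserved coefficients have magnitude $\Theta(|\alpha_0| + |\beta_0|/R) = \Theta(R^{q-1})$ with \emph{no} $1/m$ factor, whereas the stated $\ell^\infty$ bound requires every coordinate of $c^*$ to be $\cO((\log d)^{k(p+q)} d^{-\alpha}) = \cO((\log d)^{k(p+q)}/m)$. Your reserved entries would exceed this by a factor of $m = d^\alpha$, so the construction as written violates the lemma's $\norm{c^*}{\infty}$ claim. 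The paper avoids this by folding the affine correction into the importance density itself: the Appendix lemma on univariate approximation defines $v(a,b)$ to include mass proportional to $\one_{b\in[A,2A]}$ for the constant and linear terms, so that \emph{every} $c_i^* = v(a_i,b_i)/m$ inherits the $1/m$ factor uniformly. You would need to do the same — spread $\alpha_0 z + \beta_0$ over the region $b\in[R,2R]$ (say) inside $\rho_{\pm}$, rather than carving out special indices.

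A secondary issue: your truncation radius $R = C(\log d)^{k/2}$ gives only $\PP[|\hat p(x)| > R] \lesssim e^{-\Theta(R^{2/k})} = d^{-\Theta(1)}$, which is polynomial, not $d^{-\omega(1)}$ as you assert. Depending on the hidden constant this could dominate $d^{-\alpha}$. The paper sets $\beta = \Theta((\log d)^k)$, making the tail $e^{-\Theta((\log d)^2)}$ — genuinely super-polynomially small and safe to ignore. You should take $R = \Theta((\log d)^k)$ as well; this also matches the powers of $\log d$ appearing in the final bound.
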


Since the regularized loss is strongly convex in $c$, GD converges linearly to some $\hat \theta$ with $\hat L_2(\hat \theta) \lesssim \hat L_2(\theta^*)$ and $\norm{\hat c}{2} \lesssim \norm{c^*}{2}$. Finally, we invoke standard kernel Rademacher arguments to show that, since the link function $g$ is one-dimensional, $n = \widetilde \cO(1)$ sample suffice for generalization in this stage. Combining everything yields \Cref{thm:main_thm}. Proofs for stage 2 are deferred to \Cref{sec:stage2_proofs}.
\subsection{The Approximate Stein's Lemma}\label{sec:approximate_stein}


To conclude the full proof of \Cref{thm:main_thm}, it suffices to prove \Cref{lem:approximate_stein}. \Cref{lem:approximate_stein} can be interpreted as an approximate version of Stein's lemma, generalizing the result in \citet{nichani2023provable} to polynomials of degree $k > 2$. To understand this intuition, we first recall Stein's lemma:
\begin{lemma}[Stein's Lemma]
For any $g: \RR \rightarrow \RR$ and $g\in C^1$, one has
\begin{align*}
    \EE_{z \sim \cN(0, 1)}[z g(z)] = \EE_{z \sim \cN(0, 1)}[g'(z)].
\end{align*}
\end{lemma}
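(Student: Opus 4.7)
The plan is to proceed by a single integration by parts, using the elementary identity $\mu'(z) = -z\mu(z)$ satisfied by the standard Gaussian density $\mu(z) = (2\pi)^{-1/2}\exp(-z^2/2)$. Concretely, I would write
\[
\EE_{z\sim \cN(0,1)}[z\, g(z)] \;=\; \int_{\RR} z\, g(z)\, \mu(z)\,\dd z \;=\; -\int_{\RR} g(z)\, \mu'(z)\,\dd z,
\]
and then integrate by parts. The boundary term $\bigl[-g(z)\mu(z)\bigr]_{-\infty}^{+\infty}$ vanishes because $\mu(z)$ decays faster than any polynomial at $\pm\infty$, and the remaining integral is $\int_{\RR} g'(z)\mu(z)\,\dd z = \EE_{z \sim \cN(0,1)}[g'(z)]$, which is the claim.

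The only technical point to verify is that this integration by parts is legitimate, i.e., that $g(z)\mu(z)\to 0$ as $|z|\to\infty$ and that $g'(z)\mu(z)$ is integrable. Both hold under the (usually implicit) regularity assumption that $g$ and $g'$ grow at most polynomially, or more generally sub-exponentially of order strictly less than $2$; in the intended application within this paper $g$ is a polynomial, so these conditions are automatic. I do not expect any step of this argument to be a genuine obstacle, and there is no ``hard part" to isolate — the lemma is a classical one-line identity. It is included in the excerpt primarily to motivate the harder \Cref{lem:approximate_stein}, in which the polynomial feature $p(x)$ will play the role of the Gaussian variable $z$ and one must pay an additional $\cO(d^{-1/2})$ error because $p$ is only approximately Gaussian in the relevant projections.
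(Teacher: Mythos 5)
Your proof is correct and is the standard integration-by-parts derivation of Stein's identity; the paper itself states this lemma without proof, as a recalled classical fact used only to motivate \Cref{lem:approximate_stein}, so there is no in-paper argument to compare against. Your observation that the boundary terms vanish and that polynomial growth of $g$ (as in the paper's setting) suffices for the manipulation is exactly the right technical caveat.
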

Recall that the feature is of the form $p(x) = \frac{1}{\sqrt{L}}\sum_{i=1}^L \lambda_i \psi_i(x)$. Since each $\psi_i$ depends only on the projection of $x$ onto $\{v_{i, 1}, \dots, v_{i, J_i}\}$, and these vectors are orthonormal, the individual terms $\psi_i(x)$ are independent random variables. Furthermore they satisfy $\EE[\psi_i(x)] = 0$ and $\EE[\psi_i(x)^2] = 1$. Since $L = \Theta(d)$, the Central Limit Theorem tells us that in the $d \rightarrow \infty$ limit
\begin{align*}
     \frac{1}{\sqrt{L}}\sum_{i=1}^L \lambda_i \psi_i \rightarrow_d \cN(0, 1)
\end{align*}
when the $\lambda_i$ are balanced. The distribution of the feature $p$ is thus ``close" to a Gaussian. As a consequence, one expects that
\begin{align}\label{eq:stein_p}
    \EE_{x \sim \gamma}[p(x)g(p(x))] \approx \EE_{z \sim \cN(0, 1)}[z g(z)] = \EE_{z \sim \cN(0, 1)}[g'(z)].
\end{align}
Next, let $q$ be another degree $k$ polynomial such that $\norm{q}{L^2(\gamma)} = 1$ and $\langle p, q \rangle_{L^2(\gamma)} = 0$. For most $q$, we can expect that $(p, q)$ is approximately jointly Gaussian. In this case, $p$ and $q$ are approximately independent due to $\langle p, q \rangle_{L^2(\gamma)} = 0$, and as a consequence
\begin{align}\label{eq:stein_q}
    \EE_{x \sim \gamma}[q(x)g(p(x))] \approx \EE_{x \sim \gamma}[q(x)]\EE_{x \sim \gamma}[g(p(x))] = 0.
\end{align}
\eqref{eq:stein_p} and \eqref{eq:stein_q} imply that the degree $k$ polynomial $g\circ p$ has maximum correlation with is $p$, and thus
\[
\cP_k(g\circ p) \approx \EE_{z \sim \cN(0, 1)}[g'(z)]p.
\]
Similarly, if $q$ is a degree $<k$ polynomial, then since $\mathcal{P}_kp = p$ one has $\langle p, q\rangle_{L^2(\gamma)} = 0$. Again, we can expect that $p, q$ are approximately independent, which implies that $\langle h, q \rangle_{L^2(\gamma)} \approx 0$.

We remark that the preceding heuristic argument, and in particular the claim that $p$ and $q$ are approximately independent, is simply to provide intuition for \Cref{lem:approximate_stein}. The full proof of \Cref{lem:approximate_stein}, provided in \Cref{app:hermite_projection}, proceeds by expanding the polynomial $g\circ p$ into sums of products of monomials, and carefully analyzes the degree $k$ projection of each of the terms. 

\section{Experiments}\label{app:experiments}
We empirically verify \Cref{thm:main_thm}, and demonstrate that three-layer neural networks indeed learn hierarchical polynomials $g \circ p$ by learning to extract the feature $p$. 

Our experimental setup is as follows. The target feature is of the form $h=g\circ p$, $p(x) = \sum_{i=1}^d \lambda_i h_3(x_i)$, where the $\lambda_i$ are drawn i.i.d from $\left\{\pm \frac{1}{\sqrt{d}}\right\}$ uniformly, and the link function is $g(z) = C_dz^3$, where $C_d$ is a normalizing constant chosen so $\EE_x[h(x)^2] = 1$. Our architecture is the same ResNet-like architecture defined in \eqref{eq:three-layer-nn}, with activations $\sigma_1(z) = z^3$ and $\sigma_2 = \operatorname{ReLU}$. We additionally use the $\mu$P initialization~\citep{yang2021tensor}. For a chosen input dimension $d$ and sample size $n$, we choose hidden layer widths $m_1 = d^2$ and $m_2 = 1000$. We optimize the empirical square loss to convergence by simultaneously training all parameters $(u, s, V, a, b, c)$ using the Adam optimizer. We then compute the test loss of the learned predictor, as well as the correlation between the ``learned feature" (defined to be $g_{u,s,V}$) and the ``true feature" $p$ on these test points. 

In~\Cref{fig:experiments}, we plot both the test loss and feature correlation as a function of $n$, for $d \in \{16, 24, 32, 40\}$. We observe that, across varying values of depth, roughly $d^3$ samples are needed to learn $h$ up to near zero test error. Additionally, we observe that as $n$ grows past $d^3$, the correlation between the true feature and learned feature approaches $1$. This demonstrates that the network is indeed performing feature learning, and learns to fit $p$ using $g_{u, s, V}$ in order to learn the entire function. Overall, this demonstrates that our high-level insight that the sample complexity of learning $g \circ p$ is equal to the sample complexity of $p$, and that three-layer neural networks implement the more efficient algorithm of learning to first extract $p$ out of $g\circ p$, holds in the more realistic setting where all parameters of the network are trained jointly.

\begin{figure}
    \centering
    \includegraphics[width=0.47\textwidth]{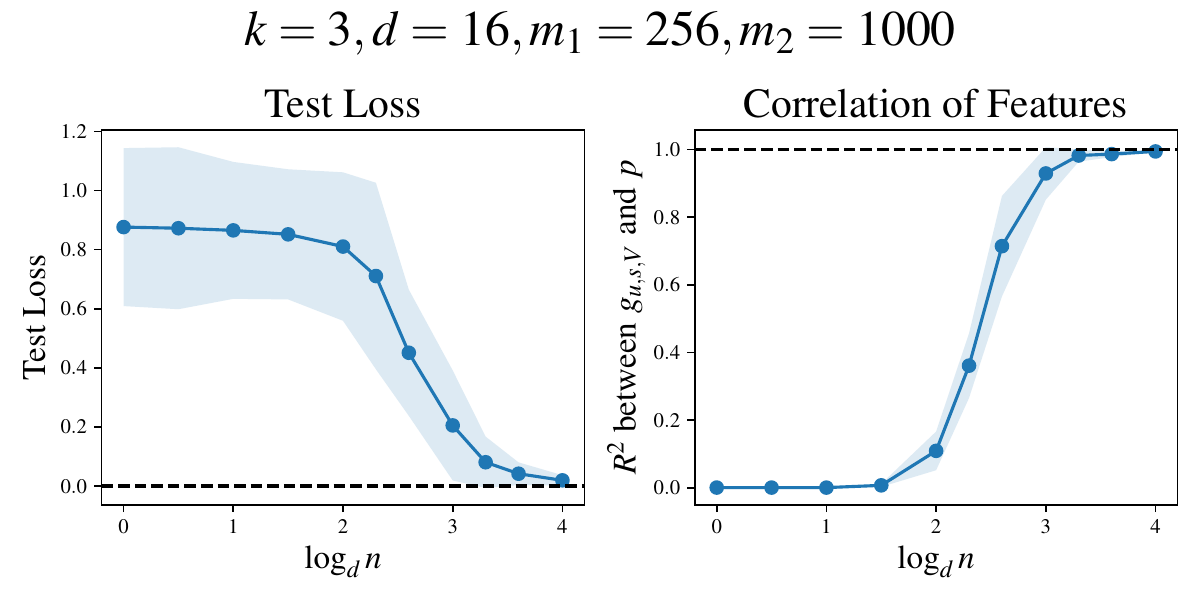}
    \includegraphics[width=0.47\textwidth]{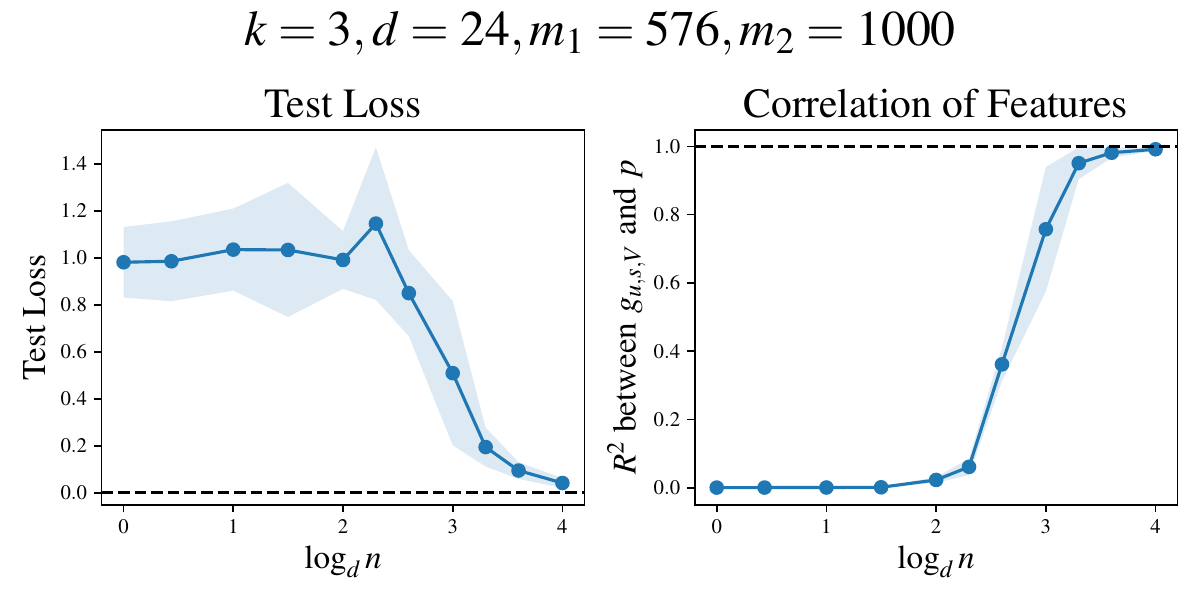}
    \includegraphics[width=0.47\textwidth]{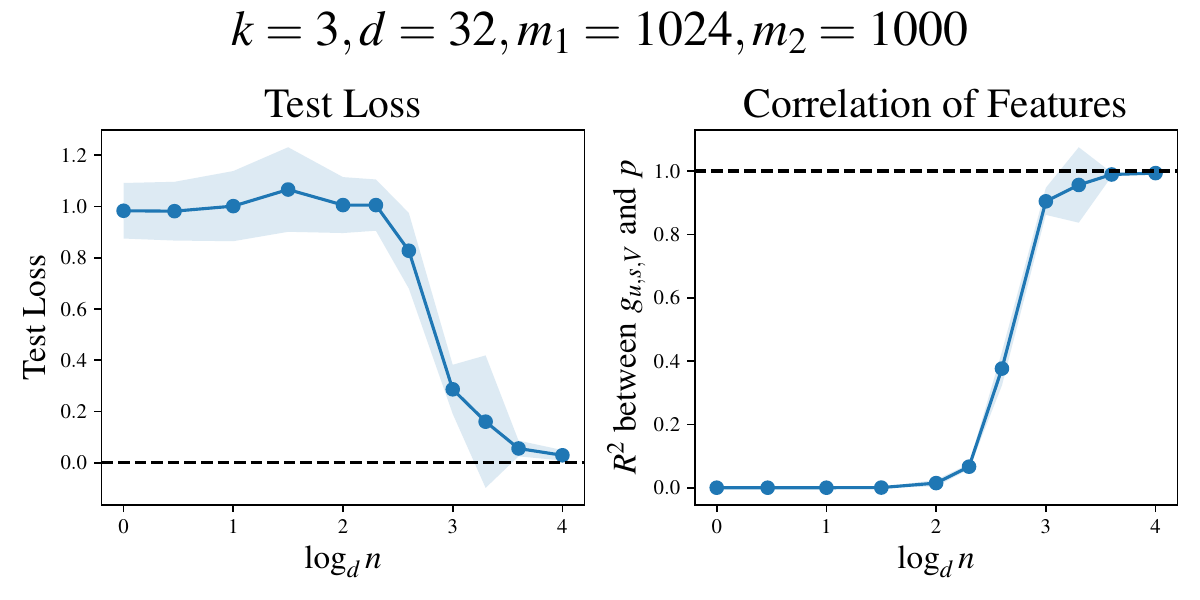}
    \includegraphics[width=0.47\textwidth]{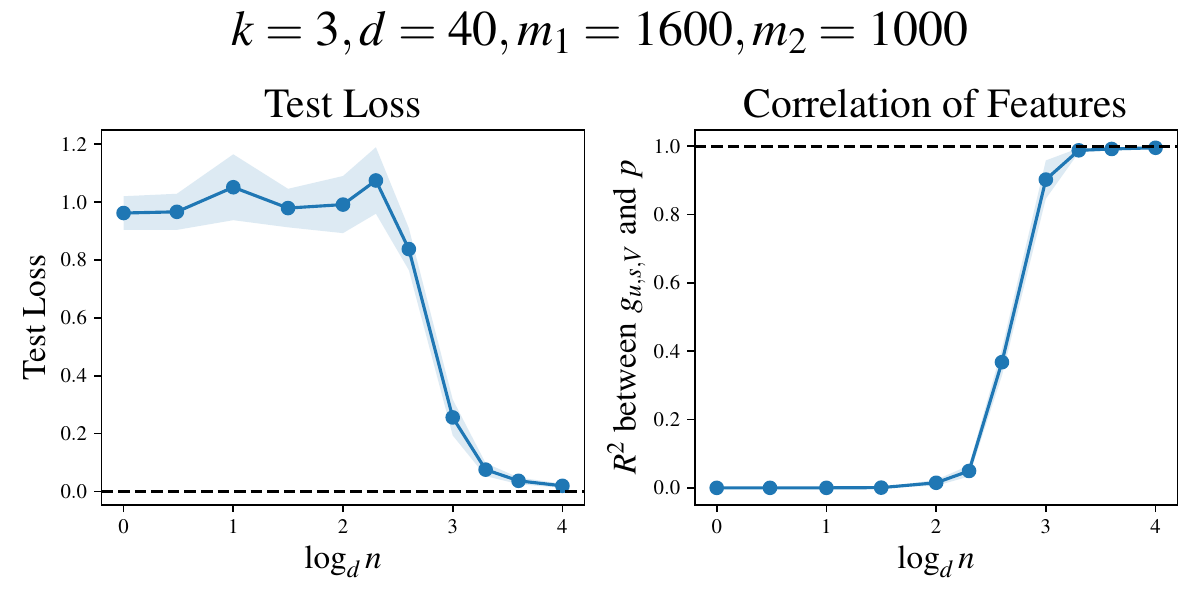}
    \caption{We train the ResNet architecture \eqref{eq:three-layer-nn} to learn the hierarchical polynomial $h = g \circ p$ when the degree of $p$ is $k=3$. We observe that the network learns the true feature $p$, as measured by the correlation between $g_{u, s, V}$ and $p$ (right panel of each figure). As a consequence, the network can learn $h$ in $d^3$ samples (left panel of each figure).}
    \label{fig:experiments}
\end{figure}

\textbf{Experimental Details.} Our experiments were written in JAX~\citep{jax2018github} and run on a single
NVIDIA RTX A6000 GPU.
\section{Discussion}

\subsection{Comparison to \citet{nichani2023provable}}\label{sec:sample_complexity_improvement}

In the case where $k = 2$ and the feature is a quadratic, \Cref{cor:quad} tells us that \Cref{alg:layerwise} requires $\widetilde\cO(d^2)$ samples to learn $h$, which matches the information-theoretic lower bound. This is an improvement over \citet{nichani2023provable}, which requires $\widetilde\Theta(d^4)$ samples. 

The key to this sample complexity improvement is that our algorithm runs GD for \emph{many steps} during the first stage to completely extract the feature $p(x)$, whereas the first stage in \citet{nichani2023provable} takes a single large gradient step, which can only weakly recover the true feature. Specifically, \citet{nichani2023provable} considers three-layer neural networks of the form $h_\theta(x) = a^{\top}\sigma_2(W\sigma_1(Vx)+b)$, and shows that after the first large step of GD on the population loss, the network satisfies $w_i^{\top} \sigma_1(Vx) \approx d^{-2}p(x).$ As a consequence, due to standard $1/\sqrt{n}$ concentration, $n = \widetilde\Omega(d^4)$ samples are needed to concentrate this term and recover the true feature. 

On the other hand, the first stage of \Cref{alg:layerwise} directly fits the best degree 2 polynomial to the target. It thus suffices to uniformly concentrate the loss landscape, which only requires $\widetilde\cO(d^2)$ samples as the learner is fitting a quadratic. Running GD for many steps is thus key to obtaining this optimal sample complexity. We remark that \citet{nichani2023provable} handles a slightly larger class of link functions $g$ (1-Lipschitz functions) and activations $\sigma_1$ (nonzero second Hermite coefficient).


\subsection{Layerwise Gradient Descent on Three-Layer Networks}

\Cref{alg:layerwise} takes advantage of the underlying hierarchical structure in $h$ to learn in $\widetilde\Theta(d^k)$ samples. Regular kernel methods, however, cannot utilize this hierarchical structure, and thus require $\widetilde\Theta(d^{kq})$ samples to learn $h$ up to vanishing error. Each stage of \Cref{alg:layerwise} implements a kernel method: stage 1 uses kernel regression to learn $p$ in $\widetilde\cO(d^k)$ samples, while stage 2 uses kernel regression to learn $g$ in $\widetilde\cO(1)$ samples. Crucially, however, \emph{our overall algorithm is not a kernel method}, and can learn hierarchical functions with a significantly improved sample complexity over naively using a single kernel method to learn the entire function. It is a fascinating question to understand which other tasks can be learned more efficiently via such layerwise GD. While \Cref{alg:layerwise} is layerwise, and thus amenable to analysis, it still reflects the ability of three-layer networks in practice to learn hierarchical targets; see \Cref{app:experiments} for experiments with more standard training procedures.


\subsection{Future Work}
In this work, we showed that three-layer neural networks are able to efficiently learn hierarchical polynomials of the form $h = g \circ p$, for a large class of degree $k$ polynomials $p$. An interesting direction is to understand whether our results can be generalized to \emph{all} degree $k$ polynomials. We conjecture that our results should still hold as long as $p$ is homogeneous and close in distribution to a Gaussian, which should be true for more general tensors $A$. Additionally, the target functions we consider depend on only a single hidden feature $p$. It is interesting to understand whether deep networks can efficiently learn targets that depend on multiple features, i.e. of the form $h(x) = g(p_1(x), \dots, p_R(x))$ for some $g: \RR^R \rightarrow \RR$.



\clearpage

\bibliography{ref}
\clearpage
\appendix
\begin{center}
    \noindent\rule{\textwidth}{4pt} \vspace{-0.2cm}
    
    \LARGE \textbf{Appendix} 
    
    \noindent\rule{\textwidth}{1.2pt}
\end{center}

\startcontents[sections]
\printcontents[sections]{l}{1}{\setcounter{tocdepth}{2}}

\section{Proof of \Cref{lem:approximate_stein}}\label{app:hermite_projection}

\subsection{Results for General Features}

In this subsection, we will consider the following feature class
\[
p(x)=\frac{1}{\sqrt{L}}\left(\sum_{i=1}^L \lambda_i \psi_i(x)\right)
\]
Recall our assumptions on $p$:
\feature*


Next, recall that the link function $g(z)=\sum_{0\leq i\leq q} g_iz^i$ satisfies $\sup_i \abs{g_i}=\cO(1)$ by \Cref{assume:coeffs}. Denote $h=g\circ p$. 
Due to Assumption \ref{assumption: zero expectation and information exponent}, we naturally have $\cP_0 h =\EE_{x\sim\gamma}\left[h(x)\right]=0$. Next,
we will prove the following two Lemmas, which directly implies \Cref{lem:approximate_stein}.

\begin{lemma}\label{lemma appendix A first projection}

Under all the assumptions above, we have \[
\norm{\cP_k h-\EE_{z\sim \cN(0,1)}\left[g'(z)\right] p}{L^2(\gamma)} = \cO(L^{-1/2})
\]
\end{lemma}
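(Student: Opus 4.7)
The plan is to expand $h = g \circ p$ into monomials of the $\psi_i$'s and carefully identify which terms survive the projection $\cP_k$. Writing $g(z) = \sum_{j=0}^{q} g_j z^j$ and applying the multinomial theorem,
\begin{equation*}
    \cP_k h \;=\; \sum_{j=1}^{q} g_j \, L^{-j/2} \sum_{(i_1,\dots,i_j) \in [L]^j} \lambda_{i_1} \cdots \lambda_{i_j} \, \cP_k\!\big[\psi_{i_1}\psi_{i_2}\cdots\psi_{i_j}\big].
\end{equation*}
I would organize the inner sum by the set partition of $\{1,\dots,j\}$ whose blocks record which positions share the same value of $i$. Since the $\{v_{i,l}\}$ are orthonormal across different $i$, the random variables $\{\psi_{a}(x)\}_{a}$ are jointly independent for distinct indices and Hermite tensor products factorize across disjoint coordinate blocks. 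Consequently, for block sizes $m_1,\dots,m_r$ and distinct block indices $a_1,\dots,a_r$,
\begin{equation*}
    \cP_k\Big[\prod_{\ell=1}^{r} \psi_{a_\ell}^{m_\ell}\Big] \;=\; \sum_{d_1+\cdots+d_r = k} \prod_{\ell=1}^{r} \cP_{d_\ell}\!\big[\psi_{a_\ell}^{m_\ell}\big],
\end{equation*}
where $\cP_0[\psi_{a}^{m}] = \EE[\psi_a^{m}]$ (equal to $1$ when $m=2$) and $\cP_k[\psi_a] = \psi_a$ while $\cP_d[\psi_a] = 0$ for any $d \neq k$ by the pure-degree-$k$ assumption on $\psi_a$.

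The dominant pattern will be the \emph{singleton-plus-pairs} partition: exactly one block of size $1$ contributing $\psi_a$ and $m$ blocks of size $2$ each contributing $\EE[\psi_{a_\ell}^2] = 1$, which requires $j = 2m+1$ to be odd (more than one singleton is impossible because singletons force $d_\ell = k$, overshooting $k$ in total). Counting the $(2m+1)!/2^m$ orderings, its contribution to $\cP_k[p^j]$ is
\begin{equation*}
    L^{-j/2} \cdot \frac{(2m+1)!}{2^m} \sum_{i_* \in [L]} \lambda_{i_*}\psi_{i_*} \sum_{\substack{\{j_1,\dots,j_m\} \subseteq [L]\setminus\{i_*\} \\ \text{distinct}}} \prod_{\ell=1}^{m} \lambda_{j_\ell}^2.
\end{equation*}
Using $\sum_i \lambda_i^2 = L$ (from the normalization $\EE[p^2] = 1$) and inclusion-exclusion to drop the distinctness constraint on $\{j_1, \dots, j_m\}$, the inner sum equals $\tfrac{L^m}{m!}$ up to $\cO(L^{m-1})$, while $\sum_{i_*} \lambda_{i_*}\psi_{i_*} = \sqrt{L}\, p$. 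The $L$-powers cancel, so this pattern contributes $\tfrac{(2m+1)!}{2^m m!} g_j\, p = j g_j (j-2)!!\, p$ to $\cP_k h$ with a residual of $L^2$-norm $\cO(L^{-1})$. Summing across odd $j$ and invoking the Gaussian moment identity $\EE_{z \sim \cN(0,1)}[z^{j-1}] = (j-2)!!$ for odd $j$ (and $0$ for even $j$) reconstitutes the leading term $\EE_{z \sim \cN(0,1)}[g'(z)]\, p$.

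It remains to bound the contribution of every other partition pattern in $L^2(\gamma)$. Any such pattern must have either multiple singletons (impossible), no singletons, or at least one non-pair block with $m_\ell \geq 3$; in the remaining cases one must keep at least one mean-zero Hermite factor $\cP_{d_\ell}[\psi_{a_\ell}^{m_\ell}]$ with $d_\ell > 0$ on a non-singleton block, which pays an extra power of $L^{-1/2}$ against the normalization. Combining the prefactor $L^{-j/2}$ with the $\cO(L^r)$ count of $r$-distinct-index assignments and $\sup_i |\lambda_i| = \cO(1)$, a direct computation shows that every such pattern contributes $\cO(L^{-1})$ to the squared $L^2$-norm. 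I expect the main obstacle to be this bookkeeping: distinct index assignments within a fixed pattern do not generally yield orthogonal outputs (two assignments that share an index produce correlated Hermite expansions), so the ``overlap'' cases must be enumerated carefully and shown to also be $\cO(L^{-1})$ in squared norm via further case analysis on which Hermite degrees are kept. Summing over the boundedly-many partition shapes (a function of $q$ only) and combining with the leading-term identity gives $\|\cP_k h - \EE_{z\sim\cN(0,1)}[g'(z)]\, p\|_{L^2(\gamma)} = \cO(L^{-1/2})$.
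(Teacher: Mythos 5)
Your proposal is correct and follows essentially the same route as the paper's proof: a multinomial expansion of $p^{j}$, the observation (via the factorization of Hermite projections across orthogonal subspaces) that at most one index can appear with exponent $1$, identification of the ``one singleton plus $m$ disjoint pairs with $d_\ell = 0$'' pattern as the dominant contribution, the double-factorial identity $\tfrac{(2m+1)!}{2^m m!}=(2m+1)!!$ reassembling $\EE_{z\sim\cN(0,1)}[g'(z)]$, and a counting-plus-hypercontractivity bound on all remaining patterns. The paper organizes the expansion by whether exactly one exponent equals $1$ (giving $\psi_{i}\,\cP_0(\text{rest})$) versus all exponents $\geq 2$ (giving a pure $\cP_k$ term), and then within the latter splits into ``some exponent $\geq 3$'' and ``all exponents equal $2$''; your set-partition framing with the degree-splitting formula $\cP_k\big[\prod_\ell \psi_{a_\ell}^{m_\ell}\big]=\sum_{d_1+\cdots+d_r=k}\prod_\ell\cP_{d_\ell}\big[\psi_{a_\ell}^{m_\ell}\big]$ is a somewhat cleaner bookkeeping of the same case structure, but it does not change the substance. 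The one place you leave sketchy --- the claim that every non-dominant pattern contributes $\cO(L^{-1})$ in squared norm, flagged as ``the main obstacle'' because overlapping index assignments are correlated --- is precisely where the paper spends the most effort: for the ``all pairs, some $d_\ell > 0$'' case it pairs up the $\cO(L^{2s-1})$ overlapping assignments against the $L^{-w}$ prefactor after using independence to kill the $\cO(L^{2s})$ disjoint ones, and for the ``some exponent $\geq 3$'' case it uses Gaussian hypercontractivity plus a raw count of at most $L^{\lceil w/2\rceil - 1}$ terms. Filling in that bookkeeping as the paper does would complete your argument.
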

\begin{lemma}\label{lemma: appendix A second projection}
Under all the assumptions above, for any $1\leq m \leq k-1 $ we have
\[
\norm{\cP_m h}{L^2(\gamma)} = \cO(L^{-1/2})
\]

\end{lemma}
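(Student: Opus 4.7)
The plan is to expand $h = g \circ p = \sum_{i=0}^q g_i p^i$ and, using linearity of $\cP_m$, reduce to the uniform bound $\norm{\cP_m(p^i)}{L^2(\gamma)} = \cO(L^{-1/2})$ for every $i \in \{2,\dots,q\}$; the cases $i=0,1$ are immediate, since $\cP_m(1)=0$ for $m \geq 1$ and $\cP_m(p) = 0$ whenever $m < k$ because $\cP_k p = p$. A triangle inequality with $\sup_i |g_i| = \cO(1)$ then finishes.

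For $i \geq 2$, expand
\[
p^i = L^{-i/2} \sum_{\vec j \in [L]^i} \lambda_{\vec j}\,\psi_{\vec j}, \qquad \lambda_{\vec j} := \prod_{s=1}^i \lambda_{j_s},\quad \psi_{\vec j} := \prod_{s=1}^i \psi_{j_s}.
\]
The key structural observation is that for any $\vec j$ in which some index $a$ has multiplicity \emph{exactly one}, one has $\cP_m(\psi_{\vec j}) = 0$ whenever $m < k$. Indeed, factor $\psi_{\vec j} = \psi_a \cdot \Phi$ where $\Phi$ is a polynomial in the directions $v_{b,\cdot}$ for $b \neq a$; by \Cref{assumption: on feature p in main text} the vectors $\{v_{a,j}\}$ are orthonormal, so $\Phi$ lives in the subspace orthogonal to $V_a := \mathrm{span}(v_{a,1},\dots,v_{a,J_a})$. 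The tensor-product decomposition of Hermite polynomials across orthogonal subspaces then yields
\[
\cP_m(\psi_a \cdot \Phi) = \sum_{d_1 + d_2 = m} \cP^{V_a}_{d_1}(\psi_a)\cdot \cP^{V_a^\perp}_{d_2}(\Phi),
\]
and since $\cP_k \psi_a = \psi_a$, each factor $\cP^{V_a}_{d_1}(\psi_a)$ vanishes unless $d_1 = k$, which is impossible for $d_1 \leq m < k$. Consequently, only \emph{good} tuples---those in which every distinct index appears with multiplicity at least $2$---contribute to $\cP_m(p^i)$; a good tuple of length $i$ has at most $\lfloor i/2 \rfloor$ distinct indices, so there are only $\cO(L^{\lfloor i/2 \rfloor})$ such multisets.

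The last step is to bound $\norm{\cP_m(p^i)}{L^2(\gamma)}^2$ by expanding each $\psi_a$ in the Hermite basis of $V_a$ and extending to the global orthonormal Hermite basis of $\RR^d$, then applying Parseval. Combining the combinatorial count of good multisets with $|\lambda_a|=\cO(1)$, $\EE[\psi_a^{2n}] = \cO(1)$ for constant $n$, and the normalization $\sum_a \lambda_a^2 = L$ (from $\EE[p^2]=1$), the prefactor $L^{-i/2}$ in the expansion of $p^i$ dominates the combinatorial count to give $\norm{\cP_m(p^i)}{L^2(\gamma)}^2 = \cO(L^{-1})$. The main obstacle is handling the cross-terms: two distinct good multisets that share some indices produce polynomials which are generally \emph{not} orthogonal in $L^2(\gamma)$, because the extra factors $\psi_b^{n_b}$ appearing in only one of them can have nonzero even moments. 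A naive triangle inequality over multisets therefore loses a factor of $L$; passing to the global Hermite basis and summing via Parseval converts the problem into a purely combinatorial counting that correctly absorbs these cross-terms and produces the desired $L^{-1}$ scaling.
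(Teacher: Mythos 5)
Your proposal is essentially the paper's own route: you identify the same key structural fact (a term containing some index with multiplicity exactly one vanishes under $\cP_m$ for $m < k$, via the Hermite tensor-product decomposition across orthogonal subspaces --- the paper's \Cref{lem:hermite_product}), reduce to good multisets with all multiplicities $\geq 2$, and rely on a combinatorial count together with the orthogonality of contributions from disjoint index sets. The only cosmetic difference is in the cross-term bookkeeping: the paper observes directly that disjoint good multisets yield independent mean-zero $\cP_m$-projections (hence orthogonal) and bounds the $\cO(L^{2s-1})$ overlapping pairs by Cauchy--Schwarz (and uses a cruder count plus Gaussian hypercontractivity for the subdominant terms with some multiplicity $>2$), whereas you propose to carry out the equivalent bookkeeping by expanding in the global Hermite basis and applying Parseval --- the same counting, slightly heavier packaging.
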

\begin{proof}[Proof of Lemma \ref{lemma appendix A first projection}]

Firstly, we will compute the Hermite degree $k$ components of $p(x)^w$, $w\geq 2$. From the definition of $\cP_k$ and multinomial expansion theorem, we know 
\begin{equation}\label{equation: expanding}
\begin{aligned}
    \cP_k (p(x)^w)&=\frac{1}{L^{w/2}}\left(\sum_{i=1}^L \lambda_i \psi_i(x)\cP_0\left(\sum_{z_i\geq 2, q, z_1+\dots+z_q=w-1,i_j\neq i}\frac{w!}{z_1!\dots z_q!}\lambda_{i_1}^{z_1}\dots\lambda_{i_q}^{z_q}\psi_{i_1}(x)^{z_1}\dots \psi_{i_q}(x)^{z_q}\right)\right)\\
    &\quad+\frac{1}{L^{w/2}}\cP_k\left(\sum_{z_i\geq 2,q,z_1+\dots+z_q=w,i_j}\frac{w!}{z_1!\dots z_q!}\lambda_{i_1}^{z_1}\dots\lambda_{i_q}^{z_q}\psi_{i_1}(x)^{z_1}\dots \psi_{i_q}(x)^{z_q}\right)
    \end{aligned}
\end{equation}
by expanding $\left(\frac{1}{\sqrt{L}}\left(\sum_{1\leq i \leq L} \lambda_i \psi_i(x)\right)\right)^w$ and computing the projection for each term.
The key observation that leads to \eqref{equation: expanding} is the following: 
\begin{lemma}\label{lem:hermite_product}
    Let $\phi_1, \phi_2 \in L^2(\gamma)$ be two functions such that $\phi_1$ lies in the span of degree $k_1$ Hermite polynomials and $\phi_2$ lies in the span of degree $k_2$ Hermite polynomials. That is to say, $\cP_{k_i}\phi_i = \phi_i$ for $i=1,2$.
    
    If $\phi_1, \phi_2$ only depend on the projection of $x$ onto subspaces $V_1, V_2$ respectively, and $V_1, V_2$ are orthogonal to each other, i.e $V_1V_2^{\top} = 0$, then $\cP_{k_1 + k_2}(\phi_1\phi_2) = \phi_1\phi_2$.
\end{lemma}
\Cref{lem:hermite_product} follows directly from the fact that the $d$-dimensional Hermite basis is formed from taking products of the $1$-dimensional Hermite basis elements.

In the above expansion, if there are two indices $i_1,i_2$ each with exponent 1, then we get a $\psi_{i_1}(x)\psi_{i_2}(x)\prod_{j\geq 3} \psi_{{i_j}}(x)^{z_j}$ term. By \Cref{lem:hermite_product}, this term is a polynomial with Hermite degree at least $2k$. Equivalently\[
\cP_k\left( \psi_{i_1}(x)\psi_{i_2}(x)\prod_{j\geq 3} \psi_{{i_j}}(x)^{z_j}\right)=0.
\] This is because $\psi_i(x)$ only depends on $v_{i,1}^{\top}x,\dots,v_{i,J_i}^{\top}x$ and $\{v_{i,j}\}_{i\in [L],j\in [J_i]}$ are orthogonal vectors. Similarly, for terms of the form $\psi_{i_1}(x)\prod_{j\geq 2} \psi_{{i_j}}(x)^{z_j}$, we have that
\[
\cP_k\left(\psi_{i_1}(x)\prod_{j\geq 2} \psi_{{i_j}}(x)^{z_j}\right) = \psi_{i_1}(x)\cP_0\left(\prod_{j\geq 2} \psi_{{i_j}}(x)^{z_j}\right).
\]
Altogether, this gives \eqref{equation: expanding} above.



Let us firstly compute the $\cP_0$ terms in the above equation \eqref{equation: expanding}.

\paragraph{Case I.}Firstly consider the case that $w$ is odd and $w=2s+1$. Then we have
\[
\begin{aligned}
&\sum_{z_i\geq 2, q, z_1+\dots+z_q=w-1,i_j\neq i}\frac{w!}{z_1!\dots z_q!}\lambda_{i_1}^{z_1}\dots\lambda_{i_q}^{z_q}\psi_{i_1}(x)^{z_1}\dots \psi_{i_q}(x)^{z_q}
=\sum_{i_j\neq i}\frac{w!}{2^s}\lambda_{i_1}^{2}\dots\lambda_{i_s}^{2}\psi_{i_1}(x)^{2}\dots \psi_{{i_s}}(x)^{2}\\&\quad\quad\quad+\sum_{z_i\geq 2, q<s, z_1+\dots+z_q=w-1,i_j\neq i}\frac{w!}{z_1!\dots z_q!}\lambda_{i_1}^{z_1}\dots\lambda_{i_q}^{z_q}\psi_{i_1}(x)^{z_1}\dots \psi_{i_q}(x)^{z_q}
\end{aligned}
\]
For the first term, we have 
\begin{equation}\label{equation: first term first part}
\cP_0\left(\sum_{i_j\neq i}\frac{w!}{2^s}\lambda_{i_1}^{2}\dots\lambda_{i_s}^{2}\psi_{i_1}(x)^{2}\dots \phi_{S_{i_s}}(x)^{2}\right)=\sum_{i_j\neq i}\frac{w!}{2^s}\lambda_{i_1}^{2}\dots\lambda_{i_s}^{2}\EE\left[\psi_{i_1}(x)^{2}\dots \psi_{{i_s}}(x)^{2}\right]=\frac{w!}{2^s}\sum_{i_j\neq i} \lambda_{i_1}^2\dots\lambda_{i_s}^2
\end{equation}
For the second term, we count the number of monomials to get \begin{equation}\label{equa: new sparse parity term number small}
\begin{aligned}
&\abs{\cP_0\left(\sum_{z_i\geq 2, q<s, z_1+\dots+z_q=w-1,i_j\neq i}\frac{w!}{z_1!\dots z_q!}\lambda_{i_1}^{z_1}\dots\lambda_{i_q}^{z_q}\psi_{i_1}(x)^{z_1}\dots \psi_{i_q}(x)^{z_q}\right)}\\& \quad\quad\quad\quad\leq\sum_{z_i\geq 2, q<s, z_1+\dots+z_q=w-1,i_j\neq i}\frac{w!}{z_1!\dots z_q!}\abs{\lambda_{i_1}^{z_1}\dots\lambda_{i_q}^{z_q}\EE\left[\psi_{i_1}(x)^{z_1}\dots \psi_{i_q}(x)^{z_q}\right]}\\
&\quad\quad\quad\quad\lesssim \sum_{z_i\geq 2, q<s, z_1+\dots+z_q=w-1,i_j\neq i}\abs{\lambda_{i_1}^{z_1}\dots \lambda_{i_q}^{z_q}}\\
&\quad\quad\quad\quad\lesssim L^{s-1}
\end{aligned}
\end{equation}
In the second inequality, we use Gaussian hypercontractivity, Lemma \ref{lemma: gaussian hypercontractivity}.

Combining equation \eqref{equation: first term first part} and \eqref{equa: new sparse parity term number small} together, 
and noticing that \[
\abs{\sum_{i_j\neq i} \lambda_{i_1}^2\dots\lambda_{i_s}^2-\sum_{i_j} \lambda_{i_1}^2\dots\lambda_{i_s}^2} \leq s\lambda_i^2 \sum_{i_j\neq i} \lambda_{i_1}^2\dots\lambda_{i_{s-1}}^2\lesssim L^{s-1}
\] which can help us substitute $\sum_{i_j} \lambda_{i_1}^2\dots\lambda_{i_s}^2$ for $\sum_{i_j\neq i} \lambda_{i_1}^2\dots\lambda_{i_s}^2$,
we can have
\[
\begin{aligned}
&\frac{1}{L^{w/2}}\left(\sum_{i=1}^L \lambda_i \psi_{i}(x)\cP_0\left(\sum_{z_i\geq 2, q, z_1+\dots+z_q=w-1,i_j\neq i}\frac{w!}{z_1!\dots z_q!}\lambda_{i_1}^{z_1}\dots\lambda_{i_q}^{z_q}\psi_{i_1}(x)^{z_1}\dots \psi_{i_q}(x)^{z_q}\right)\right)\\
&\quad\quad\quad =\frac{1}{L^{w/2}}\left(\frac{w!}{2^s}\sum_{i_j} \lambda_{i_1}^2\dots\lambda_{i_s}^2\right)\left(\sum_{i=1}^L \lambda_i\psi_{i}(x)\left(1+K_i\right)\right)
\end{aligned}
\]
where $\sup_i\abs{K_i} \lesssim 1/L$.

\paragraph{Case II.}

Secondly we will consider the case that $w$ is even and denote $w=2s$. In that case, we observe that
\[
\begin{aligned}
&\sum_{z_i\geq 2, q, z_1+\dots+z_q=w-1,i_j\neq i}\frac{w!}{z_1!\dots z_q!}\lambda_{i_1}^{z_1}\dots\lambda_{i_q}^{z_q}\psi_{i_1}(x)^{z_1}\dots \psi_{i_q}(x)^{z_q}
\\&\quad\quad\quad=\sum_{z_i\geq 2, q<s, z_1+\dots+z_q=w-1,i_j\neq i}\frac{w!}{z_1!\dots z_q!}\lambda_{i_1}^{z_1}\dots\lambda_{i_q}^{z_q}\psi_{i_1}(x)^{z_1}\dots \psi_{i_q}(x)^{z_q}
\end{aligned}
\]
By a similar argument like equation \eqref{equa: new sparse parity term number small},
\[
\sup_{1\leq i \leq L}\abs{\sum_{z_i\geq 2, q<s, z_1+\dots+z_q=w-1,i_j\neq i}\frac{w!}{z_1!\dots z_q!}\lambda_{i_1}^{z_1}\dots\lambda_{i_q}^{z_q}\EE\left[\psi_{i_1}(x)^{z_1}\dots \psi_{i_q}(x)^{z_q}\right]}\lesssim L^{s-1}
\]
Therefore, we have the following bound for the $\cP_0$ terms in our equation \eqref{equation: expanding}.
\[
\begin{aligned}
&\frac{1}{L^{w/2}}\left(\sum_{i=1}^L \lambda_i \psi_{i}(x)\cP_0\left(\sum_{z_i\geq 2, q, z_1+\dots+z_q=w-1,i_j\neq i}\frac{w!}{z_1!\dots z_q!}\lambda_{i_1}^{z_1}\dots\lambda_{i_q}^{z_q}\psi_{i_1}(x)^{z_1}\dots \psi_{i_q}(x)^{z_q}\right)\right)\\
&\quad\quad\quad =\sum_{i=1}^L \lambda_iK_i\psi_{i}(x)
\end{aligned}
\]
where $\sup_i\abs{K_i} \lesssim 1/L$.

Then let us compute the $\cP_k$ terms. Firstly, we divide the monomials into two groups
\[
\begin{aligned}
&\sum_{z_i\geq 2,q,z_1+\dots+z_q=w,i_j}\frac{w!}{z_1!\dots z_q!}\lambda_{i_1}^{z_1}\dots\lambda_{i_q}^{z_q}\psi_{i_1}(x)^{z_1}\dots \psi_{i_q}(x)^{z_q}\\&\quad=\sum_{z_i\geq 2,2q<w,z_1+\dots+z_q=w,i_j}\frac{w!}{z_1!\dots z_q!}\lambda_{i_1}^{z_1}\dots\lambda_{i_q}^{z_q}\psi_{i_1}(x)^{z_1}\dots \psi_{i_q}(x)^{z_q}+\sum_{2q=w,i_j}\frac{w!}{2^q}\lambda_{i_1}^{2}\dots\lambda_{i_q}^{2}\psi_{i_1}(x)^{2}\dots \psi_{i_q}(x)^{2}
\end{aligned}
\]
For the first group, we have the following
\[
\begin{aligned}
&\norm{\cP_k\left(\sum_{z_i\geq 2,2q<w,z_1+\dots+z_q=w,i_j}\frac{w!}{z_1!\dots z_q!}\lambda_{i_1}^{z_1}\dots\lambda_{i_q}^{z_q}\psi_{i_1}(x)^{z_1}\dots \psi_{i_q}(x)^{z_q}\right)}{L^2(\gamma)}^2\\
&\quad\quad\quad\quad\quad \leq \norm{\sum_{z_i\geq 2,2q<w,z_1+\dots+z_q=w,i_j}\frac{w!}{z_1!\dots z_q!}\lambda_{i_1}^{z_1}\dots\lambda_{i_q}^{z_q}\psi_{i_1}(x)^{z_1}\dots \psi_{i_q}(x)^{z_q}}{L^2(\gamma)}^2\\
&\quad\quad\quad\quad\quad \leq (wL)^{\lceil w/2 \rceil-1}\sum_{z_i\geq 2,2q<w,z_1+\dots+z_q=w,i_j}\norm{\frac{w!}{z_1!\dots z_q!}\lambda_{i_1}^{z_1}\dots\lambda_{i_q}^{z_q}\psi_{i_1}(x)^{z_1}\dots \psi_{i_q}(x)^{z_q}}{L^2(\gamma)}^2\\
&\quad\quad\quad\quad\quad \lesssim L^{2\lceil w/2 \rceil-2}
\end{aligned}
\]
In the second equality we use Gaussian hypercontractivity, Lemma \ref{lemma: gaussian hypercontractivity}.

For the second group, we have that 
\[
\begin{aligned}
& \norm{\cP_k\left(\sum_{i_l}\frac{w!}{2^s}\lambda_{i_1}^{2}\dots\lambda_{i_s}^{2}\psi_{i_1}(x)^{2}\dots \psi_{{i_s}}(x)^{2}\right)}{L^2(\gamma)}^2=\norm{\sum_{i_l}\cP_k\left(\frac{w!}{2^s}\lambda_{i_1}^{2}\dots\lambda_{i_s}^{2}\psi_{i_1}(x)^{2}\dots \psi_{{i_s}}(x)^{2}\right)}{L^2(\gamma)}^2\\
&\quad\quad\quad =\left(\frac{w!}{2^s}\right)^2\sum_{i_l}\sum_{j_l}\langle\cP_k\left(\lambda_{i_1}^{2}\dots\lambda_{i_s}^{2}\psi_{i_1}(x)^{2}\dots \psi_{{i_s}}(x)^{2}\right),\cP_k\left(\lambda_{j_1}^{2}\dots\lambda_{j_s}^{2}\psi_{{j_1}}(x)^{2}\dots \psi_{{j_s}}(x)^{2}\right)\rangle_{L^2(\gamma)}\\
&\quad\quad\quad =\left(\frac{w!}{2^s}\right)^2\sum_{i_l,j_l,\{i_l\}\bigcap\{j_l\}\neq \varnothing}\langle\cP_k\left(\lambda_{i_1}^{2}\dots\lambda_{i_s}^{2}\psi_{i_1}(x)^{2}\dots \psi_{{i_s}}(x)^{2}\right),\cP_k\left(\lambda_{j_1}^{2}\dots\lambda_{j_s}^{2}\psi_{{j_1}}(x)^{2}\dots \psi_{{j_s}}(x)^{2}\right)\rangle_{L^2(\gamma)}\\
&\quad\quad\quad \leq \left(\frac{w!}{2^s}\right)^2 s^2L^{2s-1}\sup_{i_l} \norm{\cP_k\left(\lambda_{i_1}^{2}\dots\lambda_{i_s}^{2}\psi_{i_1}(x)^{2}\dots \psi_{{i_s}}(x)^{2}\right)}{L^2(\gamma)}^2\\
&\quad\quad\quad \lesssim L^{w-1}
\end{aligned}
\]
From the second line to the third line, we use the fact that if $\{i_l\}\bigcap \{j_l\} = \varnothing$, then $\cP_k\left(\lambda_{i_1}^{2}\dots\lambda_{i_s}^{2}\psi_{i_1}(x)^{2}\dots \psi_{{i_s}}(x)^{2}\right)$ and $\cP_k\left(\lambda_{j_1}^{2}\dots\lambda_{j_s}^{2}\psi_{{j_1}}(x)^{2}\dots \psi_{{j_s}}(x)^{2}\right)$ are two independent mean-zero random variables. Also, the third line to the fourth line is just counting the number of pairs of tuples with nonempty intersections.
The fourth line to the fifth line is using gaussian hypercontractivity, Lemma \ref{lemma: gaussian hypercontractivity}, to bound the moments.

In a word, we have derived for any $k\geq 2$, and any $w\geq 2$ that
\[
\norm{\frac{1}{L^{w/2}}\cP_k\left(\sum_{z_i\geq 2,q,z_1+\dots+z_q=w,i_j}\frac{w!}{z_1!\dots z_q!}\lambda_{i_1}^{z_1}\dots\lambda_{i_q}^{z_q}\psi_{i_1}(x)^{z_1}\dots \psi_{i_q}(x)^{z_q}\right)}{L^2(\gamma)}=\cO(L^{-1/2})
\]

Sum up all the derivations above, and we get the following conclusion.
\begin{lemma}
Given $k\geq 2$,
\begin{itemize}
\item When $w=2s+1$ with $s\geq 1$, we have\[
\norm{\cP_k (p(x)^w)-\frac{w!}{2^sL^s}\left(\sum_{i_j}\lambda_{i_1}^2\dots\lambda_{i_s}^2\right)p(x)}{L^2(\gamma)}=\cO(L^{-1/2})
\]

\item When $w=2s$ with $s\geq 1$, we have\[
\norm{\cP_k (p(x)^w)}{L^2(\gamma)}=\cO(L^{-1/2})
\]
\end{itemize}
\end{lemma}

Recall our $g(z)=\sum_{0\leq i\leq q }g_iz^i$.
After the projection, the feature that we get is approximately
$
\left(\sum_{s} \frac{1}{2^sL^s}(2s+1)!g_{2s+1}\left(\sum_{i_j} \lambda_{i_1}^2\dots\lambda_{i_s}^2\right)\right)p
$.
Precisely speaking, we have 
\begin{equation}\label{equation: approximate stein almost finish}
\norm{\cP_k h-\left(\sum_{s} \frac{1}{2^sL^s}(2s+1)!c_{2s+1}\left(\sum_{i_j} \lambda_{i_1}^2\dots\lambda_{i_s}^2\right)\right)p}{L^2(\gamma)} = \cO(L^{-1/2})
\end{equation}

Let's recall $\sum_i \lambda_i^2=L$, so that informally speaking, we expect $p(x)\sim \cN(0,1)$ in a limiting sense due to central limit theorem when $L$ is large and $\lambda_i$ are somehow balanced.
Again, from the main text, it is tempting to conjecture some kind of approximated Stein's Lemma like
\[
\cP_k (g\circ p) \approx \EE_{z\sim \cN(0,1)}\left[g'(z)\right] p
\]
Now we will verify this is indeed right.
In our case, the derivative of $g$ is 
$g'(z)=g_1+2g_2z+3g_3z^2+\dots+qg_qz^{q-1}$, and we can compute that
$
\EE_{z\sim \cN(0,1)}\left[g'(z)\right]
=\sum_{s}g_{2s+1}(2s+1)!!
$.
Furthermore, we have\[
L^s=\left(\sum_i\lambda_i^2\right)^s=\cO(L^{s-1})+s!\left(\sum_{i_j}\lambda_{i_1}^2\dots\lambda_{i_s}^2\right)
\]
And as a direct consequence, we have 
\[
\frac{1}{2^sL^s}(2s+1)!g_{2s+1}\left(\sum_{i_j} \lambda_{i_1}^2\dots \lambda_{i_s}^2\right)=(2s+1)!!g_{2s+1}+\cO(L^{-1})
\]
Simply plugging the above equation in equation \eqref{equation: approximate stein almost finish}, we get our final result.
\end{proof}

\begin{proof}[Proof of Lemma \ref{lemma: appendix A second projection}]
Firstly, we compute the hermite degree $m$ components of $p(x)^w$, $w\geq 2$. From the definition of $\cP_m$ and multinomial theorem, we know
\[
\cP_m(p(x)^w)=\frac{1}{L^{w/2}}\cP_m\left(\sum_{z_i\geq 2,q,z_1+\dots+z_q=w,i_j}\frac{w!}{z_1!\dots z_q!}\lambda_{i_1}^{z_1}\dots\lambda_{i_q}^{z_q}\psi_{i_1}(x)^{z_1}\dots \psi_{i_q}(x)^{z_q}\right)
\]
by expanding $\left(\frac{1}{\sqrt{L}}\left(\sum_{1\leq i \leq L} \lambda_i \psi_i(x)\right)\right)^w$ and computing the projection for each term. In the above expansion, if there is one index $i_1$ with exponent 1, then we get a $\psi_{i_1}(x)\prod_{j\geq 2} \psi_{{i_j}}(x)^{z_j}$ term. By \Cref{lem:hermite_product}, this term is a polynomial with Hermite degree at least $k$. As a result, \[
\cP_m\left( \psi_{i_1}(x)\prod_{j\geq 2} \psi_{{i_j}}(x)^{z_j}\right)=0.
\] This is because $\psi_i(x)$ only depends on $v_{i,1}^{\top}x,\dots,v_{i,J_i}^{\top}x$ and $\{v_{i,j}\}_{i\in [L],j\in [J_i]}$ are orthogonal vectors. 

Firstly, notice that
\[
\begin{aligned}
&\sum_{z_i\geq 2,q,z_1+\dots+z_q=w,i_j}\frac{w!}{z_1!\dots z_q!}\lambda_{i_1}^{z_1}\dots\lambda_{i_q}^{z_q}\psi_{i_1}(x)^{z_1}\dots \psi_{i_q}(x)^{z_q}\\&\quad=\sum_{z_i\geq 2,2q<w,z_1+\dots+z_q=w,i_j}\frac{w!}{z_1!\dots z_q!}\lambda_{i_1}^{z_1}\dots\lambda_{i_q}^{z_q}\psi_{i_1}(x)^{z_1}\dots \psi_{i_q}(x)^{z_q}+\sum_{2q=w,i_j}\frac{w!}{2^q}\lambda_{i_1}^{2}\dots\lambda_{i_q}^{2}\psi_{i_1}(x)^2\dots \psi_{i_q}(x)^2
\end{aligned}
\]
For the first term, we have the following estimation
\[
\begin{aligned}
&\norm{\cP_m\left(\sum_{z_i\geq 2,2q<w,z_1+\dots+z_q=w,i_j}\frac{w!}{z_1!\dots z_q!}\lambda_{i_1}^{z_1}\dots\lambda_{i_q}^{z_q}\psi_{i_1}(x)^{z_1}\dots \psi_{i_q}(x)^{z_q}\right)}{L^2(\gamma)}^2\\
&\quad\quad\quad\quad\quad \leq \norm{\sum_{z_i\geq 2,2q<w,z_1+\dots+z_q=w,i_j}\frac{w!}{z_1!\dots z_q!}\lambda_{i_1}^{z_1}\dots\lambda_{i_q}^{z_q}\psi_{i_1}(x)^{z_1}\dots \psi_{i_q}(x)^{z_q}}{L^2(\gamma)}^2\\
&\quad\quad\quad\quad\quad \lesssim d^{\lceil w/2 \rceil-1}\sum_{z_i\geq 2,2q<w,z_1+\dots+z_q=w,i_j}\norm{\frac{w!}{z_1!\dots z_q!}\lambda_{i_1}^{z_1}\dots\lambda_{i_q}^{z_q}\psi_{i_1}(x)^{z_1}\dots \psi_{i_q}(x)^{z_q}}{L^2(\gamma)}^2\\
&\quad\quad\quad\quad\quad \lesssim d^{2\lceil w/2 \rceil-2}
\end{aligned}
\]
From the third line to the fourth line we use Gaussian hypercontractivity, Lemma \ref{lemma: gaussian hypercontractivity} in Appendix \ref{subappendix: gaussian hypercontractivity} to bound the high order moments of hermite polynomials.
And for the second term, we only need to consider the case that $w=2s$ is even. In that case,
\[
\begin{aligned}
& \norm{\cP_m\left(\sum_{i_l}\frac{w!}{2^s}\lambda_{i_1}^{2}\dots\lambda_{i_s}^{2}\psi_{i_1}(x)^2 \dots \psi_{i_s}(x)^2\right)}{L^2(\gamma)}^2=\norm{\sum_{i_l}\cP_m\left(\frac{w!}{2^s}\lambda_{i_1}^{2}\dots\lambda_{i_s}^{2}\psi_{i_1}(x)^2 \dots \psi_{i_s}(x)^2\right)}{L^2(\gamma)}^2\\
&\quad\quad =\left(\frac{w!}{2^s}\right)^2\sum_{i_l}\sum_{j_l}\langle\cP_m\left(\lambda_{i_1}^{2}\dots\lambda_{i_s}^{2}\psi_{i_1}(x)^2 \dots \psi_{i_s}(x)^2\right),\cP_m\left(\lambda_{j_1}^{2}\dots\lambda_{j_s}^{2}\psi_{j_1}(x)^2\dots \psi_{j_s}(x)^2\right)\rangle_{L^2(\gamma)}\\
&\quad\quad =\left(\frac{w!}{2^s}\right)^2\sum_{i_l,j_l,\{i_l\}\bigcap\{j_l\}\neq \varnothing}\langle\cP_m\left(\lambda_{i_1}^{2}\dots\lambda_{i_s}^{2}\psi_{i_1}(x)^2 \dots \psi_{i_s}(x)^2\right),\cP_m\left(\lambda_{j_1}^{2}\dots\lambda_{j_s}^{2}\psi_{j_1}(x)^2\dots \psi_{j_s}(x)^2\right)\rangle_{L^2(\gamma)}\\
&\quad\quad \lesssim  \sup_{i_l} \norm{\cP_m\left(\lambda_{i_1}^{2}\dots\lambda_{i_s}^{2}\psi_{i_1}(x)^2 \dots \psi_{i_s}(x)^2\right)}{L^2(\gamma)}^2 d^{2s-1}\\
&\quad\quad \lesssim d^{w-1}
\end{aligned}
\]
From the second line to the third line, we use the fact that if $\{i_l\}\bigcap \{j_l\} = \varnothing$, then $\cP_m\left(\lambda_{i_1}^{2}\dots\lambda_{i_s}^{2}\psi_{i_1}(x)^2 \dots \psi_{i_s}(x)^2\right)$ and $\cP_m\left(\lambda_{j_1}^{2}\dots\lambda_{j_s}^{2}\psi_{j_1}(x)^2\dots \psi_{j_s}(x)^2\right)$ are two independent mean-zero random variables. From the third line to the fourth line, we are just counting the number of pairs of tuples with nonempty intersection which is $\cO(d^{2s-1})$.

In a word, we have derived that
\[
\norm{\frac{1}{L^{w/2}}\cP_m\left(\sum_{z_i\geq 2,q,z_1+\dots+z_q=w,i_j}\frac{w!}{z_1!\dots z_q!}\lambda_{i_1}^{z_1}\dots\lambda_{i_q}^{z_q}\psi_{i_1}(x)^{z_1}\dots \psi_{i_q}(x)^{z_q}\right)}{L^2(\gamma)}=\cO(L^{-1/2})
\]

Write $g(z)=\sum_{0\leq i\leq q} g_i z^i$ and sum over all the terms, and we get the desired result. 
\end{proof}

\subsection{Special Cases}
\paragraph{Orthogonal Decomposable Tensors.}

   Firstly, we will consider the case that $p(x):=\langle A, He_k(x)\rangle$ and $A$ is an orthogonal decomposable tensor
   \[
   A=\frac{1}{\sqrt{L}}\left(\sum_{i=1}^L \lambda_i v_i^{\otimes k}\right)
   \]
   where $\langle v_i,v_j \rangle = \delta_{ij}$. Using identities for the Hermite polynomials (\Cref{app: hermite polynomial}), one can rewrite the feature as
\[
p(x)=\frac{1}{\sqrt{L}}\left(\sum_{i=1}^L\lambda_i \langle v_i^{\otimes k}, He_k(x)\rangle\right)=\frac{1}{\sqrt{L}}\left(\sum_{i=1}^L \lambda_i h_k(v_i^{\top}x)\right)
\]
This kind of feature satisfies \Cref{assumption: on feature p in main text} with $J_i=1$ for all $i$, if we further assume the regularity conditions $\sup_{i}\abs{\lambda_i} = \cO(1)$ and $\sum_i \lambda_i^2 = L$.
\paragraph{Sum of Sparse Parities.}
Secondly, we will consider the case that \[
    A=\frac{1}{\sqrt{L}}\left(\sum_{i=1}^L \lambda_i\cdot v_{i,1}\otimes\dots\otimes v_{i,k}\right)
    \]
    where $\langle v_{i_1,j_1}, v_{i_2,j_2}\rangle = \delta_{i_1i_2}\delta_{j_1j_2}$.
    In that case, our feature can be rewritten as
\[
p(x)=\frac{1}{\sqrt{L}}\left(\sum_{i=1}^L \lambda_i \langle v_{i,1}\otimes\dots\otimes v_{i,k},He_k(x)\rangle\right)=\frac{1}{\sqrt{L}}\left(\sum_{i=1}^L \lambda_i\left(\prod_{j=1}^k \langle v_{i,j},x\rangle\right)\right)
\]
This kind of feature also satisfies \Cref{assumption: on feature p in main text} with $J_i=k$ for all $i$, if we further assume the regularity conditions $\sup_i\abs{\lambda_i} = \cO(1)$ and $\sum_i \lambda_i^2 = L$.

For a concrete example, when $v_{i,j}=e_{k(i-1)+j}$ and $L=d/k$, \[p(x)=\frac{1}{\sqrt{d/k}}\left(\lambda_1x_1x_2\dots x_k + \dots + \lambda_{d/k}x_{d-k+1}\dots x_{d}\right)\]
and hence the name ``sum of sparse parities".

\section{Proof of \Cref{thm:kernel_stage1}}
\label{sec:stage1_proofs}

The goal in this appendix is to prove \Cref{thm:kernel_stage1}, which is restated below:
\stageone*

\paragraph{Proof Outline.}

Throughout the first stage of \Cref{alg:layerwise}, $c$ remains at $0$. Consequently, during this stage, the network is given by
\[ g_{u,s,V}(x) = u^{\top} \sigma_1(Vx+s) \]
where $\sigma_1$ is a degree $k$ polynomial. Given that $V,s$ is kept constant and only $u$ is trained, the network is equivalent to a random feature model with the random feature $\sigma_1(Vx+s)$.

The proof proceeds in three steps:

\begin{itemize}
    \item First, we show that there exists $u^*$ such that $g_{u^*,s, V}$ approximates $\cP_kh$, the degree $k$ component of the target.
    \item Next, we leverage strong convexity of the empirical loss minimization problem to show that GD can find an approximate global minimizer in polynomial time.
    \item Finally, we invoke a kernel Rademacher complexity argument to bound the test performance.
\end{itemize}
In this section, we may use $\sigma(\cdot)$ to refer $\sigma_1(\cdot)$, and $m$ to refer $m_1$ due to notation simplicity.


\subsection{Approximation}

First, we show that when $\sigma$ is a $k$ degree polynomial, the random feature model can and only can approximate the degree $\leq k$ part of the target function.

\begin{lemma}\label{lemma: only capture k degree}
For any $u\in \RR^{m}$, we have the following equality for any function $h\in L^2(\RR^d,\gamma)$
\[
\norm{g_{u,s,V}-h}{L^2(\gamma)}^2=\norm{g_{u,s,V}-\cP_{\leq k} h}{L^2(\gamma)}^2+\norm{\cP_{\leq k} h-h}{L^2(\gamma)}^2
\]
\end{lemma}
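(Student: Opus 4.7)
The plan is to recognize that this lemma is simply a Pythagorean decomposition in $L^2(\gamma)$, exploiting the orthogonality built into the Hermite expansion. The crux is that $g_{u,s,V}$ is itself a degree $\leq k$ polynomial in $x$, so it lives entirely in the subspace $\mathrm{Im}(\cP_{\leq k})$, and therefore the residual $g_{u,s,V} - h$ splits cleanly into a ``low-degree'' part lying in this subspace and a ``high-degree'' part lying in its orthogonal complement.

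First I would verify that $g_{u,s,V} \in \mathrm{Im}(\cP_{\leq k})$. By \Cref{assumption: activation function}, $\sigma_1$ is a polynomial of degree $k$, and $v_i^{\top}x+s_i$ is affine in $x$, so each $\sigma_1(v_i^{\top}x+s_i)$ is a polynomial of degree $\leq k$. Since $g_{u,s,V}(x) = \sum_{i=1}^{m_1} u_i \sigma_1(v_i^{\top}x+s_i)$ is a finite linear combination of such terms, it is itself a polynomial of degree $\leq k$ in $x$. Because the Hermite tensors $\{He_j\}_{j\geq 0}$ form an orthonormal basis of $L^2(\gamma)$ and the span of $\{He_j\}_{j\leq k}$ coincides with the space of polynomials of degree $\leq k$, we conclude $\cP_{\leq k} g_{u,s,V} = g_{u,s,V}$, or equivalently $(I-\cP_{\leq k})g_{u,s,V} = 0$.

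Next I would write the decomposition
\[
g_{u,s,V} - h = \bigl(g_{u,s,V} - \cP_{\leq k} h\bigr) + \bigl(\cP_{\leq k} h - h\bigr).
\]
The first summand lies in $\mathrm{Im}(\cP_{\leq k})$ by the previous step, while the second equals $-(I-\cP_{\leq k})h$ and hence lies in $\mathrm{Im}(I-\cP_{\leq k}) = \mathrm{Im}(\cP_{\leq k})^{\perp}$. Orthogonality of these two subspaces in $L^2(\gamma)$ is immediate from the orthonormality of the Hermite basis. Applying the Pythagorean identity to the above decomposition yields exactly the claimed equality.

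There is no real obstacle here: the only non-trivial ingredient is the identification of polynomials of degree $\leq k$ with the Hermite subspace $\mathrm{Im}(\cP_{\leq k})$, which is standard. The lemma is essentially a bias-variance-style orthogonal decomposition saying that, since the random feature class can only reach polynomials of degree $\leq k$, its best achievable approximation to $h$ is $\cP_{\leq k}h$, and the irreducible error $\|\cP_{\leq k}h - h\|_{L^2(\gamma)}^2$ adds in quadrature with the approximation error within that subspace.
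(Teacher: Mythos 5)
Your proof is correct and takes essentially the same approach as the paper: both arguments hinge on the observation that $g_{u,s,V}$ is a degree-$\leq k$ polynomial (since $\sigma_1$ has degree $k$), so it lies entirely in the span of $\{He_j\}_{j\leq k}$, and the claimed identity is then just the Pythagorean decomposition in $L^2(\gamma)$. The paper writes this out as a direct computation with the Hermite expansion of $h$, while you phrase it slightly more abstractly via the orthogonal projection $\cP_{\leq k}$ and its complement, but the underlying argument is identical.
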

\begin{remark} From Lemma \ref{lemma: only capture k degree}, we can see when we try to approximate $h$ using $g_{u,s,V}$, we are actually trying our best to approximate $\cP_{\leq k} h$. That is to say,
\[
\argmin_u \norm{g_{u,s,V}-h}{L_2(\gamma)}^2=\argmin_u \norm{g_{u,s,V}-\cP_{\leq k}h}{L_2(\gamma)}^2
\]
\end{remark}

\begin{proof} By a direct computation, we have
\begin{equation}
\begin{aligned}
\norm{g_{u,s,V}-h}{L^2(\gamma)}^2&=\norm{u^{\top}\sigma(Vx+s)-\sum_j \langle H_j, He_j(x)\rangle}{L^2(\gamma)}^2\\
&=\norm{u^{\top}\sigma(Vx+s)-\sum_{j\leq k}\langle H_j, He_j(x)\rangle}{L^2(\gamma)}^2+\norm{\sum_{j\geq k+1}\langle H_j, He_j(x)\rangle}{L^2(\gamma)}^2\\
&=\norm{g_{u,s,V}-\cP_{\leq k}h}{L^2(\gamma)}^2+\norm{h-\cP_{\leq k}h}{L^2(\gamma)}^2
\end{aligned}
\end{equation}
where $H_j = \EE_x\left[h(x)He_j(x)\right]$. Here we use the hermite expansion which we state in Appendix \ref{app: hermite polynomial}.
\end{proof}


We next show that $\cP_{k}h$ can be expressed by an infinite-width network by the following three lemmas.
\begin{lemma}\label{lem:infinite_width}
    There exists $f : \SS^{d-1} \rightarrow \RR$ such that
    \begin{align*}
        \EE_v[f(v)h_k(v^{\top}x)] = (\cP_kh)(x) \quad\text{and}\quad \EE_v[f(v)^2] = \cO(d^k).
    \end{align*}
    where $v$ obeys the uniform distribution on $\SS^{d-1}$.
\end{lemma}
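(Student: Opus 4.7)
The plan is to parametrize $f$ as a polynomial of degree $k$ on the sphere via tensor contractions and solve the resulting linear equation using $O(d)$-equivariance. First I would invoke the standard identity $h_k(v^\top x) = \langle v^{\otimes k}, He_k(x)\rangle$ valid for unit $v$ (from the Hermite tensor formalism in \Cref{app: hermite polynomial}) together with the expansion $(\cP_k h)(x) = \langle C_k(h), He_k(x)\rangle$, where $C_k(h) := \EE_x[h(x) He_k(x)]$ is symmetric in its indices. The target identity then rewrites as the linear constraint
\[
\langle \EE_v[f(v)\, v^{\otimes k}],\, He_k(x)\rangle = \langle C_k(h),\, He_k(x)\rangle \quad \text{for all } x \in \RR^d,
\]
a linear equation in the symmetric tensor $\EE_v[f(v)\, v^{\otimes k}]$.

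Second, I would take the ansatz $f(v) = \langle F, v^{\otimes k}\rangle$ for a symmetric tensor $F$ to be determined. Using the Wick-type sphere moment formula
\[
\EE_v[v_{i_1}\cdots v_{i_{2k}}] = \frac{1}{d(d+2)\cdots(d+2k-2)}\sum_{\pi}\prod_{\{a,b\}\in\pi}\delta_{i_a i_b},
\]
where the sum ranges over perfect matchings of $[2k]$, the bilinear form $Q(F,T) := \EE_v[\langle F, v^{\otimes k}\rangle \langle T, v^{\otimes k}\rangle]$ expands as $[d(d+2)\cdots(d+2k-2)]^{-1}\sum_{s=0}^{\lfloor k/2\rfloor} c_{k,s}\langle \operatorname{tr}^s F,\, \operatorname{tr}^s T\rangle$, where $\operatorname{tr}^s$ denotes an $s$-fold partial trace; the $s=0$ term contributes $k!\langle F, T\rangle$ and the $s \geq 1$ terms capture pairings internal to $F$ or $T$. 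Since $Q$ is $O(d)$-equivariant, Schur's lemma implies the associated self-adjoint operator is diagonal under the decomposition of $\operatorname{Sym}^k(\RR^d)$ into harmonic-tensor subspaces indexed by degrees $j \leq k$ with $j \equiv k \pmod{2}$. Writing $\lambda_j$ for its eigenvalue on the $j$-th irrep, and granting the bound $\lambda_j \gtrsim d^{-k}$ uniformly in $j$, the linear system $Q(F^*, \cdot) = \langle C_k(h), \cdot\rangle$ has a unique solution $F^*$ satisfying $\|F^*\|_F \lesssim d^k\|C_k(h)\|_F$.

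Setting $f(v) = \langle F^*, v^{\otimes k}\rangle$ then verifies the first claim. For the $L^2$-norm, Parseval applied to the Hermite expansion yields $\|C_k(h)\|_F^2 = \|\cP_k h\|_{L^2(\gamma)}^2 \leq \|h\|_{L^2(\gamma)}^2 = O(1)$, using the bounded coefficients of $g$, the unit $L^2$-norm of $p$, constant degrees $k, q$, and Gaussian hypercontractivity. Hence
\[
\EE_v[f(v)^2] = Q(F^*, F^*) = \sum_j \lambda_j^{-1}\, \|C_k(h)_j\|_F^2 \lesssim d^k\|C_k(h)\|_F^2 = O(d^k),
\]
where $C_k(h)_j$ denotes the projection of $C_k(h)$ onto the degree-$j$ harmonic-tensor subspace.

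The main obstacle is the uniform lower bound $\lambda_j \gtrsim d^{-k}$ on the $Q$-eigenvalues across the relevant harmonic irreps: the $s=0$ term already contributes $k!/[d(d+2)\cdots(d+2k-2)]\langle F, T\rangle$, which is positive and $\Theta(d^{-k})$ on every irrep, but the trace-corrected $s \geq 1$ contributions could in principle partially cancel it on low-$j$ irreps. A clean way to rule this out is via the Funk--Hecke formula applied to the zonal kernel $K(v, v') := \EE_x[h_k(v^\top x) h_k(v'^\top x)] = (v^\top v')^k$ (the equality by Mehler's identity); its Gegenbauer coefficients are positive and of order $d^{-k}$ for $j \leq k$ with $j \equiv k \pmod{2}$, and zero otherwise, transferring directly to the desired $\Omega(d^{-k})$ bound on each $\lambda_j$.
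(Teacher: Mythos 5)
Your construction is the same as the paper's: parametrize $f$ as a degree-$k$ polynomial $f(v) = \langle F, v^{\otimes k}\rangle$, reduce to the linear system $\EE_v[f(v)\,v^{\otimes k}] = C_k(h)$ on $\operatorname{Sym}^k(\RR^d)$, and bound $\EE_v[f^2]$ by the smallest eigenvalue of the covariance operator $M = \EE_v[\operatorname{Vec}(v^{\otimes k})\operatorname{Vec}(v^{\otimes k})^\top]$ restricted to symmetric tensors, using $\|C_k(h)\|_F^2 = \|\cP_k h\|_{L^2(\gamma)}^2 = \cO(1)$. Where you diverge is in establishing the $\Omega(d^{-k})$ lower bound on that eigenvalue. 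The paper's route is to lift to the Gaussian, writing $X = YZ$ with $Y\sim\chi(d)$, $Z\sim\operatorname{Unif}(\SS^{d-1})$ independent, and citing Corollary 42 of Damian et al.\ (2022) for $\EE_X[\operatorname{Vec}(X^{\otimes k})\operatorname{Vec}(X^{\otimes k})^\top] \succeq k!\,\Pi_{\operatorname{Sym}^k}$, which combined with $\EE_Y[Y^{2k}] = \Theta(d^k)$ immediately gives the bound. Your route works directly on the sphere via $O(d)$-equivariance and Funk--Hecke. That is a legitimately different key lemma and it does go through, but note two loose ends: (i) the statement that the Gegenbauer coefficients of $t\mapsto t^k$ are ``of order $d^{-k}$'' is not tight for low degrees (e.g.\ $\mu_0 = (k-1)!!/(d(d+2)\cdots(d+k-2)) = \Theta(d^{-k/2})$ when $k$ is even), though the one-sided bound $\mu_j = \Omega(d^{-k})$ is all you need and is correct; (ii) identifying the Funk--Hecke spectrum of the scalar kernel $(v^\top v')^k$ on $L^2(\SS^{d-1})$ with the spectrum of $M$ on $\operatorname{Sym}^k(\RR^d)$ requires the (short but nontrivial) observation that $F\mapsto \langle F, v^{\otimes k}\rangle$ intertwines $M$ with the integral operator, which your write-up leaves implicit. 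The detour through the Wick expansion of $Q(F,T)$ before switching to Funk--Hecke is also unnecessary once you commit to the Funk--Hecke argument. Overall: correct approach, same skeleton, a different and somewhat more self-contained (if slightly less clean) method for the eigenvalue bound.
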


\begin{proof}
Recall that $(\cP_k h)(x)$ can be represented as $\langle A,He_k(x)\rangle$ for some symmetric tensor $A \in (\RR^d)^{\otimes k}$.
Furthermore, observing that
\[
\EE_v \left[f(v)h_k(v^{\top}x)\right]=\langle \EE_v \left[f(v)v^{\otimes k}\right], He_k(x)\rangle
\]
by Lemma \ref{lemma: relation between hermite poly and tensor},
it suffices to solve for $u(\cdot)$ such that
$
\EE_v [f(v)v^{\otimes k}] = A$.

Let $\operatorname{Vec}:(\RR^d)^{\otimes k} \rightarrow \RR^{d^k}$ be the unfolding operator. We claim that one solution for $f$ is
\[
f(v)=\operatorname{Vec} (v^{\otimes k})^{\top}\left(\EE_{v} \operatorname{Vec}(v^{\otimes k})\operatorname{Vec}(v^{\otimes k})^{\top}\right)^{\dagger}\operatorname{Vec}(A).
\]
First, by Corollary 42 in \citet{damian2022neural}, we have
\begin{equation}\label{eq:min_singularval}
\EE_{x \sim \gamma}\left[\operatorname{Vec}(x^{\otimes k})\operatorname{Vec}(x^{\otimes k})^{\top}\right] \succeq k!\Pi_{\text{Sym}^k(\RR^d)},
\end{equation}
where $\Pi_{\text{Sym}^k(\RR^d)}$ is the projection operator onto symmetric $k$ tensors. Since $A$ is symmetric, we indeed see that
\begin{align*}
    \operatorname{Vec}\left(\EE_v \left[f(v)v^{\otimes k}\right] \right) = \EE_{v} \operatorname{Vec}(v^{\otimes k})\operatorname{Vec} (v^{\otimes k})^{\top}\left(\EE_{v} \operatorname{Vec}(v^{\otimes k})\operatorname{Vec}(v^{\otimes k})^{\top}\right)^{\dagger}\operatorname{Vec}(A) = \operatorname{Vec}(A).
\end{align*}
Plugging this back to $\EE_v \left[f(v)^2\right]$ and applying the Cauchy inequality, we get
\begin{equation}\label{equation: norm bound for u(v)}
\EE_v\left[f(v)^2\right] \leq \lambda_{\operatorname{max}}\left(\left(\EE_{v} \operatorname{Vec}(v^{\otimes k})\operatorname{Vec}(v^{\otimes k})^{\top}\right)^{\dagger}\right)\|\operatorname{Vec}(A)\|^2
\end{equation}
Therefore, to estimate the $L^2$ norm of $f(v)$ we only need to look at the spectrum of the matrix above.

For $X\sim \cN(0,I_d)$, it is clear that $YZ$ shares the same distribution with $X$, where $Y\sim \chi(d)$ and $Z\sim \operatorname{Unif}(\SS^{d-1})$ and $Y,Z$  are independent. Therefore,
\[
\EE_{X}\left[\operatorname{Vec}(X^{\otimes k})\operatorname{Vec}(X^{\otimes k})^{\top}\right]=\EE_Y\left[ Y^{2k} \right]\EE_{Z}\left[\operatorname{Vec}(Z^{\otimes k})\operatorname{Vec}(Z^{\otimes k})^{\top}\right] 
\leq d^k \EE_{Z}\left[\operatorname{Vec}(Z^{\otimes k})\operatorname{Vec}(Z^{\otimes k})^{\top}\right] \] 
due to Lemma 44 in \citet{damian2022neural}. Furthermore, we get
$
\lambda_{\operatorname{max}}\left(\left(\EE_{X}\left[\operatorname{Vec}(X^{\otimes k})\operatorname{Vec}(X^{\otimes k})^{\top}\right]\right)^{\dagger}\right) \leq \frac{1}{k!}
$ by equation \eqref{eq:min_singularval}.
Plugging this back to equation \eqref{equation: norm bound for u(v)}, we will have \[
\EE_v\left[f(v)^2\right] \leq \frac{1}{k!}d^k\|\operatorname{Vec}(A)\|^2 \lesssim d^k,
\]
where we used the fact that $\|\operatorname{Vec}(A)\|^2_2 = \|A\|^2_F = \EE\left[(\cP_k h)(x)^2\right]= \cO(1)$.
\end{proof}

\begin{lemma}\label{lemma: general activation with bias appro}
Let $s \sim \cN(0, 1)$. Then, there exists $w : \RR \rightarrow \RR$ with $\EE_s[w(s)^2] = \cO(1)$ and 
\begin{align*}
    \EE_s\left[w(s)\sigma\left(\frac{z + s}{\sqrt{2}}\right)\right] = h_k(z).
\end{align*}
\end{lemma}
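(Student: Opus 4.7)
The plan is to exploit the classical ``addition formula'' for normalized Hermite polynomials together with the fact that $\sigma_1$ is a degree $k$ polynomial with $|o_k| = \Theta(1)$, which by a basis change gives a nonzero $h_k$-component in the Hermite expansion of $\sigma_1$. Concretely, I first write $\sigma_1 = \sum_{j=0}^k a_j h_j$ in the Hermite basis; matching leading monomial coefficients shows that $a_k = o_k \sqrt{k!}$, so $|a_k| = \Theta(1)$ and all $|a_j| = \cO(1)$ by \Cref{assumption: activation function}.

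The main identity I would use is the polynomial identity
\begin{equation*}
h_j\!\left(\tfrac{z+s}{\sqrt{2}}\right) \;=\; \frac{1}{2^{j/2}} \sum_{i=0}^{j} \sqrt{\binom{j}{i}}\, h_i(z)\, h_{j-i}(s),
\end{equation*}
which follows by comparing coefficients of $t^j$ in the generating function $\exp\!\big(t\cdot\tfrac{z+s}{\sqrt{2}} - \tfrac{t^2}{2}\big) = \exp\!\big(\tfrac{t}{\sqrt{2}}z - \tfrac{t^2}{4}\big)\exp\!\big(\tfrac{t}{\sqrt{2}}s - \tfrac{t^2}{4}\big)$. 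Substituting into the Hermite expansion of $\sigma_1$ and reordering by the degree $i$ in $z$,
\begin{equation*}
\sigma_1\!\left(\tfrac{z+s}{\sqrt{2}}\right) \;=\; \sum_{i=0}^{k} h_i(z) \sum_{j=i}^{k} a_j\, 2^{-j/2}\, \sqrt{\tbinom{j}{i}}\, h_{j-i}(s).
\end{equation*}

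I would then seek $w$ in the form $w(s) = \sum_{l=0}^{k} b_l h_l(s)$. By orthonormality $\EE_s[h_l(s)h_{j-i}(s)] = \delta_{l,j-i}$, the condition $\EE_s[w(s)\sigma_1((z+s)/\sqrt{2})] = h_k(z)$ becomes the triangular linear system
\begin{equation*}
\sum_{m=0}^{k-i} a_{m+i}\, 2^{-(m+i)/2}\, \sqrt{\tbinom{m+i}{i}}\, b_m \;=\; \delta_{i,k}, \qquad i = 0,1,\dots,k.
\end{equation*}
Reading this system from $i=k$ downwards, the $l$-th equation determines $b_l$ uniquely, with the coefficient of $b_l$ being $a_k 2^{-k/2}\sqrt{\binom{k}{l}} = \Theta(1)$. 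Hence each $b_l$ is bounded in terms of $a_k^{-1}$ and the previously computed $|b_0|,\dots,|b_{l-1}|$, all of which stay $\cO(1)$. This gives $\EE_s[w(s)^2] = \sum_{l=0}^k b_l^2 = \cO(1)$, completing the proof.

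The main (mild) obstacle is recognizing the right representation: rewriting everything in the Hermite basis and using the addition formula is what turns an a priori messy convolution equation in $w$ into a triangular system whose diagonal is controlled by $|o_k| = \Theta(1)$. Without that, matching monomial coefficients of $z^j$ directly leads to a denser system that is harder to invert with explicit norm bounds, though it would in principle work by invertibility of a suitable Vandermonde-type matrix.
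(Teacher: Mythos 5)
Your proof is correct and follows essentially the same route as the paper: both expand $\sigma_1$ in the Hermite basis, apply the Hermite addition formula for $h_j((z+s)/\sqrt{2})$, reorder by the degree in $z$, and solve the resulting triangular linear system for the Hermite coefficients of $w$, with the diagonal controlled by the nonvanishing top coefficient $a_k = o_k\sqrt{k!} = \Theta(1)$. The paper simply writes out the triangular system as an explicit recursion $w_0, \dots, w_k$, whereas you state the system abstractly; the content and norm bound are identical.
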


\begin{proof}
    One has the following Hermite addition formula:
    \begin{align*}
        h_i\left(\frac{z + s}{\sqrt{2}}\right) = 2^{-i/2}\sum_{j = 0}^i \binom{i}{j}^{1/2}h_{i-j}(s)h_j(z).
    \end{align*}
    Thus writing $\sigma(z) = \sum_{i \geq 0}c_ih_i(z)$, we have
    \begin{align*}
        \sigma\left(\frac{z + s}{\sqrt{2}}\right) &= \sum_{i \geq 0}\sum_{j=0}^i c_i2^{-i/2}\binom{i}{j}^{1/2}h_{i-j}(s)h_j(z)\\
        &= \sum_{j \geq 0}h_j(z)\sum_{i = j}^k c_i2^{-i/2}\binom{i}{j}^{1/2}h_{i-j}(s).
    \end{align*}
    Define $w_0, \dots, w_k$ recursively by
    \begin{align*}
        w_0 &= c_k^{-1}2^{k/2}\\
        w_j &= -c_k^{-1}2^{k/2}\binom{k}{j}^{-1/2}\left( \sum_{i = 0}^{j-1}c_{k + i- j}2^{-(k + i - j)/2}\binom{k + i - j}{i}^{1/2}w_i\right).
    \end{align*}
    As a consequence, for $j \geq 1$, we have
    \begin{align*}
        0 &= \sum_{i = 0}^{j}c_{k + i- j}2^{-(k + i - j)/2}\binom{k + i - j}{i}^{1/2}w_i.
    \end{align*}
    Therefore for all $0 \le j \le k-1$, we have
    \begin{align*}
        0 &= \sum_{i = 0}^{k-j}c_{i + j}2^{-(i + j)/2}\binom{i + j}{i}^{1/2}w_i\\
        &= \sum_{i = j}^{k}c_{i}2^{-i/2}\binom{i}{j}^{1/2}w_{i-j}.
    \end{align*}
    Setting $w(s) = \sum_{i=0}^k w_ih_i(s)$, we thus have that
    \begin{align*}
        \EE_s\left[w(s)\sigma\left(\frac{z + s}{\sqrt{2}}\right)\right] &= \sum_{j \geq 0}^kh_j(z)\sum_{i = j}^k  c_i2^{-i/2}\binom{i}{j}^{1/2}w_{i-j}\\
        &= 2^{-k/2}c_k w_0 h_k(z) + \sum_{j \geq 0}^{k-1}h_j(z)\left( \sum_{i = j}^k  c_i2^{-i/2}\binom{i}{j}^{1/2}w_{i-j}\right)\\
        &= h_k(z),
    \end{align*}
    as desired. Since we regard $k$ as a constant, and we have $\sup_i\abs{c_i}=\cO(1)$ and $c_k = \Theta(1)$ due to Assumption \ref{assumption: activation function}, the norm bound follows.
\end{proof}
\begin{lemma}\label{lemma: approximation Pk final infinite width}
There exists $u:\SS^{d-1}\times \RR \rightarrow \RR$ such that
\[
\EE_{v,s}\left[u(v,s)\sigma\left(\frac{v^{\top}x+s}{\sqrt{2}}\right)\right] = (\cP_k h)(x) \text{  and  } \EE_{v,s}\left[u(v,s)^2\right] = \cO(d^k)
\]
\end{lemma}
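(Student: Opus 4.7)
The plan is to prove Lemma~\ref{lemma: approximation Pk final infinite width} by directly stitching together the two preceding lemmas via a product construction. Lemma~\ref{lem:infinite_width} gives a function $f:\SS^{d-1}\to\RR$ of the direction variable that expresses $\cP_k h$ as an expectation against $h_k(v^\top x)$, and Lemma~\ref{lemma: general activation with bias appro} gives a function $w:\RR\to\RR$ of the bias variable that reproduces the scalar Hermite polynomial $h_k(z)$ from $\sigma((z+s)/\sqrt{2})$. The natural guess is therefore $u(v,s):=f(v)\,w(s)$, which separates the direction and bias degrees of freedom and lets each lemma act on its own coordinate.

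Concretely, taking $v\sim \operatorname{Unif}(\SS^{d-1})$ and $s\sim\cN(0,1)$ independently, I would write
\begin{align*}
\EE_{v,s}\!\left[u(v,s)\,\sigma\!\left(\tfrac{v^\top x+s}{\sqrt{2}}\right)\right]
&=\EE_v\!\left[f(v)\,\EE_s\!\left[w(s)\,\sigma\!\left(\tfrac{v^\top x+s}{\sqrt{2}}\right)\right]\right],
\end{align*}
and then apply Lemma~\ref{lemma: general activation with bias appro} with the scalar $z=v^\top x$ held fixed to collapse the inner expectation to $h_k(v^\top x)$. The outer expectation then becomes $\EE_v[f(v)h_k(v^\top x)]=(\cP_k h)(x)$ by Lemma~\ref{lem:infinite_width}. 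For the norm bound, independence of $v$ and $s$ gives $\EE_{v,s}[u(v,s)^2]=\EE_v[f(v)^2]\cdot\EE_s[w(s)^2]=\cO(d^k)\cdot\cO(1)=\cO(d^k)$.

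There is essentially no obstacle beyond bookkeeping. The only thing worth double-checking is that the scalar identity in Lemma~\ref{lemma: general activation with bias appro} applies pointwise in $z$, so that one may substitute the random quantity $z=v^\top x$ and then take the outer expectation in $v$; this is justified by Fubini once we note $\EE_{v,s,x}|u(v,s)\sigma((v^\top x+s)/\sqrt{2})|<\infty$, which follows from the product $L^2$ bound on $u$ together with the polynomial growth of $\sigma$ and finite Gaussian moments. Because the construction is entirely separable, no further estimate on $\psi_i$, on $p$, or on the Hermite decomposition of $h$ is required at this step; the lemma serves purely as the bridge assembling the infinite-width representation used later in the kernel regression argument.
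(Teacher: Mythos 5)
Your proof is correct and takes essentially the same route as the paper: define $u(v,s)=f(v)w(s)$ with $f$ from \Cref{lem:infinite_width} and $w$ from \Cref{lemma: general activation with bias appro}, collapse the inner $s$-expectation to $h_k(v^{\top}x)$, then apply the infinite-width identity and use independence of $v,s$ for the $L^2$ bound. The Fubini justification you add is a reasonable precaution but is the only (minor) addition beyond what the paper states.
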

\begin{proof}
    By Lemma \ref{lemma: general activation with bias appro}, we get $\EE_s\left[w(s)\sigma\left(\frac{z + s}{\sqrt{2}}\right)\right] = h_k(z)$ for some $\EE_s\left[w(s)^2 \right] = \cO(1)$ and $w(\cdot)$ is a $k$ degree polynomial. Substitute $z$ with $v^{\top}x$, and then use Lemma \ref{lem:infinite_width}, we have\[
    \EE_{v,s}\left[f(v)w(s)\sigma\left(\frac{v^{\top}x+s}{\sqrt{2}}\right)\right] = \EE_{v}\left[f(v)h_k(v^{\top}x)\right] = (\cP_k h)(x)
    \]
    Set $u(v,s)=f(v)w(s)$. We next bound the $L^2$ norm of $u(v,s)$ by the independence between $v$ and $s$.
    \[
    \EE\left[u(v,s)^2\right]= \EE\left[f(v)^2w(s)^2\right] = \EE\left[f(v)^2\right]\EE\left[w(s)^2\right] \lesssim d^k
\]
\end{proof}
\begin{remark}
    In the above lemma, our feature is $\sigma\left(\frac{v^Tx+s}{\sqrt{2}}\right)$ with $v$ uniformly sampled from the unit sphere and $s$ sampled from $\cN(0,1)$. This is equivalent with our feature $\sigma(v^Tx+s)$ in the main text, with $v$ uniformly sampled from the sphere of radius $\frac{1}{\sqrt{2}}$ and $s$ sampled from $\cN(0,1/2)$.
    We will use the $\sigma\left(\frac{v^Tx+s}{\sqrt{2}}\right)$ formulation in the remainder of the section without loss of generality.
\end{remark}
Next, we show that we can use this infinite width construction to construct a finite-width network that approximates $\cP_kh$.
\begin{lemma}\label{lemma: approximation for RF d dim}
For any absolute constant $\delta \in (0,1)$ and $m\in \NN^+$, with probability at least $1-\delta/8$ over the sampling of $V,s$, there exists $u^*$ such that
\[\norm{g_{u^*,s,V}-\cP_k h}{L^2(\gamma)}^2 = \cO(m^{-1}d^k) \text{    and    } \|u^*\|^2 =\cO(m^{-1}d^k)\]
\end{lemma}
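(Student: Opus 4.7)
The plan is a standard Monte Carlo (empirical process) argument that converts the infinite-width representation from \Cref{lemma: approximation Pk final infinite width} into a width-$m$ random feature approximation, followed by Markov's inequality to turn expected bounds into a high-probability guarantee. Concretely, let $u : \SS^{d-1} \times \RR \to \RR$ be the function from \Cref{lemma: approximation Pk final infinite width}, so that $\EE_{v,s}[u(v,s)\sigma((v^\top x + s)/\sqrt 2)] = (\cP_k h)(x)$ and $\EE_{v,s}[u(v,s)^2] \lesssim d^k$. Given the sampled rows $v_i$ of $V$ and biases $s_i$, I would simply set
\[
u^*_i \;:=\; \tfrac{1}{m}\, u(v_i, s_i),
\]
so that $g_{u^*,s,V}(x) = \tfrac{1}{m}\sum_{i=1}^m u(v_i,s_i)\sigma((v_i^\top x + s_i)/\sqrt 2)$ is an unbiased estimator of $(\cP_k h)(x)$ for every fixed $x$.

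The first key step is to control $\EE_{V,s}\|g_{u^*,s,V} - \cP_k h\|_{L^2(\gamma)}^2$. By Fubini and the i.i.d.\ variance identity,
\[
\EE_{V,s}\,\EE_x\bigl(g_{u^*,s,V}(x)-(\cP_k h)(x)\bigr)^2 \;=\; \tfrac{1}{m}\,\EE_x\,\mathrm{Var}_{v,s}\!\left[u(v,s)\sigma\!\left(\tfrac{v^\top x+s}{\sqrt 2}\right)\right] \;\le\; \tfrac{1}{m}\,\EE_{v,s,x}\!\left[u(v,s)^2\,\sigma\!\left(\tfrac{v^\top x+s}{\sqrt 2}\right)^2\right].
\]
The second step is to show the last expectation is $\cO(d^k)$. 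Since $v\in\SS^{d-1}$, for fixed $(v,s)$ we have $v^\top x\sim \cN(0,1)$, so $\EE_x[\sigma((v^\top x+s)/\sqrt 2)^2]$ is a polynomial in $s$ of degree $2k$ with coefficients $\cO(1)$ (by Assumption~\ref{assumption: activation function}); moreover $u(v,s)=f(v)w(s)$ factorizes with $w$ a degree-$k$ polynomial in $s$, so after taking expectation over $s\sim\cN(0,1)$ and then over $v$ (using independence), the whole quantity is bounded by $\cO(1)\cdot \EE_{v,s}[u(v,s)^2]\lesssim d^k$. Similarly, $\EE_{V,s}\|u^*\|^2 = \tfrac{1}{m}\EE_{v,s}[u(v,s)^2]\lesssim d^k/m$.

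The third step is Markov plus a union bound: with probability at least $1-\delta/16$ each, $\|g_{u^*,s,V}-\cP_k h\|_{L^2(\gamma)}^2 \lesssim \delta^{-1} d^k/m$ and $\|u^*\|^2 \lesssim \delta^{-1} d^k/m$; since $\delta$ is an absolute constant this gives the claimed $\cO(m^{-1}d^k)$ bounds with failure probability at most $\delta/8$.

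\textbf{Main obstacle.} The only nontrivial piece is step two, the bound on $\EE_{v,s,x}[u(v,s)^2\sigma((v^\top x+s)/\sqrt 2)^2]$. The product of $u^2$ (which is itself a polynomial of growing degree in $d$-dimensional variables) with a $\sigma^2$ term whose argument mixes $v$ and $s$ could, in principle, blow up. The factorization $u(v,s)=f(v)w(s)$ coming out of the construction in \Cref{lem:infinite_width} and \Cref{lemma: general activation with bias appro}, combined with the fact that $v^\top x\sim\cN(0,1)$ for any $v\in\SS^{d-1}$, is what makes this expectation reduce to $\cO(1)\cdot \EE[f(v)^2]\EE[w(s)^2]=\cO(d^k)$. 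I would verify this cleanly by writing $\sigma((v^\top x+s)/\sqrt 2)^2$ in its Hermite expansion in $x$, integrating out $x$ to get a polynomial in $s$ of degree $2k$ with $\cO(1)$ coefficients, and using $\sup_i|o_i|=\cO(1)$ together with independence of $v$ and $s$ under the sampling distribution.
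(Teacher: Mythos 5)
Your proposal is correct and follows essentially the same route as the paper: Monte Carlo sampling $u_i^* = \tfrac{1}{m}u(v_i,s_i)$ from the infinite-width representation of \Cref{lemma: approximation Pk final infinite width}, the i.i.d.\ variance bound $\EE_{V,s}\|g_{u^*,s,V}-\cP_k h\|^2 \le \tfrac{1}{m}\EE_{v,s,x}[u(v,s)^2\sigma(\cdot)^2]$, controlling this by $\cO(d^k)$ via the factorization $u(v,s)=f(v)w(s)$ and the fact that $\EE_x[\sigma((v^\top x+s)/\sqrt2)^2]$ is a degree-$2k$ polynomial in $s$ with $\cO(1)$ coefficients, and then Markov with a union bound. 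The paper's proof carries out this identical reduction (landing on $\lesssim\tfrac1m\EE_{v,s}[f(v)^2w(s)^2(1+s^{2k})]\lesssim\tfrac1m\EE_{v,s}[u(v,s)^2]$), so there is no substantive difference.
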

\begin{remark}
    Due to Lemma \ref{lem:approximate_stein} and utilizing the lemma above, we have \[\norm{g_{u^*,s,V}-\cP_{\leq k} h}{L^2(\gamma)}^2 \lesssim d^{-1}+m^{-1}d^k\]
\end{remark}


\begin{proof}[Proof of Lemma \ref{lemma: approximation for RF d dim}]

We use Monte Carlo sampling to help us construct the $u^*$.
Let $u(\cdot,\cdot)$ be the function from \Cref{lemma: approximation Pk final infinite width}, so that  $(\cP_kh)(x)=\EE_{v,s} \left[u(v,s)\sigma\left(\frac{v^{\top}x+s}{\sqrt{2}}\right)\right]$. We sample $\Theta=\{v_i,s_i\}_{i=1}^m$ i.i.d. and set $u^*_i := \frac{1}{m}u(v_i,s_i)$. As such, one has that
\begin{equation}
\begin{aligned}
&\mathbb{E}_{\Theta} \mathbb{E}_{x}\left|g_{u^*, s,V}(x)-(\cP_kh)(x)\right|^2 
 =\mathbb{E}_{x} \mathbb{E}_{\Theta}\left|\frac{1}{m}\sum_{j=1}^m u(v_j,s_j)\sigma\left(\frac{v^{\top}_jx+s_j}{\sqrt{2}}\right)-(\cP_kh)(x)\right|^2 \\
&\quad\quad\quad\quad\quad\quad =\frac{1}{m^2} \mathbb{E}_{x} \sum_{j, l=1}^m \mathbb{E}_{\Theta}\left[\left(u(v_j,s_j)\sigma\left(\frac{v^{\top}_jx+s_j}{\sqrt{2}}\right)\right)\left(u(v_l,s_l)\sigma\left(\frac{v^{\top}_lx+s_l}{\sqrt{2}}\right)\right)\right] \\
&\quad\quad\quad\quad\quad\quad = \frac{1}{m^2} \sum_{j=1}^m \mathbb{E}_{x} \mathbb{E}_{v_j,s_j}\left[\left(u(v_j,s_j)\sigma\left(\frac{v^{\top}_jx+s_j}{\sqrt{2}}\right)\right)^2\right] \\
&\quad\quad\quad\quad\quad\quad \lesssim \frac{1}{m} \EE_{v,s} \left[f(v)^2 w(s)^2 (1+s^{2k})\right] \\&\quad\quad\quad\quad\quad\quad\lesssim \frac{1}{m}\EE_{v,s}\left[u(v,s)^2\right]
\end{aligned}
\end{equation}
and
\[
   \EE_{\Theta}\left[\frac{1}{m}\sum_{j=1}^m u(v_j,s_j)^2\right]=\EE_{v,s}\left[ u(v,s)^2 \right]
\]
Therefore, from Markov inequality, we can derive that for any constant $K>0$ we have
\begin{equation}\label{equa: appendix B 1 markov}
\PP_{\Theta}\left(\mathbb{E}\left|g_{u^*,s, V}-\cP_kh\right|^2 \geq \Theta(1)\frac{K}{m}\EE\left[ u(v,s)^2\right]\right) \leq \frac{1}{K}
\end{equation}
and
\[
\PP_{\Theta}\left(\frac{1}{m}\sum_{j=1}^m u(v_j,s_j)^2 \geq K\EE\left[ u(v,s)^2\right]\right) \leq \frac{1}{K}
\]
for some $\Theta(1)$.
Setting $1/K=\delta/16$, plugging in the bound on $\EE\left[ u(v,s)^2\right]$ from \Cref{lemma: approximation Pk final infinite width} and noting that $\|u^*\|^2 = \frac{1}{m^2}\sum_{i=1}^m u(v_j,s_j)^2$ yields the desired result.
\end{proof}

Throughout the remainder of this section, we let $\ep_1 = \Theta(1)\frac{K}{m}\EE\left[ u(v,s)^2\right]$ for notation simplicity where the $\Theta(1)$ is from equation \eqref{equa: appendix B 1 markov}. Since we see $\delta,K$ as absolute constants, we have $\ep_1=\cO(d^k/m)$.


\subsection{Empirical Performance}

Next, we focus on the concentration over the population loss given by
\[
L(u) = \norm{g_{u,s,V} - h}{L^2(\gamma)}^2
\]
evaluated at the point $ u = u^* $, which is defined in our \Cref{lemma: approximation for RF d dim}. Our primary tool for this concentration is Corollary \ref{coro: polynomial concentration}. For the sake of notational clarity, let us define 
$
\hat{L}(u) := \frac{1}{n} \sum_{i=1}^n \left(g_{u,s,V}(x_i) - h(x_i)\right)^2
$
to represent the empirical loss based on the initial dataset $\cD_1$.

\begin{lemma}
Under the setup and the results in Lemma \ref{lemma: approximation for RF d dim},
we will have with probability at least $1-\delta/4$, 
\[
\abs{\hat{L}(u^*)-L(u^*)} \lesssim \frac{1}{\sqrt{n}}
\]
\end{lemma}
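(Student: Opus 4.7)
The plan is to view $\hat{L}(u^*) - L(u^*) = \tfrac{1}{n}\sum_{i=1}^n \bigl(Q(x_i) - \EE_x Q(x)\bigr)$ as an empirical average of a bounded-degree polynomial of a Gaussian and to apply Corollary \ref{coro: polynomial concentration}. Here $Q(x) := (g_{u^*, s, V}(x) - h(x))^2$ is a polynomial in $x$ of total degree at most $2r = 2kq$, since $\sigma_1$ has degree $k$ and $h = g \circ p$ has degree at most $r$. Crucially, $u^*$ is constructed in Lemma \ref{lemma: approximation for RF d dim} by Monte Carlo sampling over $(v_i, s_i)$, so it depends only on the initialization and is independent of $\cD_1$; conditioning on the $1-\delta/8$-probability initialization event of that lemma preserves the i.i.d.\ structure of the samples in $\cD_1$ and makes $Q$ a fixed polynomial.

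Next I would bound the $L^2(\gamma)$ norm of $Q$. On the good initialization event, Lemma \ref{lemma: approximation for RF d dim} gives $\norm{g_{u^*, s, V} - \cP_k h}{L^2(\gamma)}^2 = \cO(m^{-1}d^k) = \cO(d^{-\alpha})$, Lemma \ref{lem:approximate_stein} gives $\norm{\cP_{<k} h}{L^2(\gamma)} = \cO(d^{-1/2})$, and $\norm{h}{L^2(\gamma)} = \cO(1)$ by assumption, so the triangle inequality yields $L(u^*) = \norm{g_{u^*, s, V} - h}{L^2(\gamma)}^2 = \cO(1)$. Since $g_{u^*, s, V} - h$ is a polynomial of degree at most $r$ in a standard Gaussian, Gaussian hypercontractivity (Lemma \ref{lemma: gaussian hypercontractivity}) then gives $\EE_x[Q(x)^2] = \EE_x[(g_{u^*, s, V}(x) - h(x))^4] \lesssim L(u^*)^2 = \cO(1)$, so in particular $\Var(Q(x)) = \cO(1)$.

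With the variance controlled, Corollary \ref{coro: polynomial concentration} applied to the fixed degree-$2r$ polynomial $Q$ gives a sub-exponential-type tail bound on $|\hat{L}(u^*) - L(u^*)|$. Choosing the deviation threshold so that the failure probability is at most $\delta/8$ (the resulting log factors depend only on $r$ and $\delta$, both treated as absolute constants) yields $|\hat{L}(u^*) - L(u^*)| \lesssim 1/\sqrt{n}$, and a union bound with the initialization event of Lemma \ref{lemma: approximation for RF d dim} gives the claim with probability at least $1 - \delta/4$. The only nontrivial point is the independence of $u^*$ from $\cD_1$, which is immediate from the Monte Carlo construction; everything else is routine once the degree and variance bounds are in hand.
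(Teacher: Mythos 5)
Your proposal is correct and follows essentially the same route as the paper: both express $\hat L(u^*)-L(u^*)$ as an empirical average of the degree-$2r$ polynomial $Q(x)=(g_{u^*,s,V}(x)-h(x))^2$, bound $\Var(Q)\lesssim\EE[Q^2]\lesssim L(u^*)^2=\cO(1)$ via Gaussian hypercontractivity (Lemma~\ref{lemma: gaussian hypercontractivity}) and $L(u^*)=\cO(1)$ (which needs only $\ep_1\lesssim 1$ and $\norm{h}{L^2(\gamma)}=\cO(1)$, not the sharper bounds from Lemma~\ref{lem:approximate_stein}), and then invoke the polynomial-concentration Corollary~\ref{coro: polynomial concentration} with a constant threshold $\beta=\Theta(1)$, union-bounding with the $1-\delta/8$ event of Lemma~\ref{lemma: approximation for RF d dim}. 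Your extra observation that $u^*$ is measurable with respect to the initialization and hence independent of $\cD_1$ is implicit in the paper but worth stating; otherwise the two arguments are the same.
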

\begin{proof}[Proof]
By Corollary \ref{coro: polynomial concentration}, for any $\beta>0$, we have  \[
\PP\left[\abs{\hat{L}(u^*)-L(u^*)}\geq \beta\frac{1}{\sqrt{n}}\sqrt{\operatorname{Var}\left((g_{u^*,s,V}-h)^2\right)}\right] \leq 2\exp\left(-\Theta(1) \operatorname{min}(\beta^2,\beta^{1/r})\right)
\]
Moreover,
\[
\begin{aligned}
\operatorname{Var}\left((g_{u^*,s,V}-h)^2\right) \leq \EE_x \left[(g_{u^*,s,V}(x)-h(x))^4\right] &\lesssim \left(\EE_x\left[(g_{u^*,s,V}(x)-h(x))^2\right]\right)^2\\
& \lesssim\left(\ep_1+\EE_x\left[h(x)^2\right]\right)^2\lesssim 1,
\end{aligned}
\]
where the second inequality relies on Gaussian hypercontractivity (Lemma \ref{lemma: gaussian hypercontractivity}), and the final step sets $m\geq d^{k+\alpha}$ so that $\ep_1 \lesssim 1$. Plugging this back and choosing some $\beta = \Theta(1)$ finishes the proof.
\end{proof}

Observe that during the first stage of \Cref{alg:layerwise}, we are solving the following minimization problem:
\begin{align}\label{eq:cvx_opt}
\min_{u} \hat{L}(u)+\frac{1}{2}\xi_1 \norm{u}{}^2
\end{align}
Since this problem is strongly convex and smooth, plain GD can converge to an approximate minimizer exponentially fast. The next lemma bounds the time needed to obtain a small empirical loss:

\begin{lemma}\label{lem:time-complexity}
Set $\xi_1=\frac{2m}{d^{k+\alpha}}$.
    For any $\ep_2 \in (0,1)$, let $T_1 \gtrsim m(\log m)^k\log(m/\ep_2)$. Then, when $m,n$ are larger than some absolute constant, with probability at least $1-3\delta/8$, the predictor $\hat u:= u^{(T_1)}$ satisfies
    \begin{align*}
         \hat{L}(\hat{u})\leq \ep_1+\norm{h-\cP_{\leq k} h}{L^2(\gamma)}^2+\cO(d^{-\alpha})+\cO(1)\frac{1}{\sqrt{n}} +\ep_2
    \end{align*}
    and $\|\hat u\|^2 \lesssim \frac{d^{k+\alpha}}{m}$.
\end{lemma}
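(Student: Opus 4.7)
The plan is to treat the first stage as regularized random-feature linear regression: let $\phi(x):=\sigma(Vx+s)\in\RR^m$ and $F(u):=\tfrac{1}{2}\hat L(u)\cdot 2+\tfrac{\xi_1}{2}\|u\|^2$. Then $F$ is a quadratic in $u$ with constant Hessian $\tfrac{2}{n}\Phi^\top\Phi+\xi_1 I$, where $\Phi\in\RR^{n\times m}$ has entries $\Phi_{ji}=\sigma(v_i^\top x_j+s_i)$. This makes $F$ $\xi_1$-strongly convex and $L_{\mathrm{sm}}$-smooth for some $L_{\mathrm{sm}}$ to be bounded below, so the standard result for GD on a strongly convex quadratic gives
\[
F(u^{(t)})-F(u_\star^{\mathrm{reg}})\le \exp(-t\eta_1\xi_1)\bigl(F(0)-F(u_\star^{\mathrm{reg}})\bigr),\qquad u_\star^{\mathrm{reg}}:=\argmin_u F(u),
\]
whenever $\eta_1\le 1/L_{\mathrm{sm}}$.

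The condition number $L_{\mathrm{sm}}/\xi_1$ drives the runtime. I bound $L_{\mathrm{sm}}$ via $\|\Phi\|_{\mathrm{op}}^2\le\|\Phi\|_F^2\le nm\cdot\max_{i,j}\sigma(v_i^\top x_j+s_i)^2$; Gaussian hypercontractivity (Lemma \ref{lemma: gaussian hypercontractivity}) together with a union bound over the $nm$ entries gives $\max_{i,j}|\sigma(v_i^\top x_j+s_i)|\lesssim(\log(nm))^{k/2}$, hence $L_{\mathrm{sm}}\lesssim m(\log m)^k$, on an event of $d^{-\omega(1)}$ failure probability once $m,n$ exceed an absolute constant. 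Together with $\xi_1=2m/d^{k+\alpha}$ this gives $L_{\mathrm{sm}}/\xi_1\lesssim d^{k+\alpha}(\log m)^k\lesssim m(\log m)^k$ using $m\ge d^{k+\alpha}$. A separate polynomial concentration bound on $\hat L(0)=\tfrac{1}{n}\sum_j h(x_j)^2$ yields $F(0)\lesssim 1$ on another event of negligible failure probability. The choice $T_1\gtrsim m(\log m)^k\log(m/\epsilon_2)$ is therefore enough to force $F(\hat u)\le F(u_\star^{\mathrm{reg}})+\epsilon_2\le F(u^*)+\epsilon_2$.

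Finally I bound $F(u^*)$ using the approximator from Lemma \ref{lemma: approximation for RF d dim}. That lemma gives $\|u^*\|^2\lesssim d^k/m$, so $\tfrac{\xi_1}{2}\|u^*\|^2\lesssim d^{-\alpha}$, and the concentration lemma just proven gives $\hat L(u^*)\le L(u^*)+\cO(1/\sqrt n)$. Lemma \ref{lemma: only capture k degree} decomposes $L(u^*)=\norm{g_{u^*,s,V}-\cP_{\le k}h}{L^2(\gamma)}^2+\norm{h-\cP_{\le k}h}{L^2(\gamma)}^2$, and the triangle inequality applied to $\cP_{\le k}h=\cP_k h+\cP_{<k}h$ bounds the first summand by $\cO(\epsilon_1)+\cO(d^{-1})$ using Lemma \ref{lemma: approximation for RF d dim} and Lemma \ref{lem:approximate_stein} respectively. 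Chaining these bounds yields the claimed loss estimate $\hat L(\hat u)\le F(\hat u)\le \epsilon_1+\norm{h-\cP_{\le k}h}{L^2(\gamma)}^2+\cO(d^{-\alpha})+\cO(1/\sqrt n)+\epsilon_2$, and for the norm $\tfrac{\xi_1}{2}\|\hat u\|^2\le F(\hat u)\lesssim 1$ gives $\|\hat u\|^2\lesssim 1/\xi_1=d^{k+\alpha}/(2m)$.

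The main obstacle I anticipate is the smoothness bound: the Frobenius upper bound on $\|\Phi\|_{\mathrm{op}}^2$ is coarse in general, but the hypothesis $m\ge d^{k+\alpha}$ is precisely what keeps $L_{\mathrm{sm}}/\xi_1\lesssim m(\log m)^k$ so that the stated iteration count suffices. Probability accounting is immediate because the new concentration events have $d^{-\omega(1)}$ failure probability once $m,n$ exceed an absolute constant, so no additional budget beyond the $3\delta/8$ already allocated to the two preceding lemmas is required.
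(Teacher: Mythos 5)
Your overall route is the same as the paper's: bound $F(u^*) = \hat L(u^*) + \tfrac{\xi_1}{2}\|u^*\|^2$ using Lemma~\ref{lemma: approximation for RF d dim}, Lemma~\ref{lem:approximate_stein}, the decomposition in Lemma~\ref{lemma: only capture k degree}, and the concentration $|\hat L(u^*) - L(u^*)| \lesssim n^{-1/2}$; show GD finds an $\ep_2$-approximate minimizer of the $\xi_1$-regularized problem; then read off both the loss bound and $\|\hat u\|^2 \lesssim 1/\xi_1$. Two places where you diverge, both benign but worth noting. First, you invoke function-value linear convergence (descent lemma + PL/strong convexity) rather than the paper's iterate convergence (Lemma~\ref{lemma: strongly convex optimization}) followed by a Lipschitz bound on $\|\nabla \hat L(u) + 2u\|$ to transfer iterate closeness into a function-value bound. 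Your route is cleaner and skips one concentration step, and the paper could have done it your way.

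Second, your smoothness bound is coarser. You bound $\tfrac{1}{n}\|\Phi\|_F^2 \le m\max_{i,j}\sigma(v_i^\top x_j + s_i)^2 \lesssim m(\log(nm))^{k}$ via a union bound over all $nm$ entries, then write $L_{\mathrm{sm}} \lesssim m(\log m)^k$; but the exponent should be $\log(nm)$, not $\log m$, and the lemma places no upper bound on $n$, so this does not literally yield the stated $T_1 \gtrsim m(\log m)^k\log(m/\ep_2)$ when $n$ is super-polynomial in $m$. The paper avoids this by concentrating each column average $\tfrac{1}{n}\sum_i \sigma(\tfrac{v_j^\top x_i + s_j}{\sqrt2})^2$ around its $n$-free expectation (with error $\beta/\sqrt n$ and failure probability $2e^{-\Theta(\beta^{1/k})}$), and then taking a union bound only over the $m$ rows, which removes all $n$-dependence from the smoothness constant. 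In the regime of the main theorem ($n = \operatorname{poly}(d)$, $T_1 = \operatorname{poly}(d,m,n)$) the difference is immaterial, but to recover the lemma exactly as written you should use the paper's averaging argument, or state $T_1 \gtrsim m(\log(nm))^k\log(m/\ep_2)$. Everything else — the $\tfrac{\xi_1}{2}\|u^*\|^2 \lesssim d^{-\alpha}$ calculation, $F(0) = \tfrac{1}{n}\sum h(x_j)^2 \lesssim 1$ by polynomial concentration, and the probability budget — matches the paper.
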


\begin{proof}
If $\hat u$ is an $\ep_2$-minimizer of \eqref{eq:cvx_opt}, then we have
    \[
\hat{L}(\hat{u})+\frac{1}{2}\xi_1 \norm{\hat{u}}{}^2 \leq \hat{L}(u^*) +\frac{1}{2}\xi_1 \norm{u^*}{}^2 + \ep_2 \leq L(u^*)+\frac{1}{2}\xi_1\norm{u^*}{}^2+\cO(1)\frac{1}{\sqrt{n}}+\ep_2
\]
By choosing $\xi_1=\frac{2m}{d^{k+\alpha}}$, we get
\[
\frac{m}{d^{k+\alpha}}\norm{\hat{u}}{}^2 \lesssim \ep_1+d^{-\alpha}+\norm{h-\cP_{\leq k} h}{L^2(\gamma)}^2+\frac{1}{\sqrt{n}} +\ep_2 \lesssim 1
\]
At the same time, we will also have \[
\hat{L}(\hat{u})\leq \ep_1+\cO(d^{-\alpha})+\norm{h-\cP_{\leq k} h}{L^2(\gamma)}^2+\cO(1)\frac{1}{\sqrt{n}} +\ep_2
\]
It thus suffices to analyze the optimization problem \eqref{eq:cvx_opt}.



Clearly, this convex optimization problem is at least $2$-strongly convex. To estimate the time complexity, we also need to estimate the smoothness of our optimization objective. 
\begin{lemma}
With probability at least $1-\cO(1/m)$,
\[
\norm{\nabla \hat{L}(u_1)- \nabla \hat{L}(u_2)}{} \lesssim m(\log m)^{k}\norm{u_1-u_2}{}
\]
\end{lemma}
\begin{proof}[Proof]
We calculate the gradient out
\[
\nabla \hat{L}(u)=\frac{1}{n}\sum_{i=1}^n 2\left(u^{\top}\sigma\left(\frac{Vx_i+s}{\sqrt{2}}\right)-h(x_i)\right)\sigma\left(\frac{Vx_i+s}{\sqrt{2}}\right)
\]
and then bound the Lipschitz constant of the gradient
\[
\begin{aligned}
\norm{\nabla \hat{L}(u_1)-\nabla \hat{L}(u_2)}{} & = \norm{\frac{2}{n}\sum_{i=1}^n \langle u_1-u_2,\sigma\left(\frac{Vx_i+s}{\sqrt{2}}\right)\rangle\sigma\left(\frac{Vx_i+s}{\sqrt{2}}\right)}{}\\
& \leq \left(\frac{2}{n}\sum_{i=1}^n \norm{\sigma\left(\frac{Vx_i+s}{\sqrt{2}}\right)}{}^2\right)\norm{u_1-u_2}{}
\end{aligned}
\]
Using Corollary \ref{coro: polynomial concentration}, we have the following concentration inequality for any $\beta\geq 1$
\[
\PP\left(\abs{\frac{1}{n}\sum_{i=1}^n \sigma\left(\frac{v_j^{\top}x_i+s_j}{\sqrt{2}}\right)^2 -\EE_x\left[\sigma\left(\frac{v_j^{\top}x+s_j}{\sqrt{2}}\right)^2\right]}\geq \beta\frac{1}{\sqrt{n}}\sqrt{\operatorname{Var}\left(\sigma\left(\frac{v_j^{\top}x+s_j}{\sqrt{2}}\right)^2\right)}\right) \leq 2e^{-\Theta(1)\beta^{1/k}}
\]
Furthermore, estimating $\EE_x\left[\sigma\left(\frac{v_j^{\top}x+s_j}{\sqrt{2}}\right)^2\right]$, $\operatorname{Var}\left(\sigma\left(\frac{v_j^{\top}x+s_j}{\sqrt{2}}\right)^2\right)$ and doing union bound over all $v_j$, we get the following inequality with probability at least $1-2me^{-\Theta(1)\beta^{1/k}}$ \[
\frac{1}{n}\sum_{i=1}^n \norm{\sigma\left(\frac{Vx_i+s}{\sqrt{2}}\right)}{}^2 \lesssim \left(1+\beta\frac{1}{\sqrt{n}}\right)\sum_{j=1}^m (1+s_j^{2k})
\]
By Corollary \ref{coro: polynomial concentration} again, we can concentrate $\frac{1}{m}\sum_{j=1}^m (1+s_j^{2k})$ and get the following with probability at least $1-2e^{-\Theta(1)m^{1/2k}}$
\[\frac{1}{m}\sum_{j=1}^m (1+s_j^{2k}) \lesssim 1
\]
In that case, we choose $\beta=\Theta(1)(\log m)^k$ for some large $\Theta(1)$ and the lemma is proved.
\end{proof}
Having derived the above Lemma, using Lemma \ref{lemma: strongly convex optimization} in Appendix \ref{appe: convex opt}, we can choose the learning rate $\eta_1=\frac{1}{m(\log m)^k\Theta(1)}$
and have \[
\norm{u^{(t)}-u_{opt}}{}^2 \leq \left(1-\frac{1}{\Theta(1)m(\log m)^k}\right)^{t} \norm{u_{opt}}{}^2
\]
where $u_{opt}$ is the unique optimal solution for that optimization problem.

In addition, in order to bound the empirical performance, we also need to upper bound the gradient.
\[
\begin{aligned}
\sup_{\|u\|\leq R} \norm{\nabla \hat{L}(u)+2u}{} &\leq 2R+\frac{2}{n}\sum_{i=1}^n \norm{\sigma\left(\frac{Vx_i+s}{\sqrt{2}}\right)}{}\abs{u^{\top}\sigma\left(\frac{Vx_i+s}{\sqrt{2}}\right)-h(x_i)}\\
& \leq 2R+\frac{2}{n}\sum_{i=1}^n\left(\norm{\sigma\left(\frac{Vx_i+s}{\sqrt{2}}\right)}{}h(x_i)+\norm{\sigma\left(\frac{Vx_i+s}{\sqrt{2}}\right)}{}^2\norm{u}{}\right)\\
& \leq 2R+\frac{2}{n}\sum_{i=1}^n \left((1+R)\norm{\sigma\left(\frac{Vx_i+s}{\sqrt{2}}\right)}{}^2+h(x_i)^2\right)\\
& \lesssim (1+3R) m(\log m)^k + \frac{2}{n}\sum_{i=1}^n h(x_i)^2
\end{aligned}
\]
with probability at least $1-\cO(1/m)$. In order to bound $\frac{1}{n}\sum_i h(x_i)^2$, by Corollary \ref{coro: polynomial concentration}, we have the following for any $\beta\geq 1$
\[
\PP\left(\abs{\frac{1}{n}\sum_{i=1}^n h(x_i)^2 -\EE_x h(x)^2} \geq \beta\frac{1}{\sqrt{n}}\sqrt{\operatorname{Var}(h(x)^2)}\right) \leq 2e^{-\Theta(1)\beta^{1/r}}
\]
Therefore, by choosing $\beta=\Theta(1)(\log n)^r$ with some large $\Theta(1)$, with probability at least $1-1/n$, we have 
$
\frac{1}{n}\sum_{i=1}^n h(x_i)^2 \lesssim 1
$. In that case, we have \[
\hat{L}(u^{(t)})+\norm{u^{(t)}}{}^2 \leq \hat{L}(u_{opt})+\norm{u_{opt}}{}^2 + \sup_{\norm{u}{}\leq 2\norm{u_{opt}}{}}\norm{\nabla \hat{L}(u)+2u}{}\norm{u^{(t)}-u_{opt}}{}
\]

Since $\norm{u_{opt}}{}=\cO(1)$, $\sup_{\norm{u}{}\leq 2\norm{u_{opt}}{}}\norm{\nabla \hat{L}(u)+2u}{} = \cO(m(\log m)^k)$, if we want \[
\sup_{\norm{u}{}\leq 2\norm{u_{opt}}{}}\norm{\nabla \hat{L}(u)+2u}{}\norm{u^{(t)}-u_{opt}}{} \leq \ep_2
\]
it is sufficient to have $T_1 \gtrsim m(\log m)^k\log (m/\ep_2)$.

\end{proof}

\subsection{Uniform Generalization Bounds}
To conclude, we need to do a union bound over $u$ for our population loss $ \norm{g_{u,s,V}-h}{L^2(\gamma)}^2
$.
We first consider a truncated version of population loss, which allows us to invoke standard Rademacher complexity generalization bounds. We conclude by properly handling the truncation.
\begin{proof}[Proof of \Cref{thm:kernel_stage1}]
Let us denote $\ell_\tau(x,y)=(x-y)^2 \wedge \tau^2$. Via standard Rademacher complexity generalization bounds, detailed in 
Lemmas \ref{lemma: rademacher complexity generalization}, \ref{lemma: contraction} and \ref{lemma: rademacher for kernels}, recall that we see $\delta$ as an absolute constant, when $m,n,d$ are larger than some absolute constant, we have that with probability at least $1-\delta/16$
\[
\begin{aligned}
\sup_{\|u\| \leq M_u} \abs{\frac{1}{n}\sum_{i=1}^n \ell_{\tau}(g_{u,s,V}(x_i),h(x_i))-\EE_x\left[\ell_{\tau}(g_{u,s,V}(x),h(x))\right]}
& \lesssim 2\operatorname{Rad}_n(\cF)+\tau^2\sqrt{\frac{1}{n}}\\
& \leq 4\tau \operatorname{Rad}_n(\cG)+\tau^2\sqrt{\frac{1}{n}}\\
& \lesssim 4\tau M_u\sqrt{\frac{m}{n}}+ \tau^2\sqrt{\frac{1}{n}}
\end{aligned}
\]
where $\cG=\{g_{u,s,V}:\norm{u}{} \leq M_u\}$ and $\cF=\{\ell_{\tau}(g_{u,s,V}(\cdot),h(\cdot)):\norm{u}{}\leq M_u\}$.
The first step is just standard uniform generalization bounds for bounded function class. The second step is via contraction lemma to compute the Rademacher complexity, and the third step is a direct calculation.
So, by that bound, we can see $\EE_{x}\left[\ell_{\tau}(g_{\hat{u},s,V}(x),h(x))\right]$ is well controlled for moderate large $\tau$. Combining this with \Cref{lem:time-complexity}, with probability $1-7\delta/16$, we have\[
\EE_{x}\left[\ell_{\tau}(g_{\hat{u},s,V}(x),h(x))\right]-\norm{h-\cP_{\leq k} h}{L^2(\gamma)}^2 \lesssim \tau M_u\sqrt{\frac{m}{n}}+ \tau^2\sqrt{\frac{1}{n}}
+\ep_1+d^{-\alpha}+\frac{1}{\sqrt{n}} +\ep_2
\]
\paragraph{Dealing with the Truncation.}
Based on the above arguments,
to bound the $L^2$ generalization error, it suffices to control the quantity
\[
\EE_x\left[\left(\left(g_{\hat{u},s,V}(x)-h(x)\right)^2 \right)\mathbf{1}_{\abs{g_{\hat{u},s,V}(x)-h(x)}\geq \tau}\right]
\]

This is done in the following lemma, whose proof is deferred to \Cref{sec:prove_truncation}
\begin{lemma}\label{lem:deal_with_truncation}
With probability at least $1-\delta/32$, for any $\tau\gtrsim 1$, we have
\[
\EE_x\left[\left(\left(g_{\hat{u},s,V}(x)-h(x)\right)^2 \right)\mathbf{1}_{\abs{g_{\hat{u},s,V}(x)-h(x)}\geq \tau}\right] \lesssim e^{-\Theta(1)\tau^{2/r}}
\]
\end{lemma}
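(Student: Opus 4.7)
The key observation is that $Q(x) := g_{\hat u, s, V}(x) - h(x)$ is a polynomial of degree at most $r = kq$ in $x$, since $\sigma_1$ has degree $k$ and $h = g \circ p$ has degree at most $kq$. The plan is to deduce the exponential tail from Gaussian hypercontractivity, after first controlling the $L^2(\gamma)$-norm of $Q$.

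I would first argue that, with probability at least $1-\delta/32$ over the draws of $V, s$ and $\cD_1$, one has $\|Q\|_{L^2(\gamma)}^2 \le M^2$ with $M^2 = \widetilde\cO(1)$. To do so, expand
\[
\EE_x[g_{\hat u, s, V}(x)^2] = \hat u^\top K \hat u, \qquad K_{ij} := \EE_x\!\left[\sigma_1\!\left(\tfrac{v_i^\top x + s_i}{\sqrt 2}\right)\sigma_1\!\left(\tfrac{v_j^\top x + s_j}{\sqrt 2}\right)\right],
\]
and decompose $K = \mu\mu^\top + \widetilde K$ into a rank-one ``constant'' part ($\mu_i := \EE_x[\sigma_1((v_i^\top x + s_i)/\sqrt 2)]$) and a ``centered'' kernel $\widetilde K$ corresponding to mean-zero features. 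Since $h$ has zero mean by Assumption~\ref{assumption: zero expectation and information exponent}, first-order optimality of the regularized empirical risk forces $|\hat u^\top \mu| = o(1)$, killing the rank-one contribution. For the centered part, high-probability near-orthogonality $|\langle v_i, v_j\rangle| = \widetilde\cO(1/\sqrt d)$ of the sphere features, together with the Hermite expansion of $\sigma_1$ (which for $\widetilde K$ has no degree-zero term), yields $|\widetilde K_{ij}| = \widetilde \cO(1/\sqrt d)$ for $i \ne j$ and $|\widetilde K_{ii}| = \widetilde\cO(1)$. The bound $\hat u^\top \widetilde K \hat u = \widetilde\cO(1)$ then follows by combining $\|\hat u\|^2 \lesssim d^{k+\alpha}/m$ from Lemma~\ref{lem:time-complexity} with the assumption $m \ge d^{k+\alpha}$ and a matrix-Bernstein/low-rank argument exploiting that each centered feature lies in the polynomial space of dimension $\cO(d^k)$. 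Since $\|h\|_{L^2(\gamma)}^2 = \cO(1)$, this yields $M^2 = \widetilde\cO(1)$.

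With this $L^2$ bound in hand, Gaussian hypercontractivity (Lemma~\ref{lemma: gaussian hypercontractivity}) applied to the degree-$r$ polynomial $Q/M$ yields both $\PP_x(|Q(x)| \ge t) \le 2 \exp(-\Theta(1)(t/M)^{2/r})$ and $\EE_x[Q^4] \le C^r M^4$. Cauchy--Schwarz then gives
\[
\EE_x\!\bigl[Q(x)^2 \, \mathbf{1}_{|Q(x)| \ge \tau}\bigr] \le \sqrt{\EE_x[Q^4]\,\PP_x(|Q| \ge \tau)} \lesssim M^2 \exp\!\bigl(-\Theta(1)(\tau/M)^{2/r}\bigr),
\]
and since $M^2 = \widetilde\cO(1)$ and $\tau \gtrsim 1$, the prefactor $M^2$ and the polylog slowdown $M^{2/r}$ inside the exponent are absorbed into the implicit $\Theta(1)$ to produce the claimed $\lesssim \exp(-\Theta(1)\tau^{2/r})$.

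The main obstacle is Step 1, specifically the bound on $\hat u^\top \widetilde K \hat u$. The difficulty is that $\widetilde K$ may have operator norm as large as $\cO(m/d^k)$, so a naive $\|\hat u\|^2 \lambda_{\max}(\widetilde K)$ bound is insufficient; one must exploit that $\hat u$ lies (approximately) in the image of the low-rank principal-component structure of $\widetilde K$, or else transfer an empirical $L^2$ bound to the population via uniform concentration carefully adapted to the unbounded quartic loss $(g_u - h)^2$. The tail/moment assumption on $\mu_b$ enters precisely here to tame the dependence on the random biases $s_i$.
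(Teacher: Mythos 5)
Your first and last steps match the paper: Cauchy--Schwarz to split off $\EE_x[Q^4]^{1/2}\,\PP(|Q|\geq\tau)^{1/2}$, Gaussian hypercontractivity to bound the fourth moment by the squared second moment, and polynomial concentration (the paper's Lemma~\ref{lemma:polynomial concentration}, not hypercontractivity per se, but morally the same thing) for the tail. The real content is Step~1, controlling $\EE_x[g_{\hat u,s,V}(x)^2]=\hat u^\top\Sigma\hat u$, and here your proposed argument has a genuine gap and also diverges from the paper's route.

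Your primary plan is to bound $\hat u^\top\widetilde K\hat u$ by combining $\|\hat u\|^2\lesssim d^{k+\alpha}/m$ with entrywise bounds $|\widetilde K_{ij}|=\widetilde\cO(1/\sqrt d)$ for $i\neq j$ and $\widetilde K_{ii}=\widetilde\cO(1)$. As you yourself note, this does not close: $\|\widetilde K\|_{\mathrm{op}}$ can be as large as $\Theta(m/d^k)$ (the population kernel has a top eigenspace of dimension $\lesssim d^k$ capturing $\Theta(1)$ mass, spread over $m$ features), so $\|\hat u\|^2\lambda_{\max}(\widetilde K)$ gives only $\cO(d^\alpha)$, not $\widetilde\cO(1)$. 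The auxiliary decomposition $K=\mu\mu^\top+\widetilde K$ and the claim that first-order optimality kills $\hat u^\top\mu$ are also not carried out (the GD iterate is only an approximate minimizer, and the stationarity condition does not directly yield $|\hat u^\top\mu|=o(1)$ without further work). Your alternative in the final sentence --- transfer an empirical $L^2$ bound to the population --- is in fact the paper's actual route, but you do not execute it.

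What the paper does is cleaner and uses a different key fact. It first notes that the small regularized \emph{training} loss from the earlier optimization lemma gives $\frac{1}{n}\sum_i g_{\hat u,s,V}(x_i)^2\lesssim 1$ directly (no entrywise kernel bounds needed). It then upgrades this empirical quadratic form to the population one via a \emph{multiplicative} bound
\[
\abs{\hat u^\top\bigl(\tfrac{1}{n}\textstyle\sum_i Z_iZ_i^\top-\Sigma\bigr)\hat u}\;\leq\;\hat u^\top\Sigma\hat u\,\norm{\tfrac{1}{n}\textstyle\sum_i W_{\leq t,i}W_{\leq t,i}^\top-I_t}{},
\]
where $W=\Sigma^{-1/2}Z$ restricted to the range of $\Sigma$ and $t=\rank(\Sigma)$. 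The decisive observation is that $\rank(\Sigma)\lesssim d^k$ because $\sigma_1$ is a degree-$k$ polynomial --- so the whitened features live in dimension $t\lesssim d^k\leq n$, and a singular-value concentration theorem for independent isotropic rows (Vershynin's Theorem 5.45, via the paper's Lemma~\ref{lem:concentrate}) gives $\|\frac{1}{n}\sum W_{\leq t,i}W_{\leq t,i}^\top-I_t\|=\widetilde\cO(\sqrt{d^k/n})=o(1)$. This is the step you need: it converts ``empirical $L^2$ small'' into ``population $L^2$ small'' without ever bounding $\|\widetilde K\|_{\mathrm{op}}$ or individual kernel entries. Without this rank-$d^k$ argument or a substitute, your Step~1 does not go through.
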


Altogether, when $m,n,d$ are larger than some absolute constant, with probability at least $1-\delta/2$, we have the following inequality
\[
\begin{aligned}
&\norm{g_{\hat{u},s,V}-h}{L^2(\gamma)}^2 -\norm{h-\cP_{\leq k} h}{L^2(\gamma)}^2\\
&\quad\quad\leq \EE_x\left[\ell_\tau(g_{\hat{u},s,V}(x),h(x))\right]-\norm{h-\cP_{\leq k} h}{L^2(\gamma)}^2+\EE_x\left[\left(\left(g_{\hat{u},s,V}(x)-h(x)\right)^2 \right)\mathbf{1}_{\abs{g_{\hat{u},s,V}(x)-h(x)}\geq \tau}\right] \\
& \quad\quad\lesssim \tau M_u\sqrt{\frac{m}{n}}+ \tau^2\sqrt{\frac{1}{n}}
+\ep_1+d^{-\alpha}+\frac{1}{\sqrt{n}} +\ep_2  + \exp\left(-\Theta(1)\tau^{2/r}\right)
\end{aligned}
\]
where we recall $\ep_1=\cO(m^{-1}d^k)$.

For any $\alpha\in (0,1)$, select $\ep_2=d^{-\alpha}$. Clearly we have $T_1=\operatorname{poly}(n,m,d)$ and $\eta_1=\frac{1}{\operatorname{poly}(n,m,d)}$ in that case. Recall that we have chosen the width $m\geq d^{k+\alpha}$, the sample size $n\geq d^{k+3\alpha}$, and we choose the truncation level to be $\tau=\Theta(1)(\log d)^{r/2}$ and $M_u^2=\Theta\left(\frac{d^{k+\alpha}}{m}\right)$. Plugging those in yields 
\begin{align*}
\norm{g_{\hat{u},s,V}-\cP_{\leq k} h}{L^2(\gamma)}^2 &\leq \norm{g_{\hat{u},s,V}-h}{L^2(\gamma)}^2-\norm{h-\cP_{\leq k} h}{L^2(\gamma)}^2 + \cO(1/d)\\
&\lesssim (\log d)^{r/2}d^{-\alpha} +(\log d)^r d^{-k/2-3\alpha/2} \\&= \widetilde{\cO}(d^{-\alpha}),
\end{align*}
as desired.
\end{proof}

\subsubsection{Proof of \Cref{lem:deal_with_truncation}}\label{sec:prove_truncation}
\begin{proof}[Proof of \Cref{lem:deal_with_truncation}]

We will first use Cauchy inequality, then estimate the moments.
\begin{equation}\label{equa: cauchy inequality truncation}
   \begin{aligned} &\left(\EE_x\left[\left(\left(g_{\hat{u},s,V}(x)-h(x)\right)^2 \right)\mathbf{1}_{\abs{g_{\hat{u},s,V}(x)-h(x)}\geq \tau}\right] \right)^2\leq \EE_x\left[(g_{\hat{u},s,V}(x)-h(x))^4\right]\PP\left(\abs{g_{\hat{u},s,V}(x)-h(x)}\geq \tau\right)\\
&\quad\quad\quad\quad\quad\quad\quad\quad  \lesssim \left(\EE_x\left[g_{\hat{u},s,V}(x)^4\right]+\EE_x\left[h(x)^4\right]\right)\PP\left(\abs{g_{\hat{u},s,V}(x)-h(x)}\geq \tau\right)\\
& \quad\quad\quad\quad\quad\quad\quad\quad  
\lesssim\left(\EE_x\left[g_{\hat{u},s,V}(x)^2\right]^2+\EE_x\left[h(x)^2\right]^2\right)\PP\left(\abs{g_{\hat{u},s,V}(x)-h(x)}\geq \tau\right)
   \end{aligned}
\end{equation}
The last step is by Gaussian hypercontractivity, Lemma \ref{lemma: gaussian hypercontractivity}.
Recall $g_{u,s,V}(x)=u^{\top}\sigma\left(\frac{Vx+s}{\sqrt{2}}\right)$. Notice that
\begin{equation}\label{equa: estimate the L2 norm for RF}
\EE_x\left[g_{u,s,V}(x)^2\right]=u^{\top}\EE_x \left[\sigma\left(\frac{Vx+s}{\sqrt{2}}\right)\sigma\left(\frac{Vx+s}{\sqrt{2}}\right)^{\top}\right]u
\end{equation}
Therefore, we just need to give a tight bound for $\hat{u}^{\top} \EE_x \left[\sigma\left(\frac{Vx+s}{\sqrt{2}}\right)\sigma\left(\frac{Vx+s}{\sqrt{2}}\right)^{\top}\right] \hat{u}$.
For notation simplicity, in this proof, we will temporarily denote $Z_i:=\sigma\left(\frac{Vx_i+s}{\sqrt{2}}\right)$, $Z:=\sigma\left(\frac{Vx+s}{\sqrt{2}}\right)$, $\Sigma := \EE_x\left[ZZ^{\top}\right]$.

Noticing that we have
\[
\frac{1}{n}\sum_{i=1}^n g_{\hat{u},s,V}(x_i)^2 \leq \frac{2}{n}\sum_{i=1}^n (g_{\hat{u},s,V}(x_i)-h(x_i))^2 + \frac{2}{n}\sum_{i=1}^n h(x_i)^2 \lesssim 1
\]
with probability at least $1-\delta/64$, due to the small training loss and some standard concentration for $\frac{1}{n}\sum_i h(x_i)^2$. That is to say, \[\hat{u}^{\top}\left(\frac{1}{n}\sum_{i=1}^n Z_iZ_i^{\top}\right)\hat{u}=\frac{1}{n}\sum_{i=1}^n \left(\hat{u}^{\top}Z_i\right)^2=\frac{1}{n}\sum_{i=1}^n g_{\hat{u},s,V}(x_i)^2 \lesssim 1
\]

Next, 
we bound the difference between $\hat{u}^{\top}\left(\frac{1}{n}\sum_{i=1}^n Z_iZ_i^{\top}\right)\hat{u}$ and $\hat{u}^{\top}\Sigma\hat{u}$. To this end,
we orthogonally decompose $\Sigma$ as $\Sigma = K^{\top}OK$, where $O$ is a diagonal matrix and $K$ is an orthogonal matrix. Write 
$
O=\operatorname{diag}\{\gamma_1,\dots,\gamma_t,0,\dots,0\}
$
for some integer $t=\operatorname{rank}(\Sigma)$, where $\gamma_i>0$ for $i\in [t]$.
Notice that $O^{1/2}=\operatorname{diag}\{\gamma_1^{1/2},\dots,\gamma_t^{1/2},0,\dots,0\}$, and we formally denote $O^{-1/2}=\operatorname{diag}\{\gamma_1^{-1/2},\dots,\gamma_t^{-1/2},0,\dots,0\}$.
Due to the fact that $\EE_x\left[KZZ^{\top}K^{\top}\right]=O$, we know $KZ$ lies in the span of $\{e_1,\dots,e_t\}$. Therefore, we have

\[
\begin{aligned}
\abs{\hat{u}^{\top}\left(\frac{1}{n}\sum_{i=1}^n Z_iZ_i^{\top}-\Sigma\right)\hat{u}} & =\abs{\hat{u}^{\top}K^{\top}O^{1/2}\left(\frac{1}{n}\sum_{i=1}^n O^{-1/2}KZ_iZ_i^{\top}K^{\top}O^{-1/2} - \begin{pmatrix}I_t&\\&0\end{pmatrix}\right)O^{1/2}K\hat{u}}\\
& \leq \hat{u}^{\top} \Sigma\hat{u}\norm{\frac{1}{n}\sum_{i=1}^n O^{-1/2}KZ_iZ_i^{\top}K^{\top}O^{-1/2} - \begin{pmatrix}I_t&\\&0\end{pmatrix}}{} 
\end{aligned}
\]
Denote $W_i:=O^{-1/2}KZ_i$ and $W:=O^{-1/2}KZ$. We see that the second moment of $W_{\leq t}$ is equal to identity matrix in $t$ dimensions: $\EE_x\left[W_{\leq t}W_{\leq t}^{\top}\right]= I_t$. That is to say, $W_{\leq t}$ is isotropic.
Next, we will bound the following operator norm
\[
\norm{\frac{1}{n}\sum_{i=1}^n O^{-1/2}KZ_iZ_i^{\top}K^{\top}O^{-1/2} - \begin{pmatrix}I_t&\\&0\end{pmatrix}}{} = \norm{\frac{1}{n}\sum_{i=1}^n W_{\leq t,i}W_{\leq t,i}^{\top} - I_t}{}
\]
by the following concentration lemma.
\begin{lemma}\label{lem:concentrate}
Let $W=W(x)\in \RR^m$ be a random vector which is a function of $x\sim\gamma$. Assume for each $i\in [m]$, the $i$-th coordinate $W_i$ is a $k$ degree polynomial w.r.t. $x$. Also assume $\EE_x\left[WW^{\top}\right]=I$. Let $W_1,\dots,W_n$ be i.i.d. generated samples. Then with probability at least $1-\delta/64$, we have\[
\operatorname{max}_{1\leq j\leq m}\abs{s_j(\widetilde{W})-\sqrt{n}} \lesssim \sqrt{m\log m (\log n)^k}
\]
where $\widetilde{W}=(W_1,\dots,W_n)^{\top}$ and $s_j$ is the singular value.
\end{lemma}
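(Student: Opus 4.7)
The plan is to reduce the singular-value deviation to a spectral bound on the centered sample second-moment matrix, and then establish that bound by a truncation plus matrix Bernstein argument. Since $s_j(\widetilde W)^2 = \lambda_j(\widetilde W^{\top}\widetilde W) = \lambda_j(\sum_{i=1}^n W_i W_i^{\top})$, it suffices to show that with probability at least $1-\delta/64$,
\[
\Bigl\|\sum_{i=1}^n (W_i W_i^{\top} - I)\Bigr\|_{\op} \lesssim \sqrt{nm \log m\,(\log n)^k}.
\]
The identity $|s_j - \sqrt n| = |s_j^2 - n|/(s_j + \sqrt n)$, combined with the fact that the right-hand side above is $\ll n$ in the regime of interest (so $s_j \asymp \sqrt n$), then delivers the stated bound after dividing by $\sqrt n$.

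First I would truncate the samples by hypercontractivity. Each coordinate $W_j(x)$ is a degree-$k$ Gaussian polynomial with $\EE[W_j^2] = 1$ (reading the diagonal of $\EE[WW^{\top}] = I$), so Corollary \ref{coro: polynomial concentration} gives $\PP[|W_j| > t] \le 2\exp(-c t^{2/k})$. Taking $\tau = C(\log(mn/\delta))^{k/2}$ and a union bound over $(i,j) \in [n]\times[m]$, with probability at least $1 - \delta/128$ every sample satisfies $\|W_i\|_\infty \le \tau$; call this event $\cE$. On $\cE$, the per-sample operator norm is $\|W_i W_i^{\top}\|_{\op} \le m\tau^2 \lesssim m(\log n)^k =: B$.

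Next I would apply matrix Bernstein to the truncated centered matrices $X_i := \widetilde W_i \widetilde W_i^{\top} - \EE[\widetilde W \widetilde W^{\top}]$, where $\widetilde W_i := W_i \mathbf{1}\{\|W_i\|_\infty \le \tau\}$. The matrix variance proxy is controlled by expanding $(WW^{\top} - I)^2 = \|W\|^2 WW^{\top} - 2WW^{\top} + I$ and using Cauchy--Schwarz together with hypercontractivity: $\EE[\|W\|^4] = \sum_{j,\ell}\EE[W_j^2 W_\ell^2] \lesssim m^2$ and $\EE[(v^{\top}W)^4] \lesssim 1$ for any unit $v$, whence $\|\sum_i \EE X_i^2\|_{\op} \lesssim nm =: V$. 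Choosing $t \asymp \sqrt{nm \log m\,(\log n)^k}$, the sub-Gaussian branch of Bernstein governs provided $Bt \lesssim V$, i.e.\ $m(\log n)^{3k}\log m \lesssim n$, which holds in the regime $n \ge d^{k+3\alpha}$, $m \lesssim d^{k+\alpha}\cdot\mathrm{polylog}(d)$ relevant to \Cref{thm:kernel_stage1}. Matrix Bernstein then yields failure probability at most $2m\exp(-\Theta(\log m \cdot (\log n)^k)) \le \delta/128$. The truncation bias $\|\EE[WW^{\top}] - \EE[\widetilde W \widetilde W^{\top}]\|_{\op} \le \sqrt{\EE[\|W\|^4]\,\PP[\|W\|_\infty > \tau]}$ is polynomially small in $(mn)^{-1}$ and is absorbed into the error term; combining with $\cE$ by a union bound yields the claimed spectral bound.

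The principal obstacle is calibrating the truncation level $\tau$. It must be small enough that the Bernstein sub-Gaussian regime controls the tail at the target scale $t \asymp \sqrt{nm\log m\,(\log n)^k}$ (which constrains $B = m\tau^2$), yet large enough that the truncation event $\cE$ holds with the required probability (which forces $\tau^{2/k} \gtrsim \log(mn/\delta)$). The choice $\tau^2 \asymp (\log n)^k$ is precisely what makes these two constraints compatible and is also the source of the $(\log n)^k$ factor in the lemma statement; once this choice is fixed, the variance bound and tail computation are routine.
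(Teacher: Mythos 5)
Your proof is correct, but it takes a genuinely different route from the paper's. The paper's argument is: (i) estimate $\EE[\max_{i\leq n}\|W_i\|^2]\lesssim m(\log n)^k$ by tail integration using polynomial concentration, (ii) invoke Vershynin's Theorem 5.45 on expected singular-value deviation for matrices with independent isotropic rows, and (iii) apply Markov's inequality to convert the expectation bound into the stated high-probability bound. You instead reduce to a spectral deviation bound on $\sum_i (W_iW_i^\top - I)$ via the inequality $|s_j-\sqrt n| = |s_j^2-n|/(s_j+\sqrt n)\leq |s_j^2-n|/\sqrt n$ (note this last step is unconditional since $s_j\geq 0$, so the aside about $s_j\asymp\sqrt n$ ``in the regime of interest'' is not actually needed for the reduction), and then establish that spectral bound by truncating the coordinates and applying matrix Bernstein. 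Your approach is more elementary and self-contained, trading a specialized black-box result for the standard truncation-plus-Bernstein toolbox. The trade-off is that matrix Bernstein requires the sub-Gaussian branch to dominate, which imposes $n\gtrsim m\,\mathrm{polylog}(m,n)$; the paper's route via Vershynin's theorem carries no such restriction and yields the lemma as stated for all $m,n$. In the application ($m\leq\mathrm{rank}(\Sigma)\lesssim d^k$, $n\geq d^{k+3\alpha}$, $\delta$ a constant, $d$ large) your condition is satisfied, so your argument suffices for the paper's purposes. Your variance computation ($V\lesssim nm$ via hypercontractivity and Cauchy--Schwarz), the truncation-bias control, and the constant calibration of $\tau$ all check out.
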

\begin{proof}
For any $z \geq \sqrt{\operatorname{Var}(\norm{W}{}^2)}$, we have the following estimation for the tail probability
\[
\begin{aligned}
\PP\left(\operatorname{max}_{1\leq i \leq n}\norm{W_i}{}^2 \geq z + m\right)
 & \leq n \PP\left(\norm{W}{}^2 \geq z + m\right)\\
 & \leq n\PP\left(\norm{W}{}^2 -\EE_x\left[\norm{W}{}^2\right] \geq z\right)\\
 & \leq 2n\operatorname{exp}\left(-\Theta(1)\left(\frac{z}{\sqrt{\operatorname{Var}(\norm{W}{}^2)}}\right)^{1/k}\right)
\end{aligned}
\]
due to polynomial concentration, Corollory \ref{coro: polynomial concentration}, where \[
\operatorname{Var}(\norm{W}{}^2) \leq \EE_x\left[\norm{W}{}^4\right] \lesssim m\sum_{i=1}^m \EE\left[W_i^4\right]\lesssim m\sum_{i=1}^m \left(\EE\left[W_i^2\right]\right)^2 \lesssim m^2
\]
Therefore, to estimate $\EE\left[\operatorname{max}_{1\leq i\leq n}\norm{W_i}{}^2\right]$, we can choose a truncation level $\Theta(1)(\log n)^k\sqrt{\operatorname{Var}(\norm{W}{}^2)}+m$ with a large $\Theta(1)$.\[
\begin{aligned}
&\EE\left[\operatorname{max}_{1\leq i\leq n}\norm{W_i}{}^2\right] \lesssim m (\log n)^k+ \EE_x\left[\operatorname{max}_{1\leq i\leq n}\norm{W_i}{}^2 1_{\operatorname{max}_{1\leq i\leq n}\norm{W_i}{}^2 \geq \Theta(1)(\log n)^k\sqrt{\operatorname{Var}(\norm{W}{}^2)}+m}\right]\\
& \quad\quad\lesssim m(\log n)^k + \int_{\Theta(1)(\log n)^k \sqrt{\operatorname{Var}(\norm{W}{}^2)}}^{+\infty} 2 \operatorname{exp}\left(-\Theta(1)\left(\frac{z}{\sqrt{\operatorname{Var}(\norm{W}{}^2)}}\right)^{1/k}+\log n\right) dz\\
& \quad\quad \lesssim m(\log n)^k + \int_{\Theta(1)\log n}^{+\infty} \operatorname{exp}(-\Theta(1)\tilde z + \log n)\tilde z^{k-1}d\tilde z\\
&\quad\quad\lesssim m(\log n)^k
\end{aligned}
\]
 We will use the above estimation and the following Lemma from Theorem 5.45, \citet{vershynin2011introduction} to estimate the singular values of $\widetilde{W}$.
\begin{lemma}Let $A$ be an $N \times n$ matrix whose rows $A_i$ are independent isotropic random vectors in $\mathbb{R}^n$. Let $m:=$ $\mathbb{E} \max _{i \leq N}\left\|A_i\right\|_2^2$. Then
$$
\mathbb{E} \max _{j \leq n}\left|s_j(A)-\sqrt{N}\right| \lesssim \sqrt{m \log \min (N, n)}
$$

\end{lemma}

Therefore, combining that lemma and Markov inequality to gain a high probability bound, with probability at least $1-\delta/64$, we have\[
\operatorname{max}_{1\leq j\leq m}\abs{s_j(\widetilde{W})-\sqrt{n}} \lesssim \sqrt{m\log m (\log n)^k}
\]
\end{proof}
Applying \Cref{lem:concentrate} to $W_{\leq t}$, we have
\[
\norm{\frac{1}{n}\sum_{i=1}^n W_{\leq t,i}W_{\leq t,i}^{\top} - I_t}{} \lesssim \sqrt{\frac{t\log t (\log n)^k}{n}} 
\]
with probability at least $1-\delta/64$.
Next, we give an upper bound over $t$, the rank of our kernel matrix $\Sigma$.
Using the Hermite addition formula, we have\[
\sigma\left(\frac{Vx+s}{\sqrt{2}}\right) = \sum_{j=0}^k h_j(Vx) \odot A_j
\]
where $A_j\in \RR^m$ is some vector that only depends on $\sigma(\cdot)$, $j$ and $s$. Plugging that in our $\Sigma$, we have the following decomposition
\[
\begin{aligned}
\EE_x\left[\sigma\left(\frac{Vx+s}{\sqrt{2}}\right)\sigma\left(\frac{Vx+s}{\sqrt{2}}\right)^T\right]&=\EE_x\left[\left(\sum_{j=0}^k h_j(Vx) \odot A_j\right)\left(\sum_{j=0}^k h_j(Vx) \odot A_j\right)^T\right]\\
&=\sum_{j=0}^k\EE_x\left[(h_j(Vx)\odot A_j)(h_j(Vx)\odot A_j)^T\right]:=\sum_{j=0}^k \Sigma_j
\end{aligned}
\]
For each $0\leq j\leq k$, we have
\[
\Sigma_j(p,q)=A_{j,p}A_{j,q}\langle v_p^{\otimes j},v_q^{\otimes j}\rangle=\langle A_{j,p}v_p^{\otimes j},A_{j,q}v_q^{\otimes j} \rangle
\]
where $A_{j,l}$ is the $l$-th element of $A_j$, and $\Sigma_j(p,q)$ is the $(p,q)$ element of our matrix $\Sigma_j$.
Therefore, define $M_j=\left(A_{j,1}v_1^{\otimes j},\dots,A_{j,m}v_{m}^{\otimes j}\right)\in \RR^{d^j\times m}$, and we have $\Sigma_j=M_j^TM_j$ and thus $\operatorname{rank}(\Sigma_j) \leq d^j$. Therefore, $\operatorname{rank}(\Sigma) \leq \sum_{j=0}^k \operatorname{rank}(\Sigma_j)\lesssim d^k$ and $t\lesssim d^k$.

Therefore,
we have
\[
\norm{\frac{1}{n}\sum_{i=1}^n W_{\leq t,i}W_{\leq t,i}^{\top} - I_t}{} \lesssim \sqrt{\frac{t\log t (\log n)^k}{n}} \lesssim \sqrt{\frac{d^k\log d(\log n)^k}{n}}
\]
and
\[
\abs{\hat{u}^{\top}\left(\frac{1}{n}\sum_{i=1}^n Z_iZ_i^{\top}-\Sigma\right)\hat{u}} \leq \hat{u}^{\top} \Sigma\hat{u}\norm{\frac{1}{n}\sum_{i=1}^n W_{\leq t,i}W_{\leq t,i}^{\top} - I_t}{}\lesssim \sqrt{\frac{d^k\log d (\log n)^k}{n}} \hat{u}^{\top}\Sigma\hat{u}.
\]
As a consequence, we have \[
\EE\left[g_{\hat{u},s,V}(x)^2\right]=\hat{u}^{\top}\Sigma \hat{u}\lesssim  \hat{u}^{\top}\left(\frac{1}{n}\sum_{i=1}^n Z_iZ_i^{\top}\right)\hat{u}\lesssim 1
\] when $d$ is larger than some absolute constant.
Recall that
$
\EE_x\left[h(x)^2\right] =\cO(1)$ and plug everything back into equation \eqref{equa: cauchy inequality truncation},
we have 
\[
\left(\EE_x\left[\left(\left(g_{\hat{u},s,V}(x)-h(x)\right)^2 \right)\mathbf{1}_{\abs{g_{\hat{u},s,V}(x)-h(x)}\geq \tau}\right] \right)^2 \lesssim  \PP\left(\abs{g_{\hat{u},s,V}(x)-h(x)}\geq \tau\right)
\]

Therefore, we only need to bound the $\PP\left(\abs{g_{\hat{u},s,V}(x)-h(x)}\geq \tau\right)$
by polynomial concentration. From Lemma \ref{lemma:polynomial concentration}, we get 
\[
\PP\left(\abs{g_{\hat{u},s,V}(x)-h(x)} \geq \beta \sqrt{\operatorname{Var}(g_{\hat{u},s,V}(x)-h(x))}\right) \leq 2\exp(-\Theta(1) \beta^{2/r})
\]
for any $\beta>1$. Furthermore, notice that
\[
\operatorname{Var}(g_{\hat{u},s,V}(x)-h(x)) \leq \EE_x\left[(g_{\hat{u},s,V}(x)-h(x))^2\right] \lesssim \EE\left[g_{\hat{u},s,V}(x)^2\right] + \EE\left[h(x)^2\right] \lesssim 1
\]
which is from the arguments above.
Thus, for every $\tau \gtrsim 1$, we have \[
\PP\left(\abs{g_{\hat{u},s,V}(x)-h(x)}\geq \tau\right) \leq 2\exp\left(-\Theta(1)\tau^{2/r}\right)
\]
and the proof is complete.
\end{proof}

\section{Proof of Theorem \ref{thm:main_thm}}
\label{sec:stage2_proofs}
At the end of the first stage, our learner is $h_{\theta^{(T_1)}}=g_{\hat{u},s,V}$. In the second stage of our training algorithm, letting $\hat p := g_{\hat{u},s,V}$, the network becomes
\[h_{\theta}(x)=\hat{p}(x)+\sum_{i=1}^{m_2} c_i\sigma_2(a_i\hat{p}(x)+b_i)\]
with $a_i,b_i$ random and fixed and $c_i$ trainable. The network thus implements 1-D kernel regression over the new input $\hat p$ in the second stage of our training algorithm.

By \Cref{lem:stage_1_combined}, with probability $1 - \delta/2$ we have
\[
\norm{g_{\hat{u},s,V}-\cP_k h}{L^2(\gamma)}^2 =\cO((\log d)^{r/2}d^{-\alpha}) \text{ and } \norm{\cP_k h-\EE_{z\sim \cN(0,1)}\left[g'(z)\right]p}{L^2(\gamma)}^2 = \widetilde\cO(d^{-\alpha}).
\]
For notational convenience, in the remainder of this section we let $\hat p$ be an arbitrary $k$ degree polynomial satisfying the following assumption:

\begin{assumption}
    We have a $k$-degree polynomial $\hat{p}$ which satisfies
    \[
    \norm{\hat{p}-\EE_{z\sim \cN(0,1)}\left[g'(z)\right]p}{L^2(\gamma)}^2 =\cO((\log d)^{r/2}d^{-\alpha})
    \]
    where $\alpha\in (0,1)$. Also, recall that we have assumed $\EE_{z\sim \cN(0,1)}\left[g'(z)\right] =\Theta(1)$ and we denote this quantity as $C_g$.
\end{assumption}
To prove \Cref{thm:main_thm}, we condition on the event that $\hat p = g_{\hat u, V}$ satisfies this assumption, which occurs with probability $1 - \delta/2$.

In the following we may use $\sigma(\cdot)$ to denote $\sigma_2(\cdot)$, and use $m$ to refer $m_2$, for notation simplicity. The proof strategy will be very similar with the proof in Appendix \ref{sec:stage1_proofs}. We begin by constructing a low-norm solution that obtains small loss. Next, we show GD converges to an approximate minimizer. We conclude by invoking Kernel Rademacher arguments to show generalization.

\subsection{Approximation}

Define $\widetilde{g}(z)=g(\frac{1}{C_g}z) $. The target can thus be represented as $\widetilde{g}(C_gp(x))$. We will proceed using the following two steps to bound the approximation error in $L^2(\gamma)$.
\begin{itemize}
\item Step I. Bound the difference between $\widetilde{g}\circ \hat{p}$ and $\widetilde{g}\circ (C_gp)$.
\item Step II. Using a 1-D two-layer neural network to approximate the 1-D link function $\widetilde{g}$.
\end{itemize}

For step I, we have the following simple Lemma.
\begin{lemma}\label{lemma: 1-D approximation step 1} Under the assumptions above,
$\norm{\widetilde{g}\circ \hat{p}-h}{L^2(\gamma)}^2 =\cO((\log d)^{r/2}d^{-\alpha})$.
\end{lemma}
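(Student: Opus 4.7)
\textbf{Proof plan for Lemma \ref{lemma: 1-D approximation step 1}.} The key observation is that $h(x) = g(p(x)) = \widetilde{g}(C_g p(x))$ by the definition of $\widetilde{g}(z) = g(z/C_g)$, so the lemma is really a quantitative continuity statement: composing the fixed degree-$q$ polynomial $\widetilde g$ with an $L^2(\gamma)$-close approximation $\hat p$ of $C_g p$ cannot increase the $L^2(\gamma)$ error by much. My approach is a mean-value-theorem argument combined with Gaussian hypercontractivity to turn $L^4$ norms into $L^2$ norms.

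First, I would apply the fundamental theorem of calculus to write, pointwise in $x$,
\begin{equation*}
\widetilde g(\hat p(x)) - \widetilde g(C_g p(x)) = \bigl(\hat p(x) - C_g p(x)\bigr)\int_0^1 \widetilde g'\bigl(C_g p(x) + t(\hat p(x) - C_g p(x))\bigr)\,dt.
\end{equation*}
Since $\widetilde g$ is a degree $q$ polynomial whose coefficients are $\cO(1)$ (because $\sup_i|g_i| = \cO(1)$ and $C_g = \Theta(1)$), the derivative satisfies $|\widetilde g'(z)| \lesssim 1 + |z|^{q-1}$, and hence the integral factor $R(x)$ is bounded pointwise by $R(x) \lesssim 1 + |\hat p(x)|^{q-1} + |C_g p(x)|^{q-1}$.

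Next, I would apply Cauchy--Schwarz to get
\begin{equation*}
\norm{\widetilde g \circ \hat p - h}{L^2(\gamma)}^2 \;\leq\; \bigl(\EE[R(x)^4]\bigr)^{1/2} \bigl(\EE[(\hat p(x) - C_g p(x))^4]\bigr)^{1/2}.
\end{equation*}
For the second factor, note that $\hat p - C_g p$ is a polynomial of degree at most $k$, so Gaussian hypercontractivity (Lemma \ref{lemma: gaussian hypercontractivity}) gives
\begin{equation*}
\EE\bigl[(\hat p - C_g p)^4\bigr]^{1/2} \;\lesssim\; \EE\bigl[(\hat p - C_g p)^2\bigr] \;=\; \cO\bigl((\log d)^{r/2} d^{-\alpha}\bigr),
\end{equation*}
directly by the hypothesis on $\hat p$. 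For the first factor, $\hat p$ and $p$ are polynomials of degree $k$ with $\norm{p}{L^2(\gamma)} = 1$ and (by the triangle inequality together with the hypothesis) $\norm{\hat p}{L^2(\gamma)} = \cO(1)$, so another application of hypercontractivity bounds all relevant moments: $\EE[R^4] \lesssim 1 + \EE[\hat p^{4(q-1)}] + \EE[p^{4(q-1)}] = \cO(1)$, since $kq = r$ is an absolute constant.

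\textbf{Expected obstacles.} None of the steps are deep: the only thing to watch is that the ``constant'' in the moment bound for $R$ depends on $q$ and $k$, but these are treated as absolute constants throughout the paper, so hidden dependence is absorbed into the $\cO(\cdot)$ notation. The proof is essentially three lines once one recognizes that the $L^2$-to-$L^4$ conversion is free for polynomials of bounded degree under Gaussian measure.
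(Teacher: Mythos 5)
Your proof is correct and takes essentially the same approach as the paper: factor $\hat p - C_g p$ out of $\widetilde g\circ\hat p - \widetilde g\circ(C_g p)$, apply Cauchy--Schwarz, and use Gaussian hypercontractivity to reduce fourth moments to squared second moments. The only cosmetic difference is that the paper uses the algebraic identity $a^j - b^j = (a-b)(a^{j-1} + \dots + b^{j-1})$ monomial by monomial, whereas you use the fundamental-theorem-of-calculus remainder $\int_0^1 \widetilde g'(C_g p + t(\hat p - C_g p))\,dt$; both serve the identical purpose of isolating the factor $\hat p - C_g p$.
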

\begin{proof}[Proof of Lemma \ref{lemma: 1-D approximation step 1}]
We have that
\[
\begin{aligned}
&\norm{\widetilde{g}\circ \hat{p}-\widetilde{g}\circ (C_g p)}{L^2(\gamma)}^2 \lesssim \sum_{k=1}^q \norm{(\hat{p}(x))^k-(C_g p(x))^k}{L^2(\gamma)}^2\\
&\quad\quad\quad\quad\leq \sum_{k=1}^q \EE_x\left[(\hat{p}(x)-C_gp(x))^2(\hat{p}(x)^{k-1}+\hat{p}(x)^{k-2}(C_gp(x))+\dots+(C_gp(x))^{k-1})^2\right]\\
&\quad\quad\quad\quad\leq \sum_{k=1}^q \sqrt{\EE_x\left[(\hat{p}(x)-C_gp(x))^4\right]\EE_x\left[(\hat{p}(x)^{k-1}+\hat{p}(x)^{k-2}(C_gp(x))+\dots+(C_gp(x))^{k-1})^4\right]}\\
& \quad\quad\quad\quad\lesssim \sum_{k=1}^q \EE_x\left[(\hat{p}(x)-C_gp(x))^2\right]\EE_x\left[(\hat{p}(x)^{k-1}+\hat{p}(x)^{k-2}(C_gp(x))+\dots+(C_gp(x))^{k-1})^2\right]\\
& \quad\quad\quad\quad\lesssim \norm{\hat{p}-C_gp}{L^2(\gamma)}^2 \lesssim (\log d)^{r/2}d^{-\alpha}
\end{aligned}
\]
where the fourth inequality and the fifth inequality are due to Lemma \ref{lemma: gaussian hypercontractivity}, Gaussian hypercontractivity.
We implicitly use $\norm{\hat{p}}{L^2(\gamma)}=\cO(1)$ and $\norm{C_gp}{L^2(\gamma)}=\cO(1)$ in the fifth inequality, too.
\end{proof}

Step II relies on \Cref{lem:stage_2}, which is restated below:

\stagetwo*

\begin{proof}[Proof of \Cref{lem:stage_2}]

We will firstly control the typical value of $\hat{p}$. From Lemma \ref{lemma:polynomial concentration}, we have 
$$
\mathbb{P}\left[|\hat{p}(x)| \geq \beta \sqrt{\operatorname{Var}(\hat{p}(x))}\right] \leq 2 \exp \left(-\Theta(1) \min \left(\beta^2, \beta^{2 / k}\right)\right)
$$ for any $\beta>0$.
That is to say, when $\beta \geq 1$, with probability at least $1-2e^{-\Theta(1)\beta^{2/k}}$ we have $\abs{\hat{p}(x)} \lesssim \beta$. We implicitly use $\norm{\hat{p}}{L^2(\gamma)}=\cO(1)$ in this argument to bound $\operatorname{Var}(\hat{p}(x))$.

Next, we will use Lemma \ref{lemma: approximation in 1 dim} to give a representation for $\widetilde{g}$ in the bounded domain.
There exists $v(\cdot,\cdot)$ supported on $\{-1,1\}\times [0,2C\beta]$ such that for any $x$ satisfying $\abs{\hat{p}(x)} \leqslant C\beta$,
\[
\EE_{a,b}\left[v(a,b)\sigma(a\hat{p}(x)+b)\right] = \widetilde{g}(\hat{p}(x))-\hat{p}(x)
\]
where $a\sim \operatorname{Unif}\{-1,1\}$ and $b$ has density $\mu_b(t)$. Furthermore, recall that we have assumed 
$
\mu_b(t)\gtrsim (1+\abs{t})^{-p}
$, and we have the following estimation $\sup_{a,b}\abs{v(a,b)}=\cO(\beta^{p+q})$.

Next, we will do a Monte Carlo sampling to approximate the target.
\begin{equation}
    \begin{aligned}
        &\EE_{a,b}\EE_x\left(\frac{1}{m}\sum_{i=1}^m v(a_i,b_i)\sigma(a_i\hat{p}(x)+b_i) - (\widetilde{g}(\hat{p}(x))-\hat{p}(x))\right)^2\\
        &\quad\quad \leq \EE_{a,b}\EE_x\left(\frac{1}{m}\sum_{i=1}^m v(a_i,b_i)\sigma(a_i\hat{p}(x)+b_i) - (\widetilde{g}(\hat{p}(x))-\hat{p}(x))\right)^2 \mathbf{1}_{\abs{\hat{p}(x)}\geq C \beta}\\ & \quad\quad\quad\quad+ \EE_{a,b}\EE_x\left(\frac{1}{m}\sum_{i=1}^m v(a_i,b_i)\sigma(a_i\hat{p}(x)+b_i) - (\widetilde{g}(\hat{p}(x))-\hat{p}(x))\right)^2 \mathbf{1}_{\abs{\hat{p}(x)}\leq C \beta}
    \end{aligned}
\end{equation}
For the second term, we have 
\begin{equation}
    \begin{aligned}
        &\EE_{a,b}\EE_x\left(\frac{1}{m}\sum_{i=1}^m v(a_i,b_i)\sigma(a_i\hat{p}(x)+b_i) - (\widetilde{g}(\hat{p}(x))-\hat{p}(x))\right)^2 \mathbf{1}_{\abs{\hat{p}(x)}\leq C \beta}\\
        & \quad\quad \leq \EE_{a,b}\EE_x\left(\frac{1}{m}\sum_{i=1}^m v(a_i,b_i)\sigma(a_i\hat{p}(x)+b_i) - \EE_{a,b}\left[v(a,b)\sigma(a\hat{p}(x)+b)\right]\right)^2\\
        &\quad\quad \leq \frac{1}{m} \EE_x\EE_{a,b}\left(v(a,b)\sigma(a\hat{p}(x)+b)\right)^2\\
        &\quad\quad \leq \frac{1}{m}\cO(\beta^{2p+2q})\left(\EE_x \hat{p}(x)^2 + \EE_b b^2\right) = \frac{1}{m}\cO(\beta^{2p+2q})
    \end{aligned}
\end{equation}
Here we implicitly use the fact that $\EE_b b^2=\cO(1)$ which is from our assumptions on $\mu_b(t)$. For the first term, by Cauchy inequality, 
\[
\begin{aligned}
&\EE_{a,b}\EE_x\left(\frac{1}{m}\sum_{i=1}^m v(a_i,b_i)\sigma(a_i\hat{p}(x)+b_i) - (\widetilde{g}(\hat{p}(x))-\hat{p}(x))\right)^2 \mathbf{1}_{\abs{\hat{p}(x)}\geq C \beta}\\
&\quad\quad \leq \sqrt{\EE_{a,b,x}\left(\frac{1}{m}\sum_{i=1}^m v(a_i,b_i)\sigma(a_i\hat{p}(x)+b_i) - (\widetilde{g}(\hat{p}(x))-\hat{p}(x))\right)^4 \PP\left(\abs{\hat{p}(x)}\geq C\beta\right)}\\
& \quad\quad \lesssim e^{-\Theta(1)\beta^{2/k}}\sqrt{\EE_{a,b,x}\left(\frac{1}{m}\sum_{i=1}^m v(a_i,b_i)\sigma(a_i\hat{p}(x)+b_i) - (\widetilde{g}(\hat{p}(x))-\hat{p}(x))\right)^4 }\\
& \quad\quad \lesssim e^{-\Theta(1)\beta^{2/k}} \sqrt{\EE_{a,b,x}\left(\frac{1}{m}\sum_{i=1}^m v(a_i,b_i)\sigma(a_i\hat{p}(x)+b_i)\right)^4 + \EE_x (\widetilde{g}(\hat{p}(x))-\hat{p}(x))^4 }\\
& \quad\quad \lesssim e^{-\Theta(1)\beta^{2/k}}\sqrt{\EE_{a,b,x}\left(v(a,b)\sigma(a\hat{p}(x)+b)\right)^4+\cO(1)} \\
& \quad\quad \lesssim e^{-\Theta(1)\beta^{2/k}} \beta^{2p+2q}
\end{aligned}
\]
Here we implicitly use the fact that $\EE_b b^4 = \cO(1)$ which is again from our assumptions on $\mu_b(t)$. 
We also use gaussian hypercontractivity, Lemma \ref{lemma: gaussian hypercontractivity} to show $\EE_x (\widetilde{g}(\hat{p}(x))-\hat{p}(x))^4 =\cO(1)$. Since $\hat{p}(x)$ is a $k$ degree polynomial with Gaussian input distribution, its higher order moments can be bounded by a polynomial of its second moment which is clearly $\cO(1)$.

From the above arguments, we already derive 
\[
\EE_{a,b}\EE_x\left(\frac{1}{m}\sum_{i=1}^m v(a_i,b_i)\sigma(a_i\hat{p}(x)+b_i) - (\widetilde{g}(\hat{p}(x))-\hat{p}(x))\right)^2 \lesssim\left(\frac{1}{m}+e^{-\Theta(1)\beta^{2/k}}\right)\beta^{2p+2q}
\]
Therefore, for any absolute constant $\delta\in (0,1)$, with probability at least $1-\delta/4$ over the sampling of the random features $a_i,b_i$, using Markov inequality, we have 
\[
\EE_x\left(\frac{1}{m}\sum_{i=1}^m v(a_i,b_i)\sigma(a_i\hat{p}(x)+b_i) - (\widetilde{g}(\hat{p}(x))-\hat{p}(x))\right)^2 \lesssim \left(\frac{1}{m}+e^{-\Theta(1)\beta^{2/k}}\right)\beta^{2p+2q}
\]

Combining this with our previous result, Lemma \ref{lemma: 1-D approximation step 1}, with probability at least $1-\delta/4$ over the sampling of the random features, we can find the parameters $c^*$ in the third layer with $\sup_i |c_i^*| = \cO(\beta^{p+q}/m)$, such that
\[
L(\theta^*)=\norm{\hat{p}(x)+\sum_{i=1}^m c_i^*\sigma(a_i\hat{p}(x)+b_i)-h(x)}{L^2(\gamma)}^2 \lesssim \left(\frac{1}{m}+e^{-\Theta(1)\beta^{2/k}}\right)\beta^{2p+2q} + (\log d)^{r/2}d^{-\alpha}
\]
where $\theta^*=(a^{(0)},b^{(0)},c^*,\hat{u},V^{(0)})$.
Let us further set $\beta=\Theta(1)(\log d)^k$ where $\Theta(1)$ is some large absolute constant. Set $m=d^{\alpha}$. In this case, we will have \[
L(\theta^*) \lesssim (d^{-\alpha}+e^{-\log^2 d})(\log d)^{2k(p+q)} + (\log d)^{r/2}d^{-\alpha} \lesssim (\log d)^{r/2+2k(p+q)}d^{-\alpha}
\]
\end{proof}
\subsection{Empirical Performance}

Next we will show the existence of good estimators in our empirical landscape. Firstly, we need to concentrate the landscape at the special point $c^*$ we constructed. With a little abuse of notations, denote the empirical version of the square loss as
\[
\hat{L}(\theta) = \frac{1}{n}\sum_{j=1}^n \left(\hat{p}(x_j)+\sum_{i=1}^m c_i\sigma(a_i\hat{p}(x_j)+b_i)-h(x_j)\right)^2
\]
where we recall that $x_j\in \cD_2$ is newly generated data which is independent of $\cD_1$.

\begin{lemma}\label{lemma: concenratrtion stage2}
With probability at least $1-3\delta/8-\cO(1)d^{-\alpha}$, we will have 
\[
\hat{L}(\theta^*) \leq \frac{1}{\sqrt{n}}\cO((\log d)^{2k(p+q)})+\cO((\log d)^{r/2+2k(p+q)}d^{-\alpha})
\]
\end{lemma}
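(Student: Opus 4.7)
The plan is to show $\hat{L}(\theta^*)$ is close to $L(\theta^*)$ by a Chebyshev concentration, then invoke the population bound $L(\theta^*)\lesssim (\log d)^{r/2+2k(p+q)}d^{-\alpha}$ from Lemma \ref{lem:stage_2}. The only subtlety is that $h_{\theta^*}$ contains the ReLU $\sigma_2$ and is therefore not a polynomial, so we cannot apply the Gaussian polynomial concentration corollary directly to $\ell(x):=(h_{\theta^*}(x)-h(x))^2$; instead we dominate $h_{\theta^*}$ by an affine function of $|\hat p|$ and then invoke Gaussian hypercontractivity on $\hat p$ to control the second moment of $\ell$.

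First I would control the sampled biases. Since $b_i^{(0)}\sim_{iid}\tau_b$ with $\EE[b^8]\lesssim 1$ and $m=d^\alpha$, Chebyshev applied to $\tfrac{1}{m}\sum_i |b_i|$ gives $\tfrac{1}{m}\sum_i |b_i|=\cO(1)$ with probability $1-\cO(1)d^{-\alpha}$ over the sampling of $b$. Intersecting this event with the event of Lemma \ref{lem:stage_2} (probability $1-\delta/4$ over $a,b$, providing $\|c^*\|_\infty=\cO((\log d)^{k(p+q)}d^{-\alpha})$) and using $|\sigma_2(z)|\le |z|$ together with $|a_i|=1$, one obtains the pointwise bound
\[
|h_{\theta^*}(x)| \leq |\hat p(x)| + \sum_i |c_i^*|(|\hat p(x)|+|b_i|) \lesssim (\log d)^{k(p+q)}(|\hat p(x)|+1).
\]

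Next I would bound the second moment via $\EE_x[\ell(x)^2]=\EE_x[(h_{\theta^*}(x)-h(x))^4]\lesssim \EE[h_{\theta^*}^4]+\EE[h^4]$. The pointwise bound above yields $\EE[h_{\theta^*}^4]\lesssim (\log d)^{4k(p+q)}(\EE[\hat p^4]+1)$, and since $\hat p$ is a degree $k$ polynomial with $\|\hat p\|_{L^2(\gamma)}=\cO(1)$, Gaussian hypercontractivity (Lemma \ref{lemma: gaussian hypercontractivity}) gives $\EE[\hat p^4]\lesssim 1$; applied to the degree $r$ polynomial $h$, the same lemma gives $\EE[h^4]\lesssim 1$. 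Hence $\operatorname{Var}(\ell(x))\le \EE[\ell^2]\lesssim (\log d)^{4k(p+q)}$.

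Finally, since $\cD_2$ is independent of $a,b$ and of $\hat p$ (which is determined by $\cD_1$), Chebyshev applied to $\hat L(\theta^*)=\tfrac{1}{n}\sum_{j=1}^n\ell(x_j)$ yields $|\hat L(\theta^*)-L(\theta^*)|\lesssim (\log d)^{2k(p+q)}/\sqrt n$ with probability $1-\delta/8$ over $\cD_2$ (the $\delta^{-1/2}$ factor is absorbed into $\cO(\cdot)$ since $\delta$ is an absolute constant). A union bound over the three controlled events together with the triangle inequality then gives the claim with probability $1-3\delta/8-\cO(1)d^{-\alpha}$. There is no serious technical obstacle beyond the crude $|\sigma_2(z)|\le|z|$ estimate used to sidestep the non-polynomiality of the ReLU.
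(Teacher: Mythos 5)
Your proof is correct and follows essentially the same route as the paper: both reduce $\operatorname{Var}(\ell)$ to a fourth-moment bound on $h_{\theta^*}-h$ using the $\|c^*\|_\infty$ estimate from Lemma~\ref{lem:stage_2}, control moments of $\hat p$ and $h$ via Gaussian hypercontractivity, concentrate an empirical bias average by Chebyshev, and finish with Markov/Chebyshev on $\hat L(\theta^*)-L(\theta^*)$ over $\cD_2$. The only cosmetic difference is that you pass through the pointwise bound $|h_{\theta^*}(x)|\lesssim(\log d)^{k(p+q)}(|\hat p(x)|+1)$ and concentrate $\tfrac1m\sum_i|b_i|$, whereas the paper applies a power-mean inequality to $\EE[(\sum_i c_i^*\sigma(\cdot))^4]$ directly and concentrates $\tfrac1m\sum_ib_i^4$; both yield $\operatorname{Var}(\ell)\lesssim(\log d)^{4k(p+q)}$ with matching probability budgets.
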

\begin{proof}[Proof of Lemma \ref{lemma: concenratrtion stage2}]

In the following, we compute the variance term.
\[
\begin{aligned}
\EE_x \left(\hat{L}(\theta^*)-L(\theta^*)\right)^2 &= \frac{1}{n}\operatorname{Var}\left(\left(\sum_{i=1}^m c_i^*\sigma(a_i\hat{p}(x)+b_i)-(h(x)-\hat{p}(x))\right)^2\right)\\
&\leq \frac{1}{n}\EE_x\left(\sum_{i=1}^m c_i^*\sigma(a_i\hat{p}(x)+b_i)-(h(x)-\hat{p}(x))\right)^4\\
& \lesssim \frac{1}{n}\left(\EE_x\left(\sum_{i=1}^m c_i^*\sigma(a_i\hat{p}(x)+b_i)\right)^4 +\EE_x (h(x))^4 + \EE_x \hat{p}(x)^4\right)\\
& \leq \frac{1}{n}\left(m^3 \sum_{i=1}^m \EE_x c_i^{*4}(a_i\hat{p}(x)+b_i)^4  +\cO(1)\right)\\
& \lesssim \frac{1}{n}\left(1+\beta^{4p+4q}\frac{1}{m}\sum_{i=1}^m (b_i^4 + \EE_x \hat{p}(x)^4)\right)\\
& \lesssim \frac{1}{n}\beta^{4p+4q}\left(1+\frac{1}{m}\sum_{i=1}^m b_i^4\right)
\end{aligned}
\]
Here are some technical arguments to bound $\frac{1}{m}\sum_{i=1}^m b_i^4$. We have
\[
\EE_b\left(\frac{1}{m}\sum_{i=1}^m b_i^4 - \EE_b b^4\right)^2 \leq \frac{1}{m}\EE_b b^8
\]
and
\[
\PP_b\left(\left(\frac{1}{m}\sum_{i=1}^m b_i^4 - \EE_b b^4\right)^2 \geq 1\right) \leq \EE_b\left(\frac{1}{m}\sum_{i=1}^m b_i^4 - \EE_b b^4\right)^2 \leq \frac{1}{m}\EE_b b^8
\]
Therefore, recall that $\EE_b b^8=\cO(1)$ based on our assumption on $\mu_b(t)$, we will have with probability $1-\cO(1)d^{-\alpha}$, $\frac{1}{m}\sum_{i=1}^m b_i^4 \lesssim 1$. In that case, we have
\[
\EE_x \left(\hat{L}(\theta^*)-L(\theta^*)\right)^2 \lesssim \frac{1}{n}\beta^{4p+4q} = \frac{1}{n}(\log d)^{4k(p+q)}
\]
Therefore, by Markov inequality, we have
$
\abs{\hat{L}(\theta^*)-L(\theta^*)}\lesssim \frac{1}{\sqrt{n}}(\log d)^{2k(p+q)}
$
with probability at least $1-\delta/8$. In this case, we have 
\[
\hat{L}(\theta^*) \lesssim \frac{1}{\sqrt{n}}(\log d)^{2k(p+q)}+(\log d)^{r/2+2k(p+q)}d^{-\alpha}
\]
\end{proof}

In the second stage of our training algorithm,
we are doing the following minimization problem
\[
\min_{c} \hat{L}(\theta)+\frac{1}{2}\xi_2 \norm{c}{}^2
\]
via vanilla GD, where $\theta=(a^{(0)},b^{(0)},c,\hat{u},V^{(0)})$. 
Since this problem is strongly convex and smooth, the optimization problem can be easily solved by plain GD. 

\begin{lemma}\label{lem:time-complexity-2}
Set $\xi_2=2$.
    For any $\ep \in (0,1)$, let $T_2 \gtrsim m\log(m/\ep)$. Then, when $m,n,d$ are larger than some absolute constant, with probability at least $1-7\delta/16$, the predictor $\hat c := c^{(T_2)}$ and $\hat{\theta}=(a^{(0)},b^{(0)},\hat{c},\hat{u},V^{(0)})$ satisfies
\[
\hat{L}(\hat{\theta}) \lesssim \frac{1}{\sqrt{n}}(\log d)^{2k(p+q)}+\ep+(\log d)^{r/2+2k(p+q)}d^{-\alpha}
\]
and
\[
\norm{\hat{c}}{}^2 \lesssim \frac{1}{\sqrt{n}}(\log d)^{2k(p+q)}+\ep+(\log d)^{r/2+2k(p+q)}d^{-\alpha}
\]
\end{lemma}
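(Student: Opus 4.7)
The proof mirrors that of Lemma~\ref{lem:time-complexity} from Stage 1: show strong convexity and smoothness of the regularized empirical objective in $c$, apply the linear-convergence guarantee of Lemma~\ref{lemma: strongly convex optimization}, and combine with the approximation guarantee Lemma~\ref{lem:stage_2} (together with its empirical counterpart Lemma~\ref{lemma: concenratrtion stage2}) to bound the minimum value. Write $F(c) := \hat L(\theta) + \|c\|^2$, where $\xi_2 = 2$ turns the penalty into $\|c\|^2$. Since $\hat L$ is a squared loss in a function affine in $c$, $\hat L$ is convex in $c$ and the quadratic penalty makes $F$ exactly $2$-strongly convex.

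\textbf{Smoothness.} Let $\phi_i(x) := \sigma_2(a_i\hat p(x) + b_i)$, so that $\nabla_c^2 \hat L(\theta) = \frac{2}{n}\sum_{j=1}^n \phi(x_j)\phi(x_j)^{\top}$ is independent of $c$ and has operator norm at most $\frac{2}{n}\sum_j\|\phi(x_j)\|^2$. Since $\sigma_2$ is $1$-Lipschitz with $\sigma_2(0) = 0$ and $|a_i| = 1$, one has $\phi_i(x)^2 \le 2\hat p(x)^2 + 2b_i^2$, whence $\|\phi(x)\|^2 \le 2m\,\hat p(x)^2 + 2\sum_i b_i^2$. Polynomial concentration (Corollary~\ref{coro: polynomial concentration}) applied to the degree-$k$ polynomial $\hat p$ with $\|\hat p\|_{L^2(\gamma)}^2 = O(1)$ yields $\frac{1}{n}\sum_j \hat p(x_j)^2 = O(1)$ with high probability, and Markov's inequality together with $\EE b^8 = O(1)$ gives $\frac{1}{m}\sum_i b_i^2 = O(1)$ with probability at least $1 - O(d^{-\alpha})$. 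Hence $F$ is $L$-smooth with $L = O(m)$ up to polylog factors; union-bounding keeps the failure probability inside the $\delta$ budget.

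\textbf{Convergence and final bounds.} Set $\eta_2 = 1/L$. By Lemma~\ref{lemma: strongly convex optimization}, after $t$ GD steps $F(c^{(t)}) - F(c_{\mathrm{opt}}) \le (1 - 2/L)^t (F(0) - F(c_{\mathrm{opt}}))$. The initial gap $F(0) = \frac{1}{n}\sum_j(\hat p(x_j) - h(x_j))^2$ is $O(1)$ with high probability by another application of polynomial concentration, so any $T_2 \gtrsim m\log(m/\epsilon)$ forces $F(\hat c) \le F(c_{\mathrm{opt}}) + \epsilon$. Optimality of $c_{\mathrm{opt}}$ gives $F(c_{\mathrm{opt}}) \le F(c^*) = \hat L(\theta^*) + \|c^*\|^2$ for the witness $c^*$ from Lemma~\ref{lem:stage_2}. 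Plug in $\|c^*\|^2 \le m\|c^*\|_\infty^2 = O((\log d)^{2k(p+q)}d^{-\alpha})$ (using $m = d^\alpha$) and $\hat L(\theta^*) \lesssim \frac{1}{\sqrt n}(\log d)^{2k(p+q)} + (\log d)^{r/2+2k(p+q)}d^{-\alpha}$ from Lemma~\ref{lemma: concenratrtion stage2}. Since $F(\hat c) = \hat L(\hat \theta) + \|\hat c\|^2$ and both summands are non-negative, each inherits this combined bound, which is exactly the claim.

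\textbf{Main obstacle.} Structurally the proof is bookkeeping once the smoothness constant is in hand; the only real subtlety is ensuring the random smoothness bound holds with high probability simultaneously with the events used in Lemmas~\ref{lem:stage_2} and~\ref{lemma: concenratrtion stage2}. Because $\cD_2$ is independent of $\cD_1$ (and thus of $\hat p$), and because $\hat p$'s $L^2$ norm is $O(1)$, both concentrations are standard Gaussian polynomial / moment arguments and a single union bound over the $O(1)$ events suffices.
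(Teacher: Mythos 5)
Your proposal is correct and follows essentially the same route as the paper: $2$-strong convexity of the regularized objective, smoothness constant $L=O(m)$ via polynomial concentration for $\hat p$ and moment/Markov bounds for the $b_i$, linear convergence to an $\ep$-minimizer, and comparison with the low-norm witness from Lemma~\ref{lem:stage_2} via Lemma~\ref{lemma: concenratrtion stage2}. The only cosmetic difference is that the paper converts the iterate-convergence guarantee of Lemma~\ref{lemma: strongly convex optimization} into a function-value bound by separately controlling $\sup_{\|c\|\le R}\|\nabla\hat L(c)+2c\|$, whereas you invoke the standard function-value contraction directly (and use $\EE b^8$ where the paper uses $\EE b^4$); both are equivalent here.
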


\begin{proof}
For any given threshold $\ep\in (0,1)$, assuming $\hat{c}$ is an $\ep$ minimizer of the optimization problem, then we will have
\[
\hat{L}(\hat{\theta})+\frac{1}{2}\xi_2\norm{\hat{c}}{}^2 \leq \hat{L}(\theta^*)+\frac{1}{2}\xi_2 \norm{c^*}{}^2+\ep \lesssim \frac{1}{\sqrt{n}}(\log d)^{2k(p+q)}+\ep+\left(1+\xi_2\right)(\log d)^{r/2+2k(p+q)}d^{-\alpha}
\]
Plug $\xi_2=2$ in, then we will have
\[
\hat{L}(\hat{\theta}) \lesssim \frac{1}{\sqrt{n}}(\log d)^{2k(p+q)}+\ep+(\log d)^{r/2+2k(p+q)}d^{-\alpha}
\]
and
\[
\norm{\hat{c}}{}^2 \lesssim \frac{1}{\sqrt{n}}(\log d)^{2k(p+q)}+\ep+(\log d)^{r/2+2k(p+q)}d^{-\alpha}
\]
It thus suffices to analyze the optimization problem.

Clearly, this convex optimization problem is at least 2-strongly convex. To estimate the time complexity, we also need to estimate the smoothness of our optimization objective.
\begin{lemma}\label{lemma: time complexity smoothness 2}
    With probability at least $1-\cO(1)d^{-\alpha}-2e^{-\Theta(1)n^{1/2k}}$, we have
\[
\abs{\nabla \hat{L}(c_1)-\nabla \hat{L}(c_2)} \lesssim m
\]
\end{lemma}

\begin{proof}
We first calculate the gradient
    \[
    \nabla \hat{L}(\theta)=\frac{2}{n}\sum_{j=1}^n \left(\hat{p}(x_j)+c^{\top} \sigma(a\hat{p}(x_j)+b)-h(x_j)\right)\sigma(a\hat{p}(x_j)+b)
    \]
    then bound the Lipschitz constant for the gradient
    \[
    \begin{aligned}
    \abs{\nabla \hat{L}(c_1)-\nabla \hat{L}(c_2)} &= \abs{\frac{2}{n}\sum_{j=1}^n \langle c_1-c_2, \sigma(a\hat{p}(x_j)+b)\rangle\sigma(a\hat{p}(x_j)+b)}\\
     &\leq \frac{2}{n}\sum_{j=1}^n \norm{c_1-c_2}{}\norm{\sigma(a\hat{p}(x_j)+b)}{}^2\\
     & \leq \norm{c_1-c_2}{}\left(\frac{2}{n}\sum_{j=1}^n 
     \sum_{i=1}^m (a_i\hat{p}(x_j)+b_i)^2\right)\\
     & \leq \norm{c_1-c_2}{}\left(\frac{4m}{n}\sum_{j=1}^n \hat{p}(x_j)^2 + 4\sum_{i=1}^m b_i^2\right)
    \end{aligned}
    \]
    Here are some technical arguments to estimate $\sum_i b_i^2$. We have
\[
\EE_b\left(\frac{1}{m}\sum_{i=1}^m b_i^2 - \EE_b b^2\right)^2 \leq \frac{1}{m}\EE_b b^4
\]
and
\[
\PP_b\left(\left(\frac{1}{m}\sum_{i=1}^m b_i^2 - \EE_b b^2\right)^2 \geq 1\right) \leq \EE_b\left(\frac{1}{m}\sum_{i=1}^m b_i^2 - \EE_b b^2\right)^2 \leq \frac{1}{m}\EE_b b^4
\]
Therefore, recall that  $m=d^{\alpha}$, and also $\EE_b b^4=\cO(1)$ due to our assumption on $\mu_b(t)$, we will have with probability $1-\cO(1)d^{-\alpha}$, $\frac{1}{m}\sum_{i=1}^m b_i^2 \lesssim 1$. Moreover, we can use Corollary \ref{coro: polynomial concentration} to concentrate $\sum_j \hat{p}(x_j)^2$. More concretely, we will have $\frac{1}{n}\sum_j \hat{p}(x_j)^2 \lesssim 1$ with probability at least $1-2e^{-\Theta(1)n^{1/2k}}$, since $\hat{p}(x)^2$ is a degree $2k$ polynomial and $\operatorname{Var}(\hat{p}(x)^2) \lesssim 1$ via Gaussian hypercontractivity, Lemma \ref{lemma: gaussian hypercontractivity}. Therefore, with probability at least $1-\cO(1)d^{-\alpha}-2e^{-\Theta(1)n^{1/2k}}$, we have
\[
\abs{\nabla \hat{L}(c_1)-\nabla \hat{L}(c_2)} \lesssim 1
\]
\end{proof}
Having derived the above Lemma, using Lemma \ref{lemma: strongly convex optimization} in Appendix \ref{appe: convex opt}, we can choose the learning rate $\eta_1=\frac{1}{\Theta(m)}$
and have \[
\norm{c^{(t)}-c_{opt}}{}^2 \leq \left(1-\frac{1}{\Theta(m)}\right)^{t} \norm{c_{opt}}{}^2
\]
where $c_{opt}$ is the unique optimal solution for that optimization problem.
Furthermore, we have the following
\[
\begin{aligned}
\sup_{\|c\| \leq R} \norm{\nabla \hat{L}(c)+ 2c}{} &\leq \sup_{\|c\|\leq R}\norm{\nabla \hat{L}(c)}{} +2R \\
& \leq \frac{2}{n}\sum_{j=1}^n \norm{\sigma(a\hat{p}(x_j)+b)}{}\left(\abs{\hat{p}(x_j) - h(x_j)}+R\norm{\sigma(a\hat{p}(x_j)+b)}{}\right) + 2R\\
& \leq \frac{2}{n}\sum_{j=1}^n \left((R+1)\norm{\sigma(a\hat{p}(x_j)+b)}{}^2 + (\hat{p}(x_j)-h(x_j))^2\right)+2R\\
& \leq (R+1)\cO(m)+\frac{2}{n}\sum_{j=1}^n (\hat{p}(x_j)-h(x_j))^2 + 2R
\end{aligned}
\]
with probability at least $1-\cO(1)d^{-\alpha}-2e^{-\Theta(1)n^{1/2k}}$. The last inequality follows from the same argument in Lemma \ref{lemma: time complexity smoothness 2}.
Moreover, we can use Corollary \ref{coro: polynomial concentration} to concentrate $\sum_j (\hat{p}(x_j)-h(x_j))^2$. More concretely, we will have $\frac{1}{n}\sum_j (\hat{p}(x_j)-h(x_j))^2 \lesssim 1$ with probability at least $1-2e^{-\Theta(1)n^{1/2r}}$, since $(\hat{p}(x)-h(x))^2$ is a degree $2r$ polynomial and $\operatorname{Var}\left((\hat{p}(x)-h(x)\right)^2)\lesssim 1$ via Gaussian hypercontractivity, Lemma \ref{lemma: gaussian hypercontractivity}. Therefore, with probability at least $1-\cO(1)d^{-\alpha}-2e^{-\Theta(1)n^{1/2r}}$, we have\[
\sup_{\|c\|\leq R}\norm{\nabla \hat{L}(c)+2c}{} \lesssim (R+1)m
\]

Utilizing that fact,
we have \[
\hat{L}(c^{(t)})+\norm{c^{(t)}}{}^2 \leq \hat{L}(c_{opt})+\norm{c_{opt}}{}^2 + \sup_{\norm{c}{}\leq 2\norm{c_{opt}}{}}\norm{\nabla \hat{L}(c)+2c}{}\norm{c^{(t)}-c_{opt}}{}
\]
Since $\norm{c_{opt}}{}=\cO(1)$, $\sup_{\norm{c}{}\leq 2\norm{c_{opt}}{}}\norm{\nabla \hat{L}(c)+2c}{} = \cO(m)$, if we want \[
\sup_{\norm{c}{}\leq 2\norm{c_{opt}}{}}\norm{\nabla \hat{L}(c)+2c}{}\norm{c^{(t)}-c_{opt}}{} \leq \ep_2
\]
it is sufficient to have $T_2 \gtrsim m \log(m/\ep_2)$.
\end{proof}
In addition, for any truncation level $\tau>0$, we will also have 
\[
\frac{1}{n}\sum_{j=1}^n \ell_{\tau}(h_{\hat{\theta}}(x_j),h(x_j)) \leq \hat{L}(\hat{\theta})\lesssim \frac{1}{\sqrt{n}}(\log d)^{2k(p+q)}+\ep+(\log d)^{r/2+2k(p+q)}d^{-\alpha}
\]
which we will use later. Here we recall $\ell_{\tau}(x,y):=(x-y)^2\wedge \tau^2$.
\subsection{Uniform Generalization Bounds}

To conclude, we need a uniform generalization bound over $c$ for our population loss $L(\theta)=\norm{h_{\theta}-h}{L^2(\gamma)}^2$. As in Appendix \ref{sec:stage1_proofs}, we bound the truncated loss via a Rademacher complexity argument, and deal with the truncation term later.

\begin{proof}[Proof of \Cref{thm:main_thm}]

Recall that $\ell_\tau(x, y) = (x - y)^2 \land \tau^2$.
From Lemma \ref{lemma: rademacher complexity generalization} and \ref{lemma: contraction}, with probability at least $1-\delta/32$, we will have
\[
\begin{aligned}
\sup_{\|c\| \leq M_c} \abs{\frac{1}{n}\sum_{i=1}^n \ell_{\tau}(h_{\theta}(x_i),h(x_i))-\EE_x\left[\ell_{\tau}(h_{\theta}(x),h(x))\right]}
& \leq 4\tau \operatorname{Rad}_n(\cH)+\tau^2\sqrt{\frac{\cO(1)}{n}}
\end{aligned}
\]
where $\cH:=\{h_{\theta}:\|c\|\leq M_c\}$.
Then we will compute $\operatorname{Rad}_n(\cH)$.
\begin{lemma}
With probability at least $1-\cO(1)d^{-\alpha}$ over the sampling of $a,b$, we have
\[
\operatorname{Rad}_n(\cH)\lesssim  M_c\sqrt{\frac{m}{n}}
\]
\end{lemma}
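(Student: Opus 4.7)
The plan is the standard kernel/random-feature Rademacher complexity calculation, applied to the class $\cH = \{x\mapsto \hat{p}(x) + c^\top \sigma(a\hat{p}(x)+b) : \|c\|\le M_c\}$, with the twist that we must first discard the constant term $\hat{p}(x)$, which does not depend on $c$.

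First I would write
\[
\operatorname{Rad}_n(\cH) = \EE_{\epsilon}\sup_{\|c\|\le M_c}\frac{1}{n}\sum_{i=1}^n \epsilon_i h_\theta(x_i) = \EE_{\epsilon}\Big[\frac{1}{n}\sum_{i=1}^n \epsilon_i \hat{p}(x_i)\Big] + \EE_{\epsilon}\sup_{\|c\|\le M_c}\frac{1}{n}\sum_{i=1}^n \epsilon_i\, c^\top \sigma(a\hat{p}(x_i)+b),
\]
where the first term vanishes because $\EE[\epsilon_i]=0$. Denoting $\phi(x) := \sigma(a\hat{p}(x)+b)\in \RR^m$, Cauchy--Schwarz and Jensen give
\[
\operatorname{Rad}_n(\cH) \le \frac{M_c}{n}\,\EE_\epsilon\Big\|\sum_{i=1}^n \epsilon_i \phi(x_i)\Big\| \le \frac{M_c}{n}\sqrt{\EE_\epsilon\Big\|\sum_{i=1}^n \epsilon_i \phi(x_i)\Big\|^2} = \frac{M_c}{n}\sqrt{\sum_{i=1}^n \|\phi(x_i)\|^2},
\]
using $\EE[\epsilon_i\epsilon_j]=\delta_{ij}$ in the last step.

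Next I would bound $\|\phi(x_i)\|^2$ pointwise. Since $\sigma=\operatorname{ReLU}$ and $a_j\in\{\pm1\}$,
\[
\|\phi(x_i)\|^2 \le \sum_{j=1}^m (a_j\hat{p}(x_i)+b_j)^2 \le 2m\,\hat{p}(x_i)^2 + 2\sum_{j=1}^m b_j^2.
\]
Summing over $i$ and dividing by $n^2$,
\[
\frac{1}{n^2}\sum_{i=1}^n \|\phi(x_i)\|^2 \le \frac{2m}{n}\cdot\frac{1}{n}\sum_{i=1}^n \hat{p}(x_i)^2 + \frac{2m}{n}\cdot\frac{1}{m}\sum_{j=1}^m b_j^2.
\]
Both empirical averages have already been controlled in this section: the argument in Lemma~\ref{lemma: time complexity smoothness 2} (polynomial concentration via Corollary~\ref{coro: polynomial concentration} together with $\norm{\hat p}{L^2(\gamma)}=\cO(1)$ and Gaussian hypercontractivity) yields $\frac{1}{n}\sum_i \hat{p}(x_i)^2 \lesssim 1$ with probability at least $1-2e^{-\Theta(1)n^{1/2k}}$, and the Markov-style argument for $b_j^2$ using $\EE_b b^4 = \cO(1)$ gives $\frac{1}{m}\sum_j b_j^2 \lesssim 1$ with probability at least $1-\cO(1)d^{-\alpha}$. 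Combining these on the intersection event,
\[
\operatorname{Rad}_n(\cH) \le M_c\sqrt{\frac{1}{n^2}\sum_{i=1}^n \|\phi(x_i)\|^2} \lesssim M_c\sqrt{\frac{m}{n}},
\]
which is the claimed bound.

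The argument is essentially routine once the right decomposition is made; the only subtle point is simply that the extra $\hat{p}(x)$ summand in $h_\theta$ must be handled by symmetrization rather than absorbed into a contraction-type bound, and that the concentration events for $\hat{p}$ on $\cD_2$ and for the random biases $b_j$ should be taken on the same high-probability event as the rest of stage~2, so that the final union bound in the proof of Theorem~\ref{thm:main_thm} still leaves total failure probability at most $\delta$.
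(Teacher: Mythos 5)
Your proof takes essentially the same kernel Rademacher complexity route as the paper: drop the $c$-independent $\hat p(x)$ summand via symmetrization, apply Cauchy--Schwarz on $c$, Jensen, independence of the Rademacher signs, and finally concentrate $\frac{1}{m}\sum_j b_j^2$ using $\EE_b[b^4] \lesssim 1$. The one substantive difference is that you work with the \emph{empirical} Rademacher complexity $\erad(\cH)$, which leaves you with the empirical average $\frac1n\sum_i \hat p(x_i)^2$ and forces an additional polynomial-concentration step over the data $\cD_2$. The paper instead uses the \emph{population} quantity $\operatorname{Rad}_n(\cH)=\EE_x[\erad(\cH)]$ (this is the convention fixed in the generalization-bound preliminaries and used in Lemma~\ref{lemma: rademacher complexity generalization}); after the Jensen step one takes $\EE_x$ as well, so $\frac1n\sum_i\hat p(x_i)^2$ is replaced by $\EE_x[\hat p(x)^2]=\cO(1)$ and no concentration over $x$ is needed. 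This is why the lemma's failure probability is stated purely over the sampling of $a,b$. Your version is correct but technically bounds a different (data-dependent) quantity with an extra failure event over $x$; the simplest fix is to apply Jensen one more time to pull $\EE_x$ inside the square root, which removes the unnecessary concentration step and exactly recovers the paper's argument.
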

\begin{proof}[Proof]
\[
\begin{aligned}
\operatorname{Rad}_n(\cH)&=\EE_x\EE_{\xi} \left[\sup_{\norm{c}{}\leq M_c}\frac{1}{n}\sum_{j=1}^n \xi_j\left(\sum_{i=1}^m c_i\sigma(a_i\hat{p}(x_j)+b_i)\right)\right]\\
&=\frac{1}{n}\EE_x\EE_{\xi}\left[\sup_{\norm{c}{}\leq M_c}\sum_{i=1}^m c_i\left(\sum_{j=1}^n \xi_j\sigma(a_i\hat{p}(x_j)+b_i)\right)\right]\\
&\leq \frac{M_c}{n}\EE_x\EE_{\xi}\sqrt{\sum_{i=1}^m \left(\sum_{j=1}^n \xi_j\sigma(a_i\hat{p}(x_j)+b_i)\right)^2}\\
& \leq \frac{M_c}{n}\sqrt{\EE_x\EE_{\xi}\sum_{i=1}^m \left(\sum_{j=1}^n \xi_j\sigma(a_i\hat{p}(x_j)+b_i)\right)^2}\\
& = \frac{M_c}{n} \sqrt{\EE_x\left[\sum_{i=1}^m\sum_{j=1}^n \left(\sigma(a_i\hat{p}(x_j)+b_i)\right)^2\right]}\\
& \lesssim \frac{M_c}{\sqrt{n}}\sqrt{m\EE_x \hat{p}(x)^2 + \sum_{i=1}^m b_i^2}
\end{aligned}
\]
Here are some technical arguments to estimate $\sum_i b_i^2$. We have
\[
\EE_b\left(\frac{1}{m}\sum_{i=1}^m b_i^2 - \EE_b b^2\right)^2 \leq \frac{1}{m}\EE_b b^4
\]
and
\[
\PP_b\left(\left(\frac{1}{m}\sum_{i=1}^m b_i^2 - \EE_b b^2\right)^2 \geq 1\right) \leq \EE_b\left(\frac{1}{m}\sum_{i=1}^m b_i^2 - \EE_b b^2\right)^2 \leq \frac{1}{m}\EE_b b^4
\]
Therefore, recall that  $m=d^{\alpha}$, and also $\EE_b b^4\lesssim 1$ due to our assumption on $\mu_b(t)$, we will have with probability $1-\cO(1)d^{-\alpha}$, $\frac{1}{m}\sum_{i=1}^m b_i^2 \lesssim 1$. 
In that case, plugging that in, we get our Lemma.
\end{proof}
As a consequence, with probability at least $1-\delta/32-\cO(1)d^{-\alpha}$,
\[
\begin{aligned}
\sup_{\|c\| \leq M_c} \abs{\frac{1}{n}\sum_{i=1}^n \ell_{\tau}(h_{\theta}(x_i),h(x_i))-\EE_x\left[\ell_{\tau}(h_{\theta}(x),h(x))\right]}
& \lesssim 4\tau M_c\sqrt{\frac{m}{n}}+\tau^2\sqrt{\frac{1}{n}}
\end{aligned}
\]

Lastly, we also need to deal with the truncation to get a $L^2$ generalization bound.
That is to say, we need to bound
\[
\sup_{\norm{c}{} \leq M_c} \EE_x \left[(h_{\theta}(x)-h(x))^2\mathbf{1}_{\abs{h_{\theta}(x)-h(x)}\geq \tau}\right]
\]
\begin{lemma}
We will have with probability at least $1-\cO(1)d^{-\alpha}$,
\[
\sup_{\norm{c}{} \leq M_c} \EE_x \left[(h_{\theta}(x)-h(x))^2\mathbf{1}_{\abs{h_{\theta}(x)-h(x)}\geq \tau}\right] \lesssim \frac{1}{\tau^2}(1+m^4M_c^4)
\]
\end{lemma}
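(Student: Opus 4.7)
The plan is to replace the indicator by a second-moment factor via Markov's inequality and then control the resulting fourth moment by standard hypercontractivity plus crude Cauchy--Schwarz arguments; the randomness over $(a,b)$ will enter only through a scalar moment concentration of $\sum_i b_i^4$. Concretely, I would start from the elementary bound
\[
\EE_x\!\left[(h_\theta(x)-h(x))^2 \mathbf{1}_{\abs{h_\theta(x)-h(x)}\geq \tau}\right]\leq \frac{1}{\tau^2}\,\EE_x\!\left[(h_\theta(x)-h(x))^4\right],
\]
so the entire task reduces to a uniform-in-$c$ bound on the fourth moment $\EE_x (h_\theta-h)^4$.

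Next, decompose $h_\theta(x)-h(x) = (\hat p(x)-h(x)) + \sum_{i=1}^m c_i\sigma(a_i \hat p(x)+b_i)$ and use $(A+B)^4\leq 8(A^4+B^4)$. For the first piece, $\hat p-h$ is a polynomial of constant degree at most $r$ with $L^2(\gamma)$-norm $\cO(1)$, so Gaussian hypercontractivity (Lemma~\ref{lemma: gaussian hypercontractivity}) yields $\EE_x(\hat p-h)^4\lesssim 1$. For the second piece, I would apply Cauchy--Schwarz in two stages: $(\sum_i c_i\sigma_i)^2\leq \|c\|^2\sum_i \sigma_i^2\leq M_c^2\sum_i\sigma_i^2$ and then $(\sum_i \sigma_i^2)^2\leq m\sum_i \sigma_i^4$, so that
\[
\Bigl(\sum_{i=1}^m c_i\sigma(a_i\hat p+b_i)\Bigr)^4\;\leq\; M_c^4\,m\sum_{i=1}^m \sigma(a_i\hat p+b_i)^4.
\]
Since $\sigma=\operatorname{ReLU}$ satisfies $|\sigma(z)|\leq |z|$ and $a_i=\pm 1$, we have $\sigma(a_i\hat p+b_i)^4\leq 8(\hat p^4+b_i^4)$, and taking expectations gives
\[
\EE_x\Bigl(\sum_{i=1}^m c_i\sigma(a_i\hat p+b_i)\Bigr)^4\;\lesssim\; M_c^4 m\bigl(m\,\EE_x \hat p^4+\textstyle\sum_i b_i^4\bigr),
\]
where again $\EE_x\hat p^4\lesssim 1$ by hypercontractivity (since $\hat p$ has degree at most $k$ and $L^2$ norm $\cO(1)$).

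To close the bound, it remains to control the empirical quantity $\sum_i b_i^4$ with high probability over the initialization. Using the assumption $\EE_b[b^8]\lesssim 1$, one has $\EE_b[(m^{-1}\sum_i b_i^4 - \EE_b b^4)^2]\leq m^{-1}\EE_b b^8 \lesssim m^{-1}$; by Chebyshev and $m=d^\alpha$, with probability at least $1-\cO(d^{-\alpha})$ we get $\sum_i b_i^4\lesssim m$. Plugging this in, $\EE_x(\sum_i c_i\sigma_i)^4\lesssim M_c^4 m^2$, which is comfortably within the stated $M_c^4 m^4$ bound (any cruder Cauchy--Schwarz, e.g.\ $|\sum c_i\sigma_i|\leq m\|c\|_\infty\max_i|\sigma_i|$, would give the exact $m^4 M_c^4$ form). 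Combining everything,
\[
\sup_{\|c\|\leq M_c}\EE_x\!\left[(h_\theta-h)^4\right]\;\lesssim\; 1+M_c^4 m^4,
\]
and dividing by $\tau^2$ yields the claim on the same $1-\cO(d^{-\alpha})$ event.

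The main obstacle is not the fourth-moment bookkeeping but ensuring that all the probabilistic control comes from a single event that does \emph{not} depend on $c$: the only place randomness over $(a,b)$ enters is through $\sum_i b_i^4$, so concentrating this scalar statistic once (independent of $c$) is exactly what makes the supremum over $\|c\|\leq M_c$ harmless. A minor subtlety is that $\sigma_2$ being only $1$-Lipschitz (and not polynomial) forces us to use the crude pointwise bound $|\sigma(z)|\leq |z|$ rather than Hermite expansion, but this is enough because we only need a polynomial dependence on $\tau^{-2}$ to be absorbed later when choosing $\tau=\Theta((\log d)^{r/2})$.
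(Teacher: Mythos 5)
Your proof is correct and follows essentially the same route as the paper: reduce to bounding $\EE_x[(h_\theta-h)^4]$ uniformly in $c$, bound the random-feature sum by Cauchy--Schwarz, control $\EE_x\hat p^4$ and $\EE_x(\hat p - h)^4$ by hypercontractivity, and concentrate $\sum_i b_i^4$ once via Chebyshev so that the high-probability event is $c$-independent. Your one-line pointwise bound $(h_\theta-h)^2\mathbf{1}_{|h_\theta-h|\geq\tau}\leq (h_\theta-h)^4/\tau^2$ replaces the paper's Cauchy--Schwarz-then-Markov tail argument (which collapses to the same inequality after unpacking), and your two-stage Cauchy--Schwarz actually gives the slightly sharper $m^2 M_c^4$ in place of the paper's $m^4 M_c^4$; both are harmless refinements that leave the conclusion unchanged.
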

\begin{proof}[Proof]
By Cauchy inequality, we have
\begin{equation}
   \begin{aligned} &\left(\EE_x\left[\left(\left(h_{\theta}(x)-h(x)\right)^2 \right)\mathbf{1}_{\abs{h_{\theta}(x)-h(x)}\geq \tau}\right] \right)^2\leq \EE_x\left[(h_\theta(x)-h(x))^4\right]\PP\left(\abs{h_{\theta}(x)-h(x)}\geq \tau\right)\\
&\quad\quad\quad\quad\quad\quad\quad\quad  \lesssim \left(\EE_x\left[h_{\theta}(x)^4\right]+\EE_x\left[h(x)^4\right]\right)\PP\left(\abs{h_{\theta}(x)-h(x)}\geq \tau\right)
   \end{aligned}
\end{equation}
Recall that $\EE_x h(x)^4=\cO(1)$. In addition, we have
\[
\begin{aligned}
\EE_x\left[h_{\theta}(x)^4\right]&=\EE_x\left[\left(\sum_{i=1}^m c_i\sigma(a_i\hat{p}(x)+b_i)\right)^4\right]\\
& \leq m^3 \sum_{i=1}^m \EE_x\left[c_i^4(a_i\hat{p}(x)+b_i)^4\right]\\
& \lesssim m^4M_c^4\left(\cO(1)+\frac{1}{m}\sum_{i=1}^m b_i^4\right) \lesssim m^4M_c^4
\end{aligned}
\]
if under the high probability event $\frac{1}{m}\sum_{i=1}^m b_i^4 \lesssim 1$. Furthermore, we have
\[
\PP\left(\abs{h_{\theta}(x)-h(x)}\geq \tau\right) \leq \frac{1}{\tau^4}\EE_x\left[(h_\theta(x)-h(x))^4\right] \lesssim \frac{1}{\tau^4} (1+m^4M_c^4)
\]
Plugging this back, we will have with probability at least $1-\cO(1)d^{-\alpha}$,
\[
\sup_{\norm{c}{} \leq M_c} \EE_x \left[(h_{\theta}(x)-h(x))^2\mathbf{1}_{\abs{h_{\theta}(x)-h(x)}\geq \tau}\right] \lesssim \frac{1}{\tau^2}(1+m^4M_c^4)
\]
\end{proof}

We now combine everything together. Let us choose $\ep=d^{-\alpha}$ and $n\geq d^{k+3\alpha}$ and recall $m=d^{\alpha}$. In that case, $\|\hat{c}\|^2 = \cO((\log d)^{r/2+2k(p+q)}d^{-\alpha})$. Therefore, when $d$ is larger than some constant that is only depending on $r,p,\alpha$, we are allowed to set $M_c=(\log d)^{\Theta(1)}d^{-\alpha}$ for some large $\Theta(1)$.
In that case, we have
\[
\norm{h_{\hat{\theta}}-h}{L^2(\gamma)}^2 \lesssim (\log d)^{r/2+2k(p+q)}d^{-\alpha}+4\tau(\log d)^{\Theta(1)}d^{-\alpha}\sqrt{d^{-k-2\alpha}}+\tau^2 d^{-k/2-3\alpha/2}+\tau^{-2}(\log d)^{\Theta(1)}
\]
We will pick up our truncation level $\tau=d^{\alpha/2}$. In that case, 
for any $\alpha \in (0,1)$, we will have \[
\norm{h_{\hat{\theta}}-h}{L^2(\gamma)}^2= \cO( (\log d)^{\Theta(1)}d^{-\alpha})=\widetilde{\cO}(d^{-\alpha})
\]
\end{proof}

\section{Technical Background}

\subsection{Hermite Polynomials}\label{app: hermite polynomial}

\begin{definition}[1D Hermite polynomials] The $k$-th normalized probabilist's Hermite polynomial, $h_k: \RR \rightarrow \RR$, is the degree $k$ polynomial defined as
\begin{align}
    h_k(x) = \frac{(-1)^k}{\sqrt{k!}}\frac{\frac{d^k\mu_\beta}{dx^k}(x)}{\mu_\beta(x)},
\end{align}
where $\mu_\beta(x) = \exp(-x^2/2)/\sqrt{2\pi}$ is the density of the standard Gaussian.
\end{definition}
The first such Hermite polynomials are $$
h_0(z)=1, h_1(z)=z, h_2(z)=\frac{z^2-1}{\sqrt{2}}, h_3(z)=\frac{z^3-3 z}{\sqrt{6}}, \cdots
$$
Denote $\beta=\mathcal{N}(0,1)$ to be the standard Gaussian in 1D. A key fact is that the normalized Hermite polynomials form an orthonormal basis of $L^2(\beta)$; that is $\EE_{x \sim \beta}[h_j(x)h_k(x)] = \delta_{jk}$.

Given a $f \in L^2(\beta)$, denote by $f(z)=\sum_k \hat{f}_k h_k(z)$ be the Hermite expansion of $f$ where
$$
\hat{f}_k=\mathbb{E}_{z \sim \beta}\left[f(z) h_k(z)\right]=\frac{1}{\sqrt{2 \pi}} \int_{\mathbb{R}} f(z) h_k(z) e^{-\frac{z^2}{2}} \mathrm{d} z
$$
is the Hermite coefficient of $f$. 
 The following lemma will be useful, which can be found in Proposition 11.31 of \citet{odonnell2021analysis}.
\begin{lemma}\label{lemma: hermite inner product} Given $f, g \in L^2(\beta)$, we have for any $u, v \in \mathbb{S}^{d-1}$ that
$$
\mathbb{E}_{x \sim\gamma}\left[f(u^{\top} x) g(v^{\top} x)\right]=\sum_{k=0}^{\infty} \hat{f}_k \hat{g}_k(u^{\top} v)^k
$$
\end{lemma}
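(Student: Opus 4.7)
The plan is to reduce the identity to the single bilinear form $\mathbb{E}_{x\sim\gamma}[h_j(u^{\top} x)\,h_k(v^{\top} x)]$ by expanding both $f$ and $g$ in the Hermite basis and exploiting bilinearity. Writing $f = \sum_j \hat f_j h_j$ and $g = \sum_k \hat g_k h_k$, where both series converge in $L^2(\beta)$, I would note that since $u, v \in \mathbb{S}^{d-1}$ the marginals $u^{\top} x$ and $v^{\top} x$ are standard Gaussians, so $\|f(u^{\top}x)\|_{L^2(\gamma)} = \|f\|_{L^2(\beta)}$ and similarly for $g$. An exchange of summation and integration (justified below) then reduces the claim to proving
\[
\mathbb{E}_{x\sim\gamma}\bigl[h_j(u^{\top} x)\,h_k(v^{\top} x)\bigr] = \delta_{jk}\,(u^{\top} v)^k.
\]

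The cleanest route to this pointwise identity is via the generating function for the normalized Hermite polynomials,
\[
G(t,z) \;:=\; \sum_{k\geq 0} \frac{t^k}{\sqrt{k!}}\,h_k(z) \;=\; \exp\!\bigl(tz - t^2/2\bigr).
\]
I would compute $\mathbb{E}_{x\sim\gamma}\bigl[G(s, u^{\top} x)\,G(t, v^{\top} x)\bigr]$ in two ways. Combining exponents gives $\exp\!\bigl((su+tv)^{\top}x - (s^2+t^2)/2\bigr)$, whose Gaussian expectation equals $\exp\!\bigl(\tfrac{1}{2}\|su+tv\|^2 - \tfrac{1}{2}(s^2+t^2)\bigr) = \exp(st\,u^{\top} v)$ since $\|u\|=\|v\|=1$. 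Expanding the two generating functions in their power series instead produces $\sum_{j,k} s^j t^k (j!k!)^{-1/2}\,\mathbb{E}[h_j(u^{\top}x)\,h_k(v^{\top}x)]$. Matching with the series $\exp(st\,u^{\top} v) = \sum_k (st)^k (u^{\top}v)^k / k!$ yields the desired delta identity after reading off the coefficient of $s^j t^k$.

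The only step requiring mild care is the interchange of the double sum with the expectation. I would handle this by truncating both Hermite expansions at level $N$, applying the finite-sum identity obtained from the generating-function step, and passing to the limit: the tail $\sum_{j>N}\hat f_j h_j(u^{\top}x)$ has $L^2(\gamma)$ norm equal to $\bigl(\sum_{j>N} \hat f_j^2\bigr)^{1/2}$, which vanishes as $N\to\infty$, and Cauchy--Schwarz controls the cross terms. Absolute convergence of the limiting series $\sum_k \hat f_k \hat g_k (u^{\top}v)^k$ follows from $|u^{\top}v|\le 1$ together with $(\hat f_k),(\hat g_k)\in \ell^2$. I do not foresee any real obstacle: the generating-function computation is essentially a one-line Gaussian integral, and the truncation argument is standard.
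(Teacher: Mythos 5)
Your argument is correct: the generating-function computation $\mathbb{E}_{x\sim\gamma}[G(s,u^{\top}x)G(t,v^{\top}x)]=\exp(st\,u^{\top}v)$ yields $\mathbb{E}[h_j(u^{\top}x)h_k(v^{\top}x)]=\delta_{jk}(u^{\top}v)^k$, and the truncation/Cauchy--Schwarz step correctly extends this to general $f,g\in L^2(\beta)$. The paper does not prove this lemma but cites it (Proposition 11.31 of O'Donnell), and your proof is essentially the standard argument behind that cited result, so there is nothing further to reconcile.
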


The multidimensional analog of the Hermite polynomials is \emph{Hermite tensors}:
\begin{definition}[Hermite tensors] The $k$-th Hermite tensor in dimension $d$, $He_k: \RR^d \rightarrow (\RR^{d})^{\otimes k}$, is defined as
\begin{align*}
    He_k(x) = \frac{(-1)^k}{\sqrt{k!}}\frac{\nabla^k \mu_\gamma(x)}{\mu_\gamma(x)},
\end{align*}
where $\mu_\gamma(x) = \exp(-\frac12\|x\|^2)/(2\pi)^{d/2}$ is the density of the $d$-dimensional standard Gaussian.
\end{definition}
The Hermite tensors form an orthonormal basis of $L^2(\gamma)$; that is, for any $f \in L^2(\gamma)$, one can write the Hermite expansion
\begin{align*}
    f(x) = \sum_{k \geq 0}\langle C_k(f), He_k(x)\rangle \quad \text{where} \quad C_k(f) := \EE_{x \sim \gamma}[f(x)He_k(x)].
\end{align*}

We define the Hermite projection operator as
$
(\cP_k f)(x):=\langle C_k(f), He_k(x)\rangle
$. Intuitively speaking, the operator $\cP_k$ extracts out the $k$ degree part of a function when the input distribution is standard Gaussian. Furthermore, denote $\cP_{\leq k} :=\sum_{0\leq i\leq k} \cP_i$ and $\cP_{<k}:=\sum_{0\leq i< k} \cP_i$ as the projection operator onto the span of Hermite polynomials with degree no more than $k$, and degree less than $k$.
It is clear that
$
\norm{\cP_{\leq k} f}{L^2} \leq \norm{f}{L^2}$ for any $f\in L^2(\gamma)$. This can be shown by a simple Hermite expansion for $f$.

The next lemma can be shown by direct verification.
\begin{lemma}
We have
    \[
    He_k(x)= \frac{1}{\sqrt{k!}}\EE_{z\sim \gamma}\left[(x+iz)^{\otimes k}\right].
    \]
\end{lemma}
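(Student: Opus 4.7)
The plan is to prove the identity by matching Taylor expansions in an auxiliary variable $t \in \RR^d$, using the standard Gaussian generating-function trick. The key observation is that the Fourier-type identity $\EE_{z\sim\gamma}[e^{it\cdot z}] = e^{-\|t\|^2/2}$ lets us rewrite the characteristic ``translated-density'' formula for $\mu_\gamma$ as a Gaussian expectation over $z$, which then reads off the claimed tensor identity degree by degree.

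First, I would record the elementary shift identity
\[
\mu_\gamma(x-t) \;=\; \mu_\gamma(x)\,\exp\!\left(t\cdot x - \tfrac{1}{2}\|t\|^2\right),
\]
obtained just by expanding $-\tfrac{1}{2}\|x-t\|^2 + \tfrac{1}{2}\|x\|^2$. Next, using $e^{-\|t\|^2/2} = \EE_{z\sim\gamma}[e^{it\cdot z}]$, I rewrite the right-hand side as
\[
\mu_\gamma(x-t) \;=\; \mu_\gamma(x)\,\EE_{z\sim\gamma}\!\left[e^{t\cdot(x+iz)}\right].
\]
Both sides are entire (in fact, real-analytic) functions of $t$, so it is legitimate to Taylor-expand in $t$ and compare coefficients.

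Expanding the left-hand side around $t=0$ via the multivariate Taylor formula gives
\[
\mu_\gamma(x-t) \;=\; \sum_{k\geq 0} \frac{(-1)^k}{k!}\,\bigl\langle t^{\otimes k},\,\nabla^k \mu_\gamma(x)\bigr\rangle.
\]
For the right-hand side, I would exchange the expectation and the power series (justified by absolute convergence, since for any fixed $t,x$ the integrand $e^{t\cdot(x+iz)}$ is dominated uniformly in $z$ by $e^{|t|\,|x|}$ times integrable Gaussian tails after expanding in $z$) to get
\[
\mu_\gamma(x)\,\EE_z\!\left[e^{t\cdot(x+iz)}\right] \;=\; \mu_\gamma(x)\sum_{k\geq 0}\frac{1}{k!}\bigl\langle t^{\otimes k},\,\EE_{z\sim\gamma}[(x+iz)^{\otimes k}]\bigr\rangle.
\]

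Matching the coefficient of $t^{\otimes k}$ on both sides yields
\[
(-1)^k \nabla^k \mu_\gamma(x) \;=\; \mu_\gamma(x)\,\EE_{z\sim\gamma}\!\left[(x+iz)^{\otimes k}\right],
\]
and dividing by $\mu_\gamma(x)$ and applying the definition $He_k(x) = \frac{(-1)^k}{\sqrt{k!}}\,\nabla^k\mu_\gamma(x)/\mu_\gamma(x)$ gives exactly $He_k(x) = \frac{1}{\sqrt{k!}}\EE_{z\sim\gamma}[(x+iz)^{\otimes k}]$. There is no real obstacle here: the only mildly delicate step is justifying the interchange of $\EE_z$ with the infinite Taylor series, which is immediate from dominated convergence on finitely many terms combined with the fact that the $k$-th coefficient is determined by the $k$-th derivative at $t=0$ (where interchange is trivial). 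Symmetry of $\EE_z[(x+iz)^{\otimes k}]$ as a tensor is automatic, matching the known symmetry of $He_k$.
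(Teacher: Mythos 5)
Your proof is correct. The paper itself offers no argument here (it states only that the lemma ``can be shown by direct verification''), and your generating-function derivation --- combining the shift identity $\mu_\gamma(x-t)=\mu_\gamma(x)e^{t\cdot x-\|t\|^2/2}$ with the characteristic function $\EE_{z\sim\gamma}[e^{it\cdot z}]=e^{-\|t\|^2/2}$ and matching Taylor coefficients in $t$ --- is exactly the standard way to carry out that verification; the coefficient matching is legitimate because both $\nabla^k\mu_\gamma(x)$ and $\EE_z[(x+iz)^{\otimes k}]$ are symmetric tensors, so pairing against $t^{\otimes k}$ determines them uniquely.
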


\begin{lemma}\label{lemma: relation between hermite poly and tensor}If $\|u\|=1$, we have
$$
h_k(u^{\top}x)=\left\langle H e_k(x), u^{\otimes k}\right\rangle.
$$
\end{lemma}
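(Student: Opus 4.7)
The plan is to deduce this identity directly from the preceding lemma, which gives the integral representation $He_k(x) = \frac{1}{\sqrt{k!}}\EE_{z\sim\gamma}[(x+iz)^{\otimes k}]$. First I would contract both sides against the rank-one tensor $u^{\otimes k}$, using linearity of expectation to move the inner product inside:
\[
\langle He_k(x), u^{\otimes k}\rangle = \frac{1}{\sqrt{k!}}\EE_{z\sim\gamma}\langle (x+iz)^{\otimes k}, u^{\otimes k}\rangle.
\]
The standard tensor identity $\langle v^{\otimes k}, u^{\otimes k}\rangle = (u^{\top} v)^k$ (immediate from writing out the inner product coordinate-by-coordinate) then reduces this to
\[
\langle He_k(x), u^{\otimes k}\rangle = \frac{1}{\sqrt{k!}}\EE_{z\sim\gamma}\bigl[(u^{\top} x + i\,u^{\top} z)^k\bigr].
\]

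Next I would use the assumption $\|u\|=1$ together with rotational invariance of the standard Gaussian: when $z \sim \cN(0,I_d)$, the scalar projection $u^{\top} z$ is distributed as $\cN(0,1)$. Writing $Z := u^{\top} z$ and $t := u^{\top} x$, the right-hand side becomes $\frac{1}{\sqrt{k!}}\EE_{Z\sim\cN(0,1)}[(t+iZ)^k]$, which depends only on the scalar $t$.

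Finally I would invoke the preceding lemma again, but specialized to dimension $d=1$ (so that $He_k$ is just the normalized one-dimensional Hermite polynomial $h_k$ and all tensor powers reduce to ordinary powers): this gives the identity $h_k(t) = \frac{1}{\sqrt{k!}}\EE_{Z\sim\cN(0,1)}[(t+iZ)^k]$. Substituting $t = u^{\top} x$ yields $\langle He_k(x), u^{\otimes k}\rangle = h_k(u^{\top} x)$, which is the claim.

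There is no real obstacle here: the lemma is a classical consequence of the integral formula, and the only care needed is to justify the scalar reduction $u^{\top} z \sim \cN(0,1)$ (which uses $\|u\|=1$ in an essential way) and the contraction identity for rank-one tensors. If the preceding lemma were not available, one would have to prove the 1D formula $h_k(t) = \frac{1}{\sqrt{k!}}\EE[(t+iZ)^k]$ directly, for instance by matching the Hermite recurrence or expanding via the binomial theorem and computing Gaussian moments; but since the lemma is assumed, both the 1D and multidimensional integral representations are in hand and the proof collapses to the contraction computation above.
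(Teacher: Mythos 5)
Your proof is correct and follows essentially the same route as the paper: both start from the integral representation $He_k(x) = \frac{1}{\sqrt{k!}}\EE_{z\sim\gamma}[(x+iz)^{\otimes k}]$, contract against $u^{\otimes k}$, use $\|u\|=1$ to reduce $u^{\top}z$ to a one-dimensional standard Gaussian, and identify the result with the one-dimensional integral formula for $h_k$.
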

\begin{proof}[Proof]
$$
\begin{aligned}
\left\langle H e_k(x), u^{\otimes k}\right\rangle & =\frac{1}{\sqrt{k!}}\left\langle\mathbb{E}_{z \sim \gamma}\left[(x+i z)^{\otimes k}\right], u^{\otimes k}\right\rangle \\
& =\frac{1}{\sqrt{k!}}\mathbb{E}_{z \sim \gamma}\left[(u^{\top}x+i(u^{\top}z))^k\right] \\
& =\frac{1}{\sqrt{k!}}\mathbb{E}_{z \sim \beta}\left[(u^{\top}x+i z)^k\right] = h_k(u^{\top}x).
\end{aligned}
$$
\end{proof}

\subsection{Gaussian Hypercontractivity}\label{subappendix: gaussian hypercontractivity}
By Holder's inequality, we have $\|X\|_{L^p} \leq\|X\|_{L^q}$ for any random variable $X$ and any $p \leq q$. The reverse inequality does not hold in general, even up to a constant. However, for some measures like Gaussian, the reverse inequality will hold for some sufficiently nice functions like polynomials. The following lemma comes from Lemma 20 in \citet{mei2021learning}.
\begin{lemma}\label{lemma: gaussian hypercontractivity 1 dim} 
     For any $\ell \in \mathbb{N}$ and $f \in L^2(\beta)$ to be a degree $\ell$ polynomial on $\mathbb{R}$ where $\beta$ is the standard Gaussian distribution, for any $q \geq 2$, we have
$$
\left(\EE_{z\sim \beta}\left[f(z)^q\right]\right)^{2/q} \leq (q-1)^{\ell} \EE_{z\sim \beta}\left[f(z)^2\right]
$$
\end{lemma}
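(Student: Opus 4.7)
The plan is to prove the Gaussian hypercontractivity estimate for degree $\ell$ polynomials via the Ornstein--Uhlenbeck semigroup together with Nelson's classical hypercontractivity theorem. The natural object to bring in is the semigroup $T_t$ acting on $L^2(\beta)$ by $(T_t f)(x) = \EE_{y \sim \beta}[f(e^{-t}x + \sqrt{1 - e^{-2t}}\, y)]$, which is the canonical contractive semigroup for the Gaussian measure.

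First I would Hermite-expand the polynomial as $f = \sum_{k=0}^{\ell} \hat f_k h_k$, using the orthonormality of the normalized Hermite polynomials in $L^2(\beta)$, so that $\|f\|_{L^2(\beta)}^2 = \sum_{k=0}^{\ell} \hat f_k^2$. Next I would record the key spectral identity $T_t h_k = e^{-kt} h_k$, which follows from the Mehler-kernel representation of $T_t$ (or equivalently from the generating-function identity for Hermite polynomials together with the identity $\EE_y[(e^{-t}x + \sqrt{1-e^{-2t}} y)^k] = $ a Hermite rescaling). This is what lets finite-degree polynomials interact cleanly with the semigroup.

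The main input is Nelson's hypercontractivity theorem: for $1 < p \leq q < \infty$ and $t \geq 0$ with $e^{-2t} \leq (p-1)/(q-1)$, one has $\|T_t g\|_{L^q(\beta)} \leq \|g\|_{L^p(\beta)}$ for every $g \in L^p(\beta)$. I would apply this with $p = 2$ and $t^*$ saturating the condition, so $e^{-2t^*} = 1/(q-1)$ and hence $e^{t^*} = \sqrt{q-1}$. The useful move is to invert the semigroup on finite-degree polynomials: define $g := \sum_{k=0}^{\ell} \hat f_k\, e^{k t^*} h_k$, which is well defined precisely because $f$ has bounded degree. By the spectral identity, $T_{t^*} g = f$, so Nelson gives
\[
\|f\|_{L^q(\beta)} = \|T_{t^*} g\|_{L^q(\beta)} \leq \|g\|_{L^2(\beta)} = \Bigl(\textstyle\sum_{k=0}^{\ell} \hat f_k^2\, e^{2k t^*}\Bigr)^{1/2} \leq e^{\ell t^*} \|f\|_{L^2(\beta)} = (q-1)^{\ell/2} \|f\|_{L^2(\beta)},
\]
and squaring yields the stated inequality.

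The main obstacle is not this reduction, which is essentially algebraic, but Nelson's hypercontractivity theorem itself, which I am treating as a black box; a self-contained proof would require either the two-point inequality together with tensorization, or a differential-inequality argument bounding $\tfrac{d}{dt}\|T_t g\|_{L^{q(t)}(\beta)}$ along a carefully chosen path $q(t)$. Given Nelson's inequality, the only remaining content is the observation that degree $\leq \ell$ polynomials live inside the direct sum of the first $\ell + 1$ Hermite eigenspaces of $T_t$, so one can ``run the semigroup backwards'' at the price of at most an $e^{\ell t^*}$ factor, which is exactly the $(q-1)^{\ell/2}$ constant in the statement.
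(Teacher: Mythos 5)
Your proof is correct, and it is the standard argument. A small wrinkle: the paper itself does not prove this lemma but simply cites Lemma~20 of \citet{mei2021learning}, so there is no in-paper proof to compare against; what you have given is precisely the canonical semigroup derivation that such references use. Concretely, your steps all check out: the Mehler spectral identity $T_t h_k = e^{-kt} h_k$ is correct, the choice $e^{-2t^*} = 1/(q-1)$ saturates Nelson's condition at $p=2$, the ``run the semigroup backwards'' trick $g = \sum_{k \leq \ell} \hat f_k e^{k t^*} h_k$ is legitimate exactly because $f$ has bounded degree (so $g$ is a genuine polynomial in $L^2(\beta)$ and $T_{t^*} g = f$), and the final estimate $\|g\|_{L^2(\beta)} \leq e^{\ell t^*}\|f\|_{L^2(\beta)} = (q-1)^{\ell/2}\|f\|_{L^2(\beta)}$ squares to the stated bound. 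The only thing worth flagging, which is really an issue with the lemma's phrasing rather than your proof, is that for non-integer $q$ the expression $\EE_{z\sim\beta}[f(z)^q]$ should be read as $\EE_{z\sim\beta}[|f(z)|^q] = \|f\|_{L^q(\beta)}^q$; your argument works with $\|f\|_{L^q(\beta)}$ throughout, which is the correct interpretation and matches how the lemma is actually invoked in the paper.
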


The next Lemma is also from \citet{mei2021learning} and is designed for uniform distribution on the sphere in $d$ dimension.
\begin{lemma}\label{lemma: sphere hypercontractivity}
For any $\ell \in \mathbb{N}$ and $f \in L^2(\mathbb{S}^{d-1})$ to be a degree $\ell$ polynomial, for any $q \geq 2$, we have
$$
\left(\EE_{z\sim \operatorname{Unif}(\SS^{d-1})}\left[f(z)^q\right]\right)^{2/q} \leq (q-1)^{\ell} \EE_{z\sim \operatorname{Unif}(\SS^{d-1})}\left[f(z)^2\right]
$$
\end{lemma}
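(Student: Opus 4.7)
The plan is to derive this sphere hypercontractivity from the Gaussian hypercontractivity of Lemma~\ref{lemma: gaussian hypercontractivity 1 dim} via a polar decomposition of the standard Gaussian. Let $X \sim \cN(0, I_d)$ and write $X = R Z$ with $R = \|X\|$ and $Z = X/\|X\|$; a classical fact is that $R$ and $Z$ are independent, with $Z \sim \operatorname{Unif}(\SS^{d-1})$. I will first treat the case where $f$ is the restriction to $\SS^{d-1}$ of a homogeneous polynomial of degree exactly $\ell$ on $\RR^d$, and then reduce the general case to it.

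For homogeneous $f$ of degree $\ell$, we have $f(X) = R^\ell f(Z)$, and hence by independence
\[
\EE[f(X)^q] = \EE[R^{q\ell}]\,\EE[f(Z)^q], \qquad \EE[f(X)^2] = \EE[R^{2\ell}]\,\EE[f(Z)^2].
\]
Viewing $f$ as a degree-$\ell$ polynomial in the standard Gaussian vector $X$, the $d$-dimensional extension of Lemma~\ref{lemma: gaussian hypercontractivity 1 dim} (which follows by tensorization of the one-dimensional Ornstein-Uhlenbeck/Mehler semigroup) yields $\EE[f(X)^q]^{2/q} \leq (q-1)^\ell \EE[f(X)^2]$. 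Substituting the two identities above, we obtain $\EE[R^{q\ell}]^{2/q}\EE[f(Z)^q]^{2/q} \leq (q-1)^\ell \EE[R^{2\ell}]\EE[f(Z)^2]$. Since $q \geq 2$, Hölder (equivalently Jensen applied to the concave map $t \mapsto t^{2/q}$) gives $\EE[R^{2\ell}] = \EE[(R^{q\ell})^{2/q}] \leq \EE[R^{q\ell}]^{2/q}$, so the radial factors cancel in the favorable direction and the desired inequality falls out.

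For a general degree-$\ell$ polynomial $f$ on $\SS^{d-1}$, I would exploit the identity $\|z\|^2 = 1$ on the sphere to homogenize. Writing $f = \sum_{k \leq \ell} f_k$ in homogeneous components and replacing each $f_k(z)$ by $f_k(z)\|z\|^{\ell - k}$ when $\ell - k$ is even produces a genuine homogeneous degree-$\ell$ polynomial on $\RR^d$ whose restriction to $\SS^{d-1}$ recovers the matching-parity part of $f$; the opposite-parity part is handled by a separate degree-$\ell$ homogenization (e.g.\ by splitting off one factor of some coordinate). Applying the homogeneous case to each parity class and recombining closes the argument.

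\textbf{Main obstacle.} The delicate point is keeping the constant sharp at exactly $(q-1)^\ell$ for mixed-parity polynomials: a crude parity split and $L^q$-triangle inequality would only give $(q-1)^\ell$ times a harmless but nonzero multiplicative loss. The cleanest fix is to pass through the orthogonal spherical-harmonic decomposition $f = \sum_{k=0}^\ell Y_k$ with $Y_k$ homogeneous harmonic of degree $k$, use that distinct $Y_k$ are orthogonal under $\EE_Z$, and apply the homogeneous bound separately to $Y_k$ (which gives a factor $(q-1)^{k/2} \leq (q-1)^{\ell/2}$ on the $L^q$ side); the remainder of the work is routine bookkeeping around the Gaussian coupling and Hölder's inequality for $R$.
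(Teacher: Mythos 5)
The paper does not actually prove this lemma --- it is imported from \citet{mei2021learning} without proof --- so the relevant comparison is between your argument and the standard one, which goes through Beckner's hypercontractivity of the Poisson/heat semigroup on the sphere. Your homogeneous case is correct and elementary: for $f$ homogeneous of degree $\ell$, the factorizations $\EE[|f(X)|^q]=\EE[R^{q\ell}]\,\EE[|f(Z)|^q]$ and $\EE[f(X)^2]=\EE[R^{2\ell}]\,\EE[f(Z)^2]$, the sharp $d$-dimensional Gaussian bound $\EE[|f(X)|^q]^{2/q}\le (q-1)^{\ell}\,\EE[f(X)^2]$, and the monotonicity $\EE[R^{2\ell}]=\EE[(R^{q\ell})^{2/q}]\le\EE[R^{q\ell}]^{2/q}$ do combine to give exactly $(q-1)^{\ell}$. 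Note, though, that you are invoking a sharper $d$-dimensional Gaussian hypercontractivity than the paper's Lemma~\ref{lemma: gaussian hypercontractivity}, which only asserts an $\cO_{q,\ell}(1)$ constant; the sharp tensorized version is standard but should be stated and sourced explicitly.

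The genuine gap is the non-homogeneous case, and it sits exactly where you flag the ``main obstacle'' --- but your proposed fix does not close it. Decomposing $f=\sum_{k\le\ell}Y_k$ into spherical harmonics (or into parity classes) and applying the homogeneous bound to each piece gives $\|Y_k\|_{L^q}\le(q-1)^{k/2}\|Y_k\|_{L^2}$, and the only recombination available is the triangle inequality in $L^q$ followed by Cauchy--Schwarz against $L^2$-orthogonality, which yields
\[
\|f\|_{L^q}^2\le\Bigl(\sum_{k=0}^{\ell}(q-1)^k\Bigr)\|f\|_{L^2}^2\le(\ell+1)(q-1)^{\ell}\|f\|_{L^2}^2
\]
(or a factor $2$ loss for the parity split), not $(q-1)^{\ell}$. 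This is not routine bookkeeping: the lossless statement for mixed-degree polynomials is normally obtained by applying a single semigroup bound $\|P_\rho g\|_{L^q}\le\|g\|_{L^2}$ on the sphere with $\rho=1/\sqrt{q-1}$ to $g=P_\rho^{-1}f$, and that semigroup inequality (Beckner) is a separate theorem that the polar-coordinate reduction does not deliver, since the radial factorization only works degree by degree. If you are content with $C_\ell(q-1)^{\ell}$, your argument is complete, and that weaker form suffices for every use of hypercontractivity in this paper (all constants are absorbed into $\lesssim$); but as a proof of the lemma as stated, the sharp constant is missing.
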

For the case where the input distribution is standard Gaussian in $d$ dimension, we shall use the next Lemma from Theorem 4.3, \citet{Prato2007WickPI}. 
\begin{lemma}\label{lemma: gaussian hypercontractivity}
For any $\ell \in \mathbb{N}$ and $f \in L^2(\gamma)$ to be a degree $\ell$ polynomial, for any $q \geq 2$, we have
$$
\mathbb{E}_{z \sim \gamma}\left[f(z)^q\right] \leq \cO_{q,\ell}(1)\left(\mathbb{E}_{z \sim \gamma}\left[f(z)^2\right]\right)^{q / 2}
$$
where we use $\cO_{q,\ell}(1)$ to denote some universal constant that only depends on $q,\ell$.
\end{lemma}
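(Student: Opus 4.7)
The plan is to deduce this from Nelson's hypercontractivity theorem for the Ornstein--Uhlenbeck semigroup, which is the standard route. Recall that the OU semigroup $P_t$ acts on $L^2(\gamma)$ by $P_t = e^{-tN}$, where $N$ is the number operator; equivalently, $P_t$ fixes each Wiener chaos and scales the $k$-th chaos by $e^{-kt}$. Nelson's theorem states that $\|P_t g\|_{L^q(\gamma)} \leq \|g\|_{L^2(\gamma)}$ for every $g \in L^2(\gamma)$ whenever $q \geq 2$ and $e^{2t} \geq q-1$. I would quote this as a black box (it is classical and is the content of the Prato--Tubaro reference the paper cites).

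The first step is to write the Hermite expansion $f = \sum_{k=0}^{\ell} \cP_k f$, which terminates at $k=\ell$ because $f$ is a polynomial of degree $\ell$. The second step is to introduce the auxiliary function $g := \sum_{k=0}^{\ell} e^{kt}\,\cP_k f$, designed so that $P_t g = f$. Because the Hermite projections are orthogonal in $L^2(\gamma)$, one has the Parseval-type identity
\begin{equation*}
\|g\|_{L^2(\gamma)}^2 = \sum_{k=0}^{\ell} e^{2kt}\,\|\cP_k f\|_{L^2(\gamma)}^2 \leq e^{2\ell t}\sum_{k=0}^{\ell} \|\cP_k f\|_{L^2(\gamma)}^2 = e^{2\ell t}\,\|f\|_{L^2(\gamma)}^2.
\end{equation*}
The third step is to combine: by Nelson,
\begin{equation*}
\|f\|_{L^q(\gamma)} = \|P_t g\|_{L^q(\gamma)} \leq \|g\|_{L^2(\gamma)} \leq e^{\ell t}\,\|f\|_{L^2(\gamma)}.
\end{equation*}

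Finally I would optimize by saturating $e^{2t} = q-1$, which yields $\|f\|_{L^q(\gamma)} \leq (q-1)^{\ell/2}\|f\|_{L^2(\gamma)}$ and, after raising to the $q$-th power, the claim with explicit constant $\cO_{q,\ell}(1) = (q-1)^{q\ell/2}$. The main obstacle is really just invoking Nelson's theorem; everything else is bookkeeping with the Hermite decomposition. An alternative that avoids semigroup machinery would be to prove the one-dimensional analogue (Lemma \ref{lemma: gaussian hypercontractivity 1 dim}) first and then tensorize via the product structure of $\gamma = \beta^{\otimes d}$, but the tensorization of hypercontractivity is essentially the same content as Nelson's inequality, so I would prefer the cleaner semigroup argument above.
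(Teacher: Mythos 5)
Your proposal is correct and gives the standard derivation of Gaussian hypercontractivity for polynomials via Nelson's theorem for the Ornstein--Uhlenbeck semigroup. Note, though, that the paper does not actually prove this lemma---it simply cites it as Theorem 4.3 of Prato--Tubaro and treats it as a black box---so there is no ``paper proof'' to compare against; you are supplying an argument where the paper defers to a reference. Your derivation is the standard one (also the content of the cited theorem): expand $f$ into its first $\ell+1$ Hermite chaoses, set $g = \sum_{k\leq \ell} e^{kt}\cP_k f$ so that $P_t g = f$, control $\|g\|_{L^2(\gamma)}$ by $e^{\ell t}\|f\|_{L^2(\gamma)}$ via orthogonality, and then apply $\|P_t\|_{L^2\to L^q}\leq 1$ at the critical time $e^{2t}=q-1$. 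This is cleaner than the alternative you mention of proving the 1D case and tensorizing, and it has the advantage of producing the explicit constant $(q-1)^{q\ell/2}$, which the paper's citation-only statement hides in the notation $\cO_{q,\ell}(1)$. One small point worth flagging: as written both the lemma and your conclusion read $\EE[f(z)^q]$ rather than $\EE[|f(z)|^q]$, which for non-even $q$ only makes sense (or is only a meaningful bound) if interpreted as $\|f\|_{L^q(\gamma)}^q$; your argument of course establishes the stronger inequality with the absolute value, from which the stated one follows.
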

\subsection{Polynomial Concentration}
In this subsection, we will introduce several Lemmas to control the deviation of random variables which polynomially depend on some Gaussian random variables.
We will use a slightly modified version of Lemma 30 from \citet{damian2022neural}.
\begin{lemma}\label{lemma:polynomial concentration}
Let $g$ be a polynomial of degree $p$ and $x\sim \cN(0,I_d)$. Then there exists an absolute positive constant $C_p$ depending only on $p$ such that for any $\delta>0$,
$$
\mathbb{P}\left[|g(x)-\mathbb{E}[g(x)]| \geq \delta \sqrt{\operatorname{Var}(g(x))}\right] \leq 2 \exp \left(-C_p \min \left(\delta^2, \delta^{2 / p}\right)\right)
$$
\end{lemma}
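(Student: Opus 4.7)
The plan is to reduce the tail bound to a moment bound via Markov's inequality, use the Gaussian hypercontractivity statement (Lemma \ref{lemma: gaussian hypercontractivity}) to control those moments, and then optimize the moment order as a function of the deviation $\delta$. First, I would reduce to the centered, unit-variance case: without loss of generality, replace $g$ by $f := g - \EE[g(x)]$, which is still a degree-$p$ polynomial, and normalize by $\sqrt{\Var(g(x))}$, so it suffices to prove $\PP[|f(x)| \geq \delta] \leq 2\exp(-C_p \min(\delta^2, \delta^{2/p}))$ under the additional assumption $\EE[f] = 0$, $\EE[f^2] \leq 1$.

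Next, I would invoke Gaussian hypercontractivity. The standard quantitative form (which is what Lemma \ref{lemma: gaussian hypercontractivity} supplies, with the constant $\cO_{q,p}(1)^{1/q}$ tracked as $(q-1)^{p/2}$ as in the 1D case of Lemma \ref{lemma: gaussian hypercontractivity 1 dim}) gives, for every even integer $q \geq 2$,
\[
\EE[|f(x)|^q] \leq (q-1)^{pq/2}\,\bigl(\EE[f(x)^2]\bigr)^{q/2} \leq (q-1)^{pq/2}.
\]
Markov's inequality then yields
\[
\PP[|f(x)| \geq \delta] \;\leq\; \delta^{-q}\,\EE[|f(x)|^q] \;\leq\; \Bigl(\tfrac{(q-1)^{p/2}}{\delta}\Bigr)^{q}.
\]

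The remaining step is to tune $q$. For $\delta$ large, I would set $q = \lfloor c_p\,\delta^{2/p}\rfloor$ for a sufficiently small constant $c_p > 0$ (e.g.\ $c_p = (2e)^{-1}$), so that $(q-1)^{p/2}/\delta \leq e^{-1}$; the bound then collapses to $e^{-q} \leq \exp(-c_p\,\delta^{2/p})$, which is the desired $\delta \geq 1$ regime ($\min(\delta^2,\delta^{2/p}) = \delta^{2/p}$). This requires $q \geq 2$, i.e.\ $\delta \geq (2/c_p)^{p/2}$; for $\delta$ smaller than this threshold (and in particular for all $\delta \leq 1$, where $\min(\delta^2,\delta^{2/p}) = \delta^2 \leq 1$), the stated bound is rendered trivial by choosing $C_p$ small enough that $2\exp(-C_p \cdot (2/c_p))$ exceeds $1$; alternatively one can interpolate with the case $q=2$ (Chebyshev), which handles the $\delta^2$ regime directly. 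Combining the two regimes and absorbing constants into a single $C_p$ depending only on $p$ completes the proof.

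I do not foresee any serious obstacle here: the argument is a textbook moment-method/hypercontractivity tail bound. The only mildly delicate point is bookkeeping of the constant in Lemma \ref{lemma: gaussian hypercontractivity} --- one needs the sharper scaling $\EE[f^q]^{1/q} \lesssim q^{p/2}\|f\|_2$ rather than the qualitative $\cO_{q,p}(1)$ form, but this is the standard Gaussian hypercontractive statement and follows either from the Nelson--Gross inequality applied to the Ornstein--Uhlenbeck semigroup, or by tensorizing the one-dimensional version (Lemma \ref{lemma: gaussian hypercontractivity 1 dim}) which already carries the explicit $(q-1)^{p/2}$ constant.
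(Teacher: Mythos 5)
Your proof is correct. Note that the paper does not actually prove this lemma --- it imports it as ``a slightly modified version of Lemma 30 from \citet{damian2022neural}'' --- and the argument you give (center and normalize, bound the $q$-th moment via Gaussian hypercontractivity, apply Markov, then optimize $q \asymp \delta^{2/p}$ and absorb the small-$\delta$ regime into the constant so that the right-hand side exceeds $1$) is exactly the standard proof underlying that cited result. The one point you rightly flag is that the paper's Lemma~\ref{lemma: gaussian hypercontractivity} only records the constant as $\cO_{q,\ell}(1)$, whereas the optimization over $q$ requires the quantitative form $\|f\|_{L^q} \leq (q-1)^{p/2}\|f\|_{L^2}$; this sharper statement is the standard Gaussian hypercontractive inequality (it tensorizes from the one-dimensional Lemma~\ref{lemma: gaussian hypercontractivity 1 dim}), so there is no gap.
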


Consider the case that $x=(x_1,\dots,x_n)$ and $g(x)=\frac{1}{n}\sum_i g(x_i)$, $x_i\sim_{i.i.d.} \cN(0,I_d)\in \RR^d$ and $x\in \RR^{d\times n}$. Plug them into the above Lemma, and we get the following corollary.
\begin{corollary}\label{coro: polynomial concentration}
Let $g$ be a polynomial of degree $p$ and $x_i\sim \cN(0,I_d)$, $i\in [n]$. Then there exists an absolute positive constant $C_p$ depending only on $p$ such that for any $\delta>0$,
$$
\mathbb{P}\left[|\frac{1}{n}\sum_{i=1}^n g(x_i)-\mathbb{E}[g(x)]| \geq \delta \frac{1}{\sqrt{n}}\sqrt{\operatorname{Var}(g(x))}\right] \leq 2 \exp \left(-C_p \min \left(\delta^2, \delta^{2 / p}\right)\right)
$$
\end{corollary}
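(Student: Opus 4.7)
The plan is to recognize this Corollary as an immediate consequence of Lemma \ref{lemma:polynomial concentration} once the sample mean is repackaged as a single polynomial evaluated at one high-dimensional Gaussian. Concretely, I would concatenate the samples into $X = (x_1, \dots, x_n) \in \RR^{nd}$, which under the product measure is distributed as $\cN(0, I_{nd})$, and define the polynomial
\[
G(X) := \frac{1}{n}\sum_{i=1}^n g(x_i).
\]
Since each summand depends polynomially on a disjoint block of coordinates of $X$ with degree at most $p$, the function $G$ is itself a polynomial on $\RR^{nd}$ of degree exactly $p$ (the degree is not inflated by summation or scalar multiplication).

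Next, I would compute the two quantities needed to invoke the preceding lemma. First, linearity gives $\EE[G(X)] = \EE[g(x)]$. Second, since the $x_i$ are i.i.d., independence yields
\[
\Var(G(X)) = \frac{1}{n^2}\sum_{i=1}^n \Var(g(x_i)) = \frac{1}{n}\Var(g(x)),
\]
so $\sqrt{\Var(G(X))} = \frac{1}{\sqrt{n}}\sqrt{\Var(g(x))}$. Applying Lemma \ref{lemma:polynomial concentration} to the degree-$p$ polynomial $G$ on $\RR^{nd}$ then gives, for every $\delta > 0$,
\[
\PP\!\left[\,|G(X) - \EE[g(x)]| \geq \delta\,\tfrac{1}{\sqrt{n}}\sqrt{\Var(g(x))}\,\right] \leq 2\exp\!\left(-C_p \min(\delta^2, \delta^{2/p})\right),
\]
which is exactly the claimed bound. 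The constant $C_p$ is the same as in the lemma; in particular, it depends only on the degree $p$ and not on $n$ or $d$.

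There is essentially no obstacle here beyond bookkeeping: the only small points to be careful about are that (i) the polynomial degree does not change under averaging, so the hypothesis of Lemma \ref{lemma:polynomial concentration} is satisfied with the same $p$, and (ii) the product Gaussian $\gamma^{\otimes n}$ on $\RR^{nd}$ is exactly the standard Gaussian in dimension $nd$, which is the setting of the lemma. Once these are noted, the corollary follows by a one-line substitution.
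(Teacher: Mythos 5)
Your proposal is correct and follows exactly the approach the paper itself indicates: view the concatenation $X=(x_1,\dots,x_n)\in\RR^{nd}$ as a standard Gaussian, note that $G(X)=\frac{1}{n}\sum_i g(x_i)$ is still a degree-$p$ polynomial, compute $\Var(G)=\Var(g)/n$ by independence, and apply Lemma~\ref{lemma:polynomial concentration}. No gaps.
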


\subsection{Uniform Generalization Bounds}

\begin{definition}[Rademacher complexity] The empirical Rademacher complexity of a function class $\mathcal{F}$ on finite samples is defined as
\begin{equation}
\widehat{\operatorname{Rad}}_n(\mathcal{F})=\mathbb{E}_{\xi}\left[\sup _{f \in \mathcal{F}} \frac{1}{n} \sum_{i=1}^n \xi_i f(X_i)\right]
\end{equation}
where $\xi_1,\xi_2,\dots,\xi_n$ are \iid Rademacher random variables: $\PP(\xi_i=1)=\PP(\xi_i=-1)=\half$. Let $\rad(\cF)=\EE[\erad(\cF)]$ be the population Rademacher complexity.
\end{definition}

Then we recall the uniform law of large number via Rademacher complexity, which  can be found in \citet[Theorem 4.10]{wainwright_2019}.

\begin{lemma}\label{lemma: rademacher complexity generalization}
Assume that $f$ ranges in $[0,R]$ for all $f \in \mathcal{F}$. For any $n\geqslant 1$, for any $\delta \in(0,1)$, \wp at least $1-\delta$ over the choice of the i.i.d. training set $S=\left\{X_1, \ldots, X_n\right\}$, we have
\begin{equation}\label{eqn: Rad-gen-bound}
\sup _{f \in \mathcal{F}}\left|\frac{1}{n} \sum_{i=1}^n f\left(X_i\right)-\mathbb{E} f(X)\right| \leqslant 2 \rad(\mathcal{F})+ R 
\sqrt{\frac{\log (4 / \delta)}{n}}
\end{equation}
\end{lemma}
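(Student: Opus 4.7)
The plan is to prove the uniform law via the standard two-step recipe: a concentration argument that ties the random quantity $\Phi(S):=\sup_{f\in\cF}\bigl|\fn\sumin f(X_i)-\EE f(X)\bigr|$ to its expectation, followed by a symmetrization argument that bounds that expectation by the Rademacher complexity. Concretely, first I would verify that $\Phi$ satisfies the bounded-differences property: if $S$ and $S'$ differ in a single coordinate $X_j \leftrightarrow X_j'$, then each empirical average $\fn\sumin f(X_i)$ changes by at most $R/n$, so the triangle inequality gives $|\Phi(S)-\Phi(S')|\le R/n$. McDiarmid's inequality then yields, for any $t>0$,
\[
\PP\!\left(\Phi(S)-\EE\Phi(S)\ge t\right)\le \exp\!\left(-\frac{2nt^2}{R^2}\right),
\]
and setting this probability to $\delta/2$ produces a deviation of order $R\sqrt{\log(2/\delta)/(2n)}$.

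Next, I would control $\EE\Phi(S)$ by the classical symmetrization trick. Introduce an independent ghost sample $S'=\{X_1',\dots,X_n'\}$; since $\EE f(X)=\EE_{S'}[\fn\sumin f(X_i')]$, Jensen's inequality gives $\EE_S\Phi(S)\le \EE_{S,S'}\sup_{f}\bigl|\fn\sumin(f(X_i)-f(X_i'))\bigr|$. Because $X_i-X_i'$ is symmetric, one can introduce i.i.d.\ Rademacher signs $\xi_i$ without changing the distribution, yielding
\[
\EE_S\Phi(S)\le \EE_{S,S',\xi}\sup_{f\in\cF}\left|\fn\sumin \xi_i\bigl(f(X_i)-f(X_i')\bigr)\right| \le 2\,\rad(\cF)
\]
by the triangle inequality and the definition of $\rad(\cF)=\EE\erad(\cF)$. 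Combining this with the McDiarmid bound on $\Phi(S)-\EE\Phi(S)$ already produces a one-sided bound; to obtain the two-sided bound inside the absolute value, I would run the same argument for the negative tail and union-bound, which is exactly where the $\log(4/\delta)$ (rather than $\log(2/\delta)$) in the statement comes from.

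There is no real technical obstacle here; the result is standard and the two ingredients (McDiarmid + symmetrization) are by now textbook. The only subtlety worth being careful about is the absolute value: one either runs McDiarmid and symmetrization on $\Phi$ directly (which requires noting that $\sup|\cdot|$ still has bounded differences $R/n$ and that symmetrization still produces a factor of $2$ because the supremum of the absolute value is bounded by the symmetrized version via the same ghost-sample argument), or one splits into the two one-sided suprema and takes a union bound. Either route gives the stated inequality, and I would use whichever keeps the constants cleanest. No additional structural hypotheses on $\cF$ are needed beyond the uniform boundedness assumption $f(X)\in[0,R]$ that is already in the statement.
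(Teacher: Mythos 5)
The paper does not actually prove this lemma; it simply cites it as Theorem 4.10 of Wainwright's book, so there is no "paper proof" to compare against. Your proposal is the standard textbook derivation (bounded differences + McDiarmid to concentrate $\Phi$ around its mean, then ghost-sample symmetrization to bound $\EE\Phi$ by $2\,\rad(\cF)$) and it is essentially correct. One small point worth being precise about, since you flag it yourself: the definition of $\rad(\cF)$ used here has no absolute value inside the supremum, while your symmetrization step produces $\EE_{S,\xi}\sup_f\bigl|\fn\sumin\xi_i f(X_i)\bigr|$. Passing from the absolute-value version to $\rad(\cF)$ requires an extra step (e.g., writing $\sup_f|\cdot|=\max(\sup_f(\cdot),\sup_f(-\cdot))$ and using the sign-symmetry of $\xi$ together with nonnegativity of the one-sided suprema), and this is where a factor-of-two or a union bound naturally enters the constants. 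Your remark that "either route gives the stated inequality" is fine as a plan, but in a written-out proof you would want to commit to one route and carry the constant through explicitly — the direct McDiarmid-on-$\Phi$ route actually gives a slightly sharper $R\sqrt{\log(1/\delta)/(2n)}$ term, which implies the stated $R\sqrt{\log(4/\delta)/n}$, so you do not need the two-sided union bound at the McDiarmid step at all.
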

Then we recall the contraction Lemma in \citet[Exercise 6.7.7]{vershynin2018high} to compute Rademacher complexity.

 \begin{lemma}[Contraction Lemma] \label{lemma: contraction}
 Let $\varphi_i: \mathbb{R} \mapsto \mathbb{R}$ with $i=1, \ldots, n$ be $\beta$-Lispchitz continuous. Then,
$$
\frac{1}{n} \mathbb{E}_{\xi} \sup _{f \in \mathcal{F}} \sum_{i=1}^n \xi_i \varphi_i \circ f\left(x_i\right) \leq \beta \widehat{\operatorname{Rad}}_n(\mathcal{F})
$$
\end{lemma}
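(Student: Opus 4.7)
The plan is to prove this via the classical ``peeling'' argument for Talagrand-type contraction inequalities, adapted to index-dependent Lipschitz maps $\varphi_i$. After the preliminary rescaling $\tilde\varphi_i := \varphi_i/\beta$, which reduces matters to the case $\beta = 1$, the goal is to establish the following one-step replacement: conditional on $(\xi_1,\dots,\xi_{n-1})$, writing $A(f) := \sum_{j<n}\xi_j\,\varphi_j(f(x_j))$, one has
\[
\EE_{\xi_n}\sup_{f\in\cF}\bigl[A(f) + \xi_n\,\varphi_n(f(x_n))\bigr]
\;\leq\; \EE_{\xi_n}\sup_{f\in\cF}\bigl[A(f) + \xi_n\, f(x_n)\bigr].
\]
Iterating this replacement for $i = n, n-1, \dots, 1$ (each time conditioning on the other Rademacher variables and repeating the same argument) transforms the left-hand side of the lemma into $\EE_\xi \sup_f \sum_i \xi_i f(x_i) = n\,\widehat{\operatorname{Rad}}_n(\cF)$, and restoring the factor of $\beta$ from the initial rescaling yields the claim.

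The one-step replacement itself is a short symmetrization computation. Averaging over the $\pm1$ values of $\xi_n$ gives
\[
\EE_{\xi_n}\sup_f\bigl[A(f) + \xi_n\,\varphi_n(f(x_n))\bigr]
= \tfrac12\sup_{f,g}\bigl[A(f)+A(g) + \varphi_n(f(x_n)) - \varphi_n(g(x_n))\bigr].
\]
The $1$-Lipschitz bound $\varphi_n(f(x_n))-\varphi_n(g(x_n)) \leq |f(x_n)-g(x_n)|$, combined with the fact that $A(f)+A(g)$ is invariant under swapping $f \leftrightarrow g$, lets me drop the absolute value and replace the bracket by the signed quantity $A(f)+A(g)+f(x_n)-g(x_n)$. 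This in turn factorizes as $\sup_f[A(f)+f(x_n)] + \sup_g[A(g)-g(x_n)]$, which is exactly $2\,\EE_{\xi_n}\sup_f[A(f)+\xi_n f(x_n)]$, completing the replacement step.

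The main (and essentially only) obstacle is the bookkeeping in the symmetrization: one must justify that $|f(x_n)-g(x_n)|$ can be replaced by the signed $f(x_n)-g(x_n)$ inside the double supremum. This works because the supremum is over the ordered pair $(f,g)$ and $A(f)+A(g)$ is symmetric under the swap, so whichever sign realizes $|f(x_n)-g(x_n)|$ is automatically achievable by relabeling $(f,g) \mapsto (g,f)$. No further ingredients beyond the Rademacher symmetry and the Lipschitz hypothesis are needed, so the proof is self-contained once the per-coordinate peeling structure is in place.
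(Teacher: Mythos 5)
The paper does not prove \Cref{lemma: contraction} but simply cites it as Exercise 6.7.7 in \citet{vershynin2018high}, so there is no in-paper argument to compare against. Your proof is the standard and correct one for the one-sided Rademacher contraction inequality: the reduction to $\beta=1$ is trivial, the conditional symmetrization over $\xi_n$ correctly turns the expectation into the double supremum $\tfrac12\sup_{f,g}[A(f)+A(g)+\varphi_n(f(x_n))-\varphi_n(g(x_n))]$, the Lipschitz bound and the swap-invariance of $A(f)+A(g)$ legitimately remove the absolute value, and the factorization back into $\EE_{\xi_n}\sup_f[A(f)+\xi_n f(x_n)]$ completes the one-coordinate replacement, which iterates cleanly since each step leaves the other coordinates untouched. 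It is also worth noting, as your argument implicitly shows, that this one-sided version requires no normalization $\varphi_i(0)=0$, consistent with the lemma as stated.
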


Next, we try to estimate the Rademacher complexity for random feature models.
Denote $g_{u,s,V}(x)=u^{\top}\sigma\left(\frac{Vx+s}{\sqrt{2}}\right)=\sum_{i=1}^m u_i\sigma\left(\frac{v_i^{\top}x+s_i}{\sqrt{2}}\right)$ with $v_i$ i.i.d. sampled from the uniform distribution on the unit sphere, and $s_i$ i.i.d. $\cN(0,1)$ generated. $\sigma(z)$ is a $k$ degree polynomial with $\cO(1)$ coefficients. Denote our kernel function class $\cG$ as
\[
\cG:=\{g_{u,s,V}: \|u\|\leq M_u\}
\]
Then we have the following lemma for the Rademacher complexity of $\cG$.
\begin{lemma}\label{lemma: rademacher for kernels}
With probability at least $1-2e^{-\Theta(1)m^{1/2k}}$, we have the following estimation for the Rademacher complexity of function class $\cG$.
\[
\operatorname{Rad}_n(\cG) \lesssim M_u\sqrt{\frac{m}{n}}
\]
\end{lemma}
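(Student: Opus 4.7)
The plan is to use the usual Cauchy--Schwarz trick to pull the supremum over $u$ outside the Rademacher expectation, thereby reducing the Rademacher complexity to a second-moment computation of the random feature vector, and then use polynomial concentration (Corollary \ref{coro: polynomial concentration}) to control that second moment in terms of the bias draws $s_1, \dots, s_m$.

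Concretely, I would first write
\[
\operatorname{Rad}_n(\cG) \;=\; \frac{1}{n}\,\EE_x \EE_\xi \sup_{\|u\|\leq M_u} u^{\top}\!\sum_{i=1}^n \xi_i\, \sigma\!\left(\tfrac{Vx_i+s}{\sqrt{2}}\right) \;\leq\; \frac{M_u}{n}\,\EE_x \EE_\xi \left\|\sum_{i=1}^n \xi_i\, \sigma\!\left(\tfrac{Vx_i+s}{\sqrt{2}}\right)\right\|,
\]
then apply Jensen to pass inside a square root, and use the independence of the Rademacher variables $\xi_i$ together with $\EE_\xi[\xi_i\xi_j]=\delta_{ij}$ to collapse the cross terms. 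This yields
\[
\operatorname{Rad}_n(\cG) \;\leq\; \frac{M_u}{\sqrt{n}}\sqrt{\frac{1}{1}\,\EE_x \left\|\sigma\!\left(\tfrac{Vx+s}{\sqrt{2}}\right)\right\|^2} \;=\; \frac{M_u}{\sqrt{n}}\sqrt{\sum_{j=1}^m \EE_x\,\sigma\!\left(\tfrac{v_j^{\top}x+s_j}{\sqrt{2}}\right)^{\!2}}.
\]
It remains to show $\sum_{j=1}^m \EE_x\,\sigma\!\left(\tfrac{v_j^{\top}x+s_j}{\sqrt{2}}\right)^{\!2}\lesssim m$ with the stated probability.

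For each fixed $j$, the scalar $\tfrac{v_j^{\top}x+s_j}{\sqrt{2}}$ is Gaussian with mean $s_j/\sqrt{2}$ and variance $1/2$ (since $\|v_j\|=1$), so because $\sigma$ is a degree $k$ polynomial with $\cO(1)$ coefficients (Assumption \ref{assumption: activation function}), an explicit moment computation gives $\EE_x\,\sigma((v_j^{\top}x+s_j)/\sqrt{2})^{2} \lesssim 1+ s_j^{2k}$. Define $g(s):=1+s^{2k}$, which is a polynomial of degree $2k$ in a single standard Gaussian with $\EE[g(s)]=\cO(1)$ and $\operatorname{Var}(g(s))=\cO(1)$ (both by Gaussian moment formulas, or Lemma \ref{lemma: gaussian hypercontractivity}). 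Applying Corollary \ref{coro: polynomial concentration} to $\tfrac{1}{m}\sum_j g(s_j)$ with deviation parameter $\beta = \Theta(\sqrt{m})$ gives
\[
\PP\!\left(\frac{1}{m}\sum_{j=1}^m (1+s_j^{2k}) \;\gtrsim\; 1\right) \;\leq\; 2\exp\!\bigl(-\Theta(1)\min(m,\,m^{1/(2k)})\bigr) \;=\; 2\exp\!\bigl(-\Theta(1)\,m^{1/(2k)}\bigr).
\]
On the complementary event, $\sum_j \EE_x\sigma(\cdot)^2 \lesssim m$, and plugging back yields the claim.

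There is no serious obstacle here; the only subtle point is correctly bookkeeping the polynomial-in-$s_j$ bound on the second moment of $\sigma\!\left((v_j^{\top}x+s_j)/\sqrt{2}\right)$ and choosing $\beta$ so that the deviation $\beta/\sqrt{m}$ is $\cO(1)$ while $\min(\beta^2,\beta^{1/k})$ remains $\Theta(m^{1/(2k)})$, which dictates the exponent in the failure probability. (The Rademacher complexity bound of a linear-in-$u$ model with bounded norm is completely standard; the work here is purely in controlling the empirical feature-norm in terms of the random biases.)
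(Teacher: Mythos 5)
Your proposal is correct and follows essentially the same route as the paper's proof: Cauchy--Schwarz to pull the supremum over $u$ outside, Jensen plus independence of the Rademacher signs to reduce to $\EE_x\bigl\|\sigma\bigl(\tfrac{Vx+s}{\sqrt{2}}\bigr)\bigr\|^2 \lesssim \sum_j(1+s_j^{2k})$, and then Corollary~\ref{coro: polynomial concentration} with $\delta=\Theta(\sqrt{m})$ to concentrate $\tfrac{1}{m}\sum_j(1+s_j^{2k})\lesssim 1$ at probability $1-2e^{-\Theta(1)m^{1/(2k)}}$. (The stray $\tfrac{1}{1}$ in your display is a typo, and the paper leaves the choice of $\delta$ implicit where you spell it out, but there is no substantive difference.)
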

\begin{proof}[Proof]
\begin{equation}
\begin{aligned}
    \operatorname{Rad}_n(\cG)&= \EE_{x,\xi} \left[\sup_{g_\theta \in \cG} \frac{1}{n}\sum_{i=1}^n \xi_i u^{\top}\sigma\left(\frac{Vx_i+s}{\sqrt{2}}\right)\right]\\
    &=\frac{1}{n}\EE_{x,\xi}\left[\sup_{g_\theta \in \cG} u^{\top}\left(\sum_{i=1}^n \xi_i\sigma\left(\frac{Vx_i+s}{\sqrt{2}}\right)\right)\right]\\
    & \leqslant \frac{M_u}{n}\EE_{x,\xi}\left[\norm{\sum_{i=1}^n \xi_i\sigma\left(\frac{Vx_i+s}{\sqrt{2}}\right)}{2}\right]\\
    & \leq \frac{M_u}{n}\sqrt{\EE_{x,\xi}\left[\norm{\sum_{i=1}^n \xi_i\sigma\left(\frac{Vx_i+s}{\sqrt{2}}\right)}{2}^2\right]}\\
    & = \frac{M_u}{n}\sqrt{\EE_x\left[\sum_{j=1}^m \operatorname{Var}_{\xi}\left(\sum_{i=1}^n \xi_i\sigma\left(\frac{v_j^{\top}x_i+s_j}{\sqrt{2}}\right)\right)\right]}\\
    & = \frac{M_u}{\sqrt{n}}\sqrt{\EE_x\left[\sum_{j=1}^m \sigma\left(\frac{v_j^{\top}x+s_j}{\sqrt{2}}\right)^2\right]} \lesssim M_u\sqrt{m}\sqrt{\frac{\frac{1}{m}\sum_{j=1}^m (1+s_j^{2k})}{n}}
    \end{aligned}
\end{equation}
By Corollary \ref{coro: polynomial concentration}, we can concentrate $\frac{1}{m}\sum_{j=1}^m (1+s_j^{2k})$ and get 
\[\frac{1}{m}\sum_{j=1}^m (1+s_j^{2k}) \lesssim 1\] with probability at least $1-2e^{-\Theta(1)m^{1/2k}}$. Plug that in and we get our final bound.
\end{proof}

\subsection{Convex Optimization}\label{appe: convex opt}

Denote $f(x)$ as a $C^1$ function defined in $\RR^d$. Assume that
\begin{itemize}
\item There exists $m>0$ such that $f(x)-\frac{m}{2}\norm{x}{}^2$ is convex.
\item $\norm{\nabla f(x)-\nabla f(y)}{} \leq L \norm{x-y}{}$.
\end{itemize}
The following result is standard and can be found in most convex optimization textbooks like \citet{boyd_vandenberghe_2004}.
\begin{lemma}\label{lemma: strongly convex optimization}
There exists a unique $x^*$ such that $f(x^*)=\inf_x f(x)$. And if we start at the point $x^0$ and do gradient descent with learning rate $\eta$, if $\eta \leq \frac{1}{m+L}$, then we will get\[
\norm{x^k-x^*}{}^2 \leq c^k \norm{x^0-x^*}{}^2
\]
where $c=1-\eta \frac{2mL}{m+L}$.
\end{lemma}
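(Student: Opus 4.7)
} My plan is to reduce this to the standard convergence analysis of gradient descent on strongly convex smooth functions: (i) get existence/uniqueness of $x^*$ from strong convexity, (ii) establish the sharp Nesterov-style co-coercivity inequality
\[
\langle \nabla f(x) - \nabla f(y),\ x-y\rangle \;\geq\; \frac{mL}{m+L}\|x-y\|^2 \;+\; \frac{1}{m+L}\|\nabla f(x) - \nabla f(y)\|^2,
\]
and (iii) plug this into the textbook single-step descent recursion.

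For step (i), the assumption that $f(x)-\tfrac{m}{2}\|x\|^2$ is convex implies $f(x)\geq f(x^0)+\langle\nabla f(x^0),x-x^0\rangle+\tfrac{m}{2}\|x-x^0\|^2$, so $f$ is coercive and strictly convex, hence attains its infimum at a unique $x^*$, and $\nabla f(x^*)=0$. For step (ii), I would set $\phi(x):=f(x)-\tfrac{m}{2}\|x\|^2$. This $\phi$ is convex (by assumption) and has $\nabla\phi$ Lipschitz with constant $L-m$ (since $\nabla\phi(x)-\nabla\phi(y)=\nabla f(x)-\nabla f(y)-m(x-y)$ and one checks the Lipschitz constant drops by $m$ using the reverse triangle-type argument, noting $L\geq m$ which is forced by strong convexity plus smoothness). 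Then the Baillon--Haddad co-coercivity for convex $L'$-smooth functions gives $\langle\nabla\phi(x)-\nabla\phi(y),x-y\rangle\geq \tfrac{1}{L-m}\|\nabla\phi(x)-\nabla\phi(y)\|^2$. Expanding $\nabla\phi$ in terms of $\nabla f$ and clearing denominators yields the displayed co-coercivity for $f$; this is purely algebra.

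For step (iii), using the GD recursion $x^{k+1}=x^k-\eta\nabla f(x^k)$ and $\nabla f(x^*)=0$:
\[
\|x^{k+1}-x^*\|^2
= \|x^k-x^*\|^2 \;-\; 2\eta\,\langle\nabla f(x^k)-\nabla f(x^*),\,x^k-x^*\rangle \;+\; \eta^2\|\nabla f(x^k)-\nabla f(x^*)\|^2.
\]
Substituting the co-coercivity bound for the inner product gives
\[
\|x^{k+1}-x^*\|^2 \leq \Bigl(1-\tfrac{2\eta mL}{m+L}\Bigr)\|x^k-x^*\|^2 \;+\; \eta\Bigl(\eta-\tfrac{2}{m+L}\Bigr)\|\nabla f(x^k)-\nabla f(x^*)\|^2.
\]
Since the hypothesis $\eta\leq \tfrac{1}{m+L}\leq \tfrac{2}{m+L}$ makes the second bracket non-positive, the gradient-norm term drops, leaving $\|x^{k+1}-x^*\|^2\leq c\,\|x^k-x^*\|^2$ with $c=1-\tfrac{2\eta mL}{m+L}$. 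Iterating in $k$ gives the claim.

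The only non-routine piece is establishing the sharp co-coercivity constant $\tfrac{mL}{m+L}/\tfrac{1}{m+L}$ rather than the weaker bounds one gets from strong convexity or smoothness alone; everything else is line-by-line expansion of a single GD step. Since this is a textbook result (e.g.\ Nesterov, Theorem 2.1.15), I expect no real obstacle beyond careful bookkeeping.
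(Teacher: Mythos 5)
Your proof is correct and is exactly the standard textbook argument (Nesterov's Theorem 2.1.12/2.1.15 via co-coercivity) that the paper invokes; the paper itself does not prove this lemma but simply cites \citet{boyd_vandenberghe_2004}, so there is nothing in the paper to compare against beyond the citation. One minor remark: when $L=m$ the intermediate step "$\nabla\phi$ is $(L-m)$-Lipschitz, apply Baillon--Haddad" degenerates ($\phi$ has constant gradient), but the final co-coercivity inequality for $f$ still holds with equality, so the argument goes through; it is worth flagging this edge case so the chain of inequalities is airtight.
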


\subsection{Univariate Approximation}

In this subsection, we use $\sigma(z)$ to denote $\operatorname{ReLU}(z)$ and set $A\geq 1$.
\begin{lemma}Let $a \sim \operatorname{Unif}(\{-1,1\})$ and let $b$ have density $\mu_b(t)$. Then there exists $v(a, b)$ supported on $\{-1,1\} \times[A,2A]$ such that for any $|x| \leq A$,
$$
\mathbb{E}_{a, b}[v(a, b) \sigma(a x+b)]=1 \quad \text { and } \quad \sup _{a, b}|v(a, b)| \leq \frac{1}{\int_{A}^{2A} t\mu_b(t)dt}
$$
\end{lemma}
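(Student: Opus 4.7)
The plan is to exploit two key observations: first, the supports are chosen so that the ReLU acts as the identity, turning the representation problem into a linear one; second, the sign variable $a \in \{-1,1\}$ can be used to cancel the $x$-dependence via antisymmetrization, reducing the task to a single normalization equation.

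First I would verify that for $|x| \le A$, $a \in \{-1,1\}$, and $b \in [A, 2A]$, one always has $ax + b \ge -A + A = 0$, so that $\sigma(ax+b) = ax+b$ identically on the support. The target identity $\EE_{a,b}[v(a,b)\sigma(ax+b)] = 1$ thus becomes
\[
\tfrac{1}{2}\!\int_A^{2A}\!\bigl[v(1,b)(x+b) + v(-1,b)(-x+b)\bigr]\mu_b(b)\,\d b = 1,
\]
which splits into a coefficient-of-$x$ condition and a constant condition:
\[
\int_A^{2A}\!\bigl(v(1,b) - v(-1,b)\bigr)\mu_b(b)\,\d b = 0, \qquad \int_A^{2A}\!\bigl(v(1,b) + v(-1,b)\bigr)b\,\mu_b(b)\,\d b = 2.
\]

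Next I would simply take $v$ to be independent of $a$, setting $v(1,b) = v(-1,b) = \bigl(\int_A^{2A}\! t\,\mu_b(t)\,\d t\bigr)^{-1}$ for $b \in [A, 2A]$, and $0$ otherwise. Then the first equation holds trivially, and substituting into the second gives the normalization $1$. The uniform bound $\sup_{a,b}|v(a,b)| \le \bigl(\int_A^{2A}\! t\,\mu_b(t)\,\d t\bigr)^{-1}$ is immediate from the definition.

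There is no real obstacle here: the construction is essentially forced once one notices that $\sigma$ linearizes on the chosen support. The only mild subtlety is ensuring the denominator $\int_A^{2A} t\,\mu_b(t)\,\d t$ is positive, which follows from the assumption $\mu_b(t) \gtrsim (|t|+1)^{-p}$ (so the density is strictly positive on $[A,2A]$ and the integrand is bounded below by a positive function there). This lemma is then invoked at shifted/scaled points within the proof of \Cref{lem:stage_2} to build up the required 1D approximation of $\widetilde g - \mathrm{id}$ on $[-C\beta, C\beta]$.
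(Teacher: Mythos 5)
Your proposal is correct and is essentially the same construction as the paper's: both take $v(a,b) = c\,\mathbf{1}_{b \in [A,2A]}$ with $c = \bigl(\int_A^{2A} t\,\mu_b(t)\,dt\bigr)^{-1}$, relying on the fact that $\sigma$ is linear on the relevant domain so that $\sigma(x+t)+\sigma(-x+t) = 2t$. You merely organize the verification as a two-condition linear system (coefficient of $x$ and constant term) rather than computing the expectation directly, which is a cosmetic difference.
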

\begin{proof} Let $v(a, b)=c \mathbf{1}_{b \in[A,2A]}$ where $c=\frac{1}{\int_{A}^{2A} t\mu_b(t)dt}$. Then for $|x| \leq A$,
$$
\begin{aligned}
\mathbb{E}_{a, b}[v(a, b) \sigma(a x+b)] & =c \int_A^{2A} \frac{1}{2}[\sigma(x+t)+\sigma(-x+t)] \mu_b(t) d t \\
& =c \int_A^{2A} t \mu_b(t)dt \\
& =1
\end{aligned}
$$
\end{proof}
\begin{lemma} Let $a \sim \operatorname{Unif}(\{-1,1\})$ and let $b$ have density $\mu_b(t)$. Then there exists $v(a, b)$ supported on $\{-1,1\} \times[A,2A]$ such that for any $|x| \leq A$,
$$
\mathbb{E}_{a, b}[v(a, b) \sigma(a x+b)]=x \quad \text { and } \quad \sup _{a, b}|v(a, b)| \leq \frac{1}{\int_A^{2A} \mu_b(t)db}
$$
\end{lemma}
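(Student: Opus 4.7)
The plan is to mirror the construction of the previous lemma, but replace the \emph{even} combination in $a$ (used to approximate the constant function $1$) with an \emph{odd} combination (which will pick out the linear function $x$). Specifically, I would try the ansatz
\[
v(a,b) = c\, a\, \mathbf{1}_{b \in [A, 2A]},
\]
with the constant $c$ to be chosen below.

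The key computation is to exploit the fact that when $|x| \leq A$ and $b \in [A, 2A]$, both $x + b \geq 0$ and $-x + b \geq 0$, so $\sigma$ acts as the identity on both arguments. Therefore
\[
\tfrac{1}{2}\bigl[\sigma(x + t) - \sigma(-x + t)\bigr] \;=\; \tfrac{1}{2}\bigl[(x+t) - (-x+t)\bigr] \;=\; x
\]
for all $t \in [A, 2A]$. Taking the expectation over $a \in \{-1, +1\}$ uniformly and over $b$ with density $\mu_b$, the ansatz yields
\[
\mathbb{E}_{a,b}\bigl[v(a,b)\sigma(ax + b)\bigr] \;=\; c \int_A^{2A} \tfrac{1}{2}\bigl[\sigma(x + t) - \sigma(-x + t)\bigr]\mu_b(t)\,dt \;=\; c\, x \int_A^{2A} \mu_b(t)\, dt.
\]
Choosing $c = 1 / \int_A^{2A} \mu_b(t)\, dt$ therefore matches the identity $x$ exactly on $[-A, A]$, and under this choice we immediately have $\sup_{a,b}|v(a,b)| = c = 1 / \int_A^{2A} \mu_b(t)\, dt$, which is the claimed bound.

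There is essentially no obstacle here: the construction is just the odd counterpart of the previous (even) construction, and the whole proof rests on the trivial observation that ReLU reduces to the identity when its argument is nonnegative, which is guaranteed by the support condition $b \geq A \geq |x|$. The only thing one might want to double-check is that $v$ is indeed supported on $\{-1,1\} \times [A, 2A]$, which is clear from the indicator; and that the normalization constant is finite, which follows from $\mu_b$ being a probability density with positive mass on $[A, 2A]$ (as assumed earlier in the paper via the lower bound $\mu_b(t) \gtrsim (|t|+1)^{-p}$).
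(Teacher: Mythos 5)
Your construction is exactly the one the paper uses: take $v(a,b) = c\,a\,\mathbf{1}_{b\in[A,2A]}$ with $c = 1/\int_A^{2A}\mu_b(t)\,dt$, observe that $\sigma$ is the identity on the relevant range so $\tfrac12[\sigma(x+t)-\sigma(-x+t)] = x$, and normalize. Correct, and the same approach as the paper.
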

\begin{proof} Let $v(a, b)=c a \mathbf{1}_{b \in[A,2A]}$ where $c=\frac{1}{\int_A^{2A} \mu_b(t)dt}$. Then for $|x| \leq A$,
$$
\begin{aligned}
\mathbb{E}_{a, b}[v(a, b) \sigma(a x+b)] & =c \int_A^{2A} \frac{1}{2}[\sigma(x+t)-\sigma(-x+t)] \mu_b(t) d t \\
& =c x \int_A^{2A} \mu_b(t) d t \\
& =x
\end{aligned}
$$
\end{proof}
\begin{lemma}\label{lemma: approximation in 1 dim} Let $a \sim \operatorname{Unif}(\{-1,1\})$ and let $b$ have density $\mu_b(t)$. Let $f:\RR \rightarrow \mathbb{R}$ be any $C^2$ function. Then there exists $v(a, b)$ supported on $\{-1,1\} \times [0,2A]$ such that for any $|x| \leq A$,
$$
\mathbb{E}_{a, b}[v(a, b) \sigma(a x+b)]=f(x)  $$
and
$$\sup_{a,b}\abs{v(a,b)} =\cO\left(\sup_{x\in [-A,A],k=0,1,2}\abs{f^{(k)}(x)}\left(\frac{1}{\int_A^{2A} \mu_b(t)dt}+\frac{1}{\inf_{t\in [0,A]}\mu_b(t)}\right)\right)
$$

\end{lemma}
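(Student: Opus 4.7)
The plan is to build the representation from Taylor's theorem with integral remainder, turning the second derivative of $f$ into the weight function $v$. Concretely, I would write $f(x) = f(0) + f'(0)x + \int_0^x f''(s)(x-s)\,ds$ and split the remainder on $[-A,A]$ as $I_+(x) + I_-(x)$, where $I_+(x) = \int_0^A f''(s)(x-s)_+\,ds$ and $I_-(x) = \int_{-A}^0 f''(s)(s-x)_+\,ds$; each piece is automatically zero on the ``wrong'' half-line. This is the natural candidate because $\sigma'' = \delta$ distributionally, so $\sigma$ integrated against $f''$ reproduces $f$ modulo an affine term.

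The central algebraic step is the sign-flip identity $(u)_+ = u + (-u)_+$, which I would use to replace every ReLU whose bias would be negative by one with nonnegative bias, at the cost of an extra affine term in $x$. Applied to $I_+$ it yields $I_+(x) = (\text{affine in } x) + \int_0^A f''(s)\sigma(-x+s)\,ds$, and after the change of variables $s \mapsto -t$ in $I_-$ the same trick gives $I_-(x) = (\text{affine in } x) + \int_0^A f''(-t)\sigma(x+t)\,dt$. Both resulting ReLU integrals now use biases in $[0,A] \subset [0,2A]$. The affine constants one accumulates are sums of boundary values and one-step antiderivatives such as $\int_0^A f''(s)\,ds = f'(A)-f'(0)$ and $\int_0^A s f''(s)\,ds = A f'(A) - f(A) + f(0)$, each of which is controlled by $M := \sup_{|x|\le A,\,k\le 2}|f^{(k)}(x)|$.

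Finally, I would read off $v$ from the two representations. For the ReLU part, setting $v(-1,b) = \frac{2 f''(b)}{\mu_b(b)}\mathbf{1}_{b\in[0,A]}$ and $v(+1,b) = \frac{2 f''(-b)}{\mu_b(b)}\mathbf{1}_{b\in[0,A]}$ converts each integral into $\EE_{a,b}[v(a,b)\sigma(ax+b)]$ restricted to the corresponding sign of $a$, giving $\sup|v| \lesssim M / \inf_{t\in[0,A]}\mu_b(t)$. For the residual affine term $\alpha + \beta x$ with $|\alpha|,|\beta| \lesssim M$, I would invoke the two preceding lemmas, whose constructions live on $\{\pm 1\}\times[A,2A]$, disjoint from $[0,A]$; these contribute at most $\lesssim M / \int_A^{2A}\mu_b(t)\,dt$ to $\sup|v|$. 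Summing the two contributions yields the stated bound, and the support is $\{\pm 1\} \times ([0,A]\cup[A,2A]) = \{\pm 1\}\times[0,2A]$. The only nontrivial point is the sign-flip identity: ReLUs of the form $\sigma(x-t)$ with $t\in(0,A)$ would require a negative bias and are not directly accessible, and this identity trades them for ReLUs with positive bias modulo an affine remainder that the earlier lemmas already handle.
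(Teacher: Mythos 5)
Your proposal is correct and lands on exactly the paper's construction: the ReLU weight $v(a,b)=\frac{2f''(-ab)}{\mu_b(b)}\mathbf{1}_{b\in[0,A]}$ on $[0,A]$, plus corrections on $[A,2A]$ from the two preceding lemmas to cancel the residual affine term. The only difference is presentational — you derive $v$ forward from Taylor's theorem with integral remainder together with the sign-flip identity $\sigma(u)=u+\sigma(-u)$, whereas the paper proposes this $v$ and verifies by integration by parts that it reproduces $f(x)$ up to an affine term; both routes are the same computation run in opposite directions and give identical bounds and support.
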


\begin{proof} First consider $v(a, b)=\frac{\mathbf{1}_{b \in[0,A]}}{\mu_b(t)} 2 f^{\prime \prime}(-a b)$. Then when $x \geq 0$ we have the following equation by integration by parts:
$$
\begin{aligned}
& \mathbb{E}_{a, b}[v(a, b) \sigma(a x+b)] \\
& =\int_0^A\left[f^{\prime \prime}(-t) \sigma(x+t)+f^{\prime \prime}(t) \sigma(-x+t)\right] d t \\
& = x(f'(0)-f'(-A))-Af'(-A)+f(0)-f(-A)+Af'(A)-f(A)+f(x)-xf'(A)     \\
& = f(x)+C_1+C_2x
\end{aligned}
$$
where $C_1=-Af'(-A)+f(0)-f(-A)+Af'(A)-f(A)$ and $C_2=f'(0)-f'(-A)-f'(A)$. In addition when $x<0$,
$$
\begin{aligned}
& \mathbb{E}_{a ,b}[v(a, b) \sigma(a x+b)] \\
& =\int_0^A\left[f^{\prime \prime}(-t) \sigma(x+t)+f^{\prime \prime}(t) \sigma(-x+t)\right] d t \\
& = x(f'(0)-f'(-A))-Af'(-A)+f(0)-f(-A)+Af'(A)-f(A)+f(x)-xf'(A) \\
& = f(x)+C_1+C_2x
\end{aligned}
$$
so this equality is true for all $x$. We can use the previous two lemmas to subtract the $C_1+C_2x$ term. That is to say, we can set
\[
v(a,b):=-C_1\frac{1}{\int_A^{2A} t\mu_b(t)dt }\mathbf{1}_{b\in [A,2A]}-C_2\frac{a}{\int_A^{2A} \mu_b(t)dt }\mathbf{1}_{b\in [A,2A]}+\frac{1}{\mu_b(t)}\mathbf{1}_{b\in [0,A]}2f''(-ab)
\]
in order to have $\mathbb{E}_{a, b}[v(a, b) \sigma(a x+b)]=f(x) $ for any $\abs{x} \leq A$. In this case, we have \[
\sup_{a,b}\abs{v(a,b)} =\cO\left(\sup_{x\in [-A,A],k=0,1,2}\abs{f^{(k)}(x)}\left(\frac{1}{\int_A^{2A} \mu_b(t)dt}+\frac{1}{\inf_{t\in [0,A]}\mu_b(t)}\right)\right)
\]
\end{proof}
\begin{remark}
When $f$ is a polynomial and $\mu_b(t)$ has a heavy tail, $\sup_{a,b}\abs{v(a,b)}$ will only depend on $A$ polynomially. More concretely,
consider the case $f(z)=\sum_{0\leq i\leq q} c_iz^i$ where $\sup_i\abs{c_i} = \cO(1)$. In this case, we have\[
\sup_{x\in [-A,A],k=0,1,2}\abs{f^{(k)}(x)} = \cO(A^q)
\]
Furthermore, since we have assumed $\mu_b(t) \gtrsim (\abs{t}+1)^{-p}$, we have \[
\left(\frac{1}{\int_A^{2A} \mu_b(t)dt}+\frac{1}{\inf_{t\in [0,A]}\mu_b(t)}\right) = \cO(A^p)
\text{   and   }  \sup_{a,b}\abs{v(a,b)}=\cO(A^{p+q})\]
\end{remark}

\end{document}